\DeclareMathOperator*{\E}{\mathbb{E}}
\let\Pr\relax
\DeclareMathOperator*{\Pr}{\mathbb{P}}
\newcommand{\eps}{\epsilon}
\newcommand{\inprod}[1]{\left\langle #1 \right\rangle}
\newcommand{\R}{\mathbb{R}}
\newcommand{\D}{\mathcal{D}}
\newcommand{\W}{\mathcal{W}}
\newcommand{\Loss}{\mathcal{L}}
\DeclarePairedDelimiter{\norm}{\lVert}{\rVert}
\DeclareMathOperator*{\argmin}{arg\,min}
\newtheorem{theorem}{Theorem}
\newtheorem{corollary}{Corollary}
\newtheorem{lemma}{Lemma}
\newtheorem{definition}{Definition}
\newtheorem{fact}{Fact}
\newtheorem*{app-lemma}{Lemma}
\newtheorem*{app-theorem}{Theorem}
\newtheorem*{app-definition}{Definition}
\title{Sparse Logistic Regression Learns All Discrete Pairwise Graphical Models}
\author{
Shanshan Wu, Sujay Sanghavi, Alexandros G.~Dimakis\\
\texttt{shanshan@utexas.edu}, \texttt{sanghavi@mail.utexas.edu}, \\
\texttt{dimakis@austin.utexas.edu}\\
\\
{\it Department of Electrical and Computer Engineering}\\
{\it University of Texas at Austin}\\
}
\date{}
\begin{document}

\maketitle

\begin{abstract}
We characterize the effectiveness of a classical algorithm for recovering the Markov graph of a general discrete pairwise graphical model from i.i.d. samples. The algorithm is (appropriately regularized) maximum conditional log-likelihood, which involves solving a convex program for each node; for Ising models this is $\ell_1$-constrained logistic regression, while for more general alphabets an $\ell_{2,1}$ group-norm constraint needs to be used. We show that this algorithm can recover any arbitrary discrete pairwise graphical model, and also characterize its sample complexity as a function of model width, alphabet size, edge parameter accuracy, and the number of variables. We show that along every one of these axes, it matches or improves on all existing results and algorithms for this problem. Our analysis applies a sharp generalization error bound for logistic regression when the weight vector has an $\ell_1$ constraint (or $\ell_{2,1}$ constraint) and the sample vector has an $\ell_{\infty}$ constraint (or $\ell_{2, \infty}$ constraint). We also show that the proposed convex programs can be efficiently solved in $\tilde{O}(n^2)$ running time (where $n$ is the number of variables) under the same statistical guarantees. We provide experimental results to support our analysis.
\end{abstract}

\section{Introduction}

Undirected graphical models provide a framework for modeling 
high dimensional distributions with dependent variables and have many applications 
including in computer vision~\citep{MLTW10}, bio-informatics~\citep{MCK12}, and sociology~\citep{EPL09}. 
In this paper we characterize the effectiveness of a natural, and already popular, algorithm for the \emph{structure learning} problem. Structure learning is the task of finding the dependency graph 
of a Markov random field (MRF) given i.i.d. samples; typically one is also interested in finding estimates for the edge weights as well. We consider the structure learning problem in general (non-binary) discrete pairwise graphical models. These are MRFs where the variables take values in a discrete alphabet, but all interactions are pairwise. This includes the Ising model as a special case (which corresponds to a binary alphabet).

The natural and popular algorithm we consider is (appropriately regularized) maximum conditional log-likelihood for finding the neighborhood set of any given node. For the Ising model, this becomes $\ell_1$-constrained logistic regression; more generally for non-binary graphical models the regularizer becomes an $\ell_{2,1}$ norm. We show that this algorithm can recover all discrete pairwise graphical models, and characterize its sample complexity as a function of the parameters of interest: model width, alphabet size, edge parameter accuracy, and the number of variables. We match or improve dependence on each of these parameters, over all existing results for the general alphabet case when no additional assumptions are made on the model (see Table~\ref{table:comps_non_binary}). For the specific case of Ising models, some recent work has better dependence on some parameters (see Table~\ref{table:comps} in Appendix~\ref{sec:comps}). 

We now describe the related work, and then outline our contributions.

\subsection*{Related Work}
In a classic paper, \citet{RWL10} considered the structure learning problem for Ising models. They showed that $\ell_1$-regularized logistic regression provably recovers the correct dependency graph with a very small number of samples by solving a convex program for each variable. This algorithm was later generalized to multi-class logistic regression with group-sparse regularization, which can learn MRFs with higher-order interactions and non-binary variables~\citep{JRVS11}. A well-known limitation of~\citep{RWL10, JRVS11} is that their theoretical guarantees only work for a restricted class of models. Specifically, they require that the underlying learned model satisfies technical~\textit{incoherence} assumptions, that are difficult to validate or check.

{\renewcommand{\arraystretch}{1.1}%
\begin{table} [htbp]
\begin{center}
 \begin{tabular}{|p{3.1cm}|p{5.5cm}|p{5.0cm}|} 
 \hline
Paper  & Assumptions & Sample complexity ($N$) \\
\hline
\multirow{4}{3.1cm}{Greedy algorithm~\citep{HKM17}} & 1. Alphabet size $k \ge 2$ & \multirow{4}{5.0cm}{$O(\exp(\frac{k^{O(d)}\exp(O(d^2\lambda))}{\eta^{O(1)}})\ln(\frac{nk}{\rho}))$}\\
    & 2. Model width $\le \lambda$  & \\
	& 3. Degree $\le d$ &\\
	& 4. Minimum edge weight $\ge \eta>0$ &\\
	& 5. Probability of success $\ge 1-\rho$ &\\
 \hline
 \multirow{4}{3.1cm}{Sparsitron~\citep{KM17}} & 1. Alphabet size $k \ge 2$ & \multirow{4}{5.0cm}{$O(\frac{\lambda^2k^5\exp(14\lambda)}{\eta^4}\ln(\frac{nk}{\rho\eta}))$} \\
    &2. Model width $\le \lambda$  &  \\
	&3. Minimum edge weight $\ge\eta>0$ &\\
	&4. Probability of success $\ge 1-\rho$ &\\
\hline
	\multirow{3}{3.1cm}{$\ell_{2,1}$-constrained logistic regression [{\bf this paper}]} & 1. Alphabet size $k \ge 2$  & \multirow{3}{5.0cm}{$O(\frac{\lambda^2k^4\exp(14\lambda)}{\eta^4}\ln(\frac{nk}{\rho}))$}\\
	&2. Model width $\le \lambda$ & \\
	&3. Minimum edge weight $\ge\eta>0$ &\\ 
	&4. Probability of success $\ge 1-\rho$ &\\
\hline 
\end{tabular}
\end{center}

\caption[]{Sample complexity comparison for different graph recovery algorithms. The pairwise graphical model has alphabet size $k$. For $k=2$ (i.e., Ising models), our algorithm reduces to the $\ell_{1}$-constrained logistic regression (see Table~\ref{table:comps} in Appendix~\ref{sec:comps} for related work on learning Ising models). Our sample complexity has a better dependency on the alphabet size ($\tilde{O}(k^4)$ versus $\tilde{O}(k^5)$) than that in~\citep{KM17}\footnotemark.}
\label{table:comps_non_binary}

\end{table}}
\footnotetext{Theorem 8.4 in~\citep{KM17} has a typo. The correct dependence should be $k^5$ instead of $k^3$. In Section 8 of~\citep{KM17}, after re-writing the conditional distribution as a sigmoid function, the weight vector $w$ is a vector of length $(n-1)k+1$. Their derivation uses an incorrect bound $\norm{w}_1 \le 2\lambda$, while it should be $\norm{w}_1 \le 2k\lambda$. This gives rise to an additional $k^2$ factor on the final sample complexity.}

A large amount of recent work has since proposed various algorithms to obtain provable learning results for general graphical models without requiring the incoherence assumptions. We now describe the (most related part of the extensive) related work, followed by our results and comparisons (see Table~\ref{table:comps_non_binary}). For a discrete pairwise graphical model, let $n$ be the number of variables and $k$ be the alphabet size; define the model width $\lambda$ as the maximum neighborhood weight (see Definition~\ref{ising_model} and~\ref{pairwise_model} for the precise definition). 
For structure learning algorithms, a popular approach is to focus on the sub-problem of finding the neighborhood of a single node. Once this is correctly learned, the overall graph structure is a simple union bound. Indeed all the papers we now discuss are of this type. As shown in Table~\ref{table:comps_non_binary}, \citet{HKM17} proposed a greedy algorithm to learn pairwise (and higher-order) MRFs with general alphabet. Their algorithm generalizes the approach of \citet{Bre15} for learning Ising models. The sample complexity in~\citep{HKM17} grows logarithmically in $n$, but {\em doubly} exponentially in the width $\lambda$. Note that an information-theoretic lower bound for learning Ising models~\citep{SW12} only has a {\em single-exponential} dependence on $\lambda$. \citet{KM17} provided a different algorithmic and theoretical approach by setting this up as an online learning problem and leveraging results from the Hedge algorithm therein. Their algorithm Sparsitron achieves single-exponential dependence on the width $\lambda$. 

\subsection*{Our Contributions}

\begin{itemize}
\item Our main result: We show that the $\ell_{2,1}$-constrained\footnote{It may be possible to prove a similar result for the {\em regularized} version of the optimization problem using techniques from~\citep{NRWY12}. One needs to prove that the objective function satisfies restricted strong convexity (RSC) when the samples are from a graphical model distribution~\citep{VMLC16, LVMC18}. It is interesting to see if the proof presented in this paper is related to the RSC condition.} logistic regression can be used to estimate the edge weights of a discrete pairwise graphical model from i.i.d. samples (see Theorem~\ref{pairwise_theorem}). For the special case of Ising models (see Theorem~\ref{ising_theorem}), this reduces to an $\ell_{1}$-constrained logistic regression. We make no incoherence assumption on the graphical models. As shown in Table~\ref{table:comps_non_binary}, our sample complexity scales as $\tilde{O}(k^4)$, which improves the previous best result with $\tilde{O}(k^5)$ dependency\footnote{This improvement essentially comes from the fact that we are using an $\ell_{2,1}$ norm constraint instead of an $\ell_{1}$ norm constraint for learning general (i.e., non-binary) pairwise graphical models (see our remark after Theorem~\ref{pairwise_theorem}). The Sparsitron algorithm proposed by~\citet{KM17} learns a $\ell_{1}$-constrained generalized linear model. This $\ell_{1}$-constraint gives rise to a $k^5$ dependency for learning non-binary pairwise graphical models.}. The analysis applies a sharp generalization error bound for logistic regression when the weight vector has an $\ell_{2,1}$ constraint (or $\ell_1$ constraint) and the sample vector has an $\ell_{2, \infty}$ constraint (or $\ell_{\infty}$ constraint) (see Lemma~\ref{gen_bound} and Lemma~\ref{l21_gen_bound} in Appendix~\ref{sec:proof_kl_lemma}). Our key insight is that a generalization bound can be used to control the squared distance between the predicted and true logistic functions (see Lemma~\ref{KL_lemma} and Lemma~\ref{pairwise_KL_lemma} in Section~\ref{sec_lemmas}), which then implies an $\ell_{\infty}$ norm bound between the weight vectors (see Lemma~\ref{infty_norm} and Lemma~\ref{pairwise_infty_norm}).

\item We show that the proposed algorithms can run in $\tilde{O}(n^2)$ time without affecting the statistical guarantees (see Section~\ref{sec:runtime}). Note that $\tilde{O}(n^2)$ is an efficient runtime for graph recovery over $n$ nodes. Previous algorithms in~\citep{HKM17, KM17} also require $\tilde{O}(n^2)$ runtime for structure learning of pairwise graphical models. 

\item We construct examples that violate the incoherence condition proposed in~\citep{RWL10} (see Figure~\ref{incoherence_example}). We then run $\ell_1$-constrained logistic regression and show that it can recover the graph structure as long as given enough samples. This verifies our analysis and shows that our conditions for graph recovery are weaker than those in~\citep{RWL10}.

\item We empirically compare the proposed algorithm with the Sparsitron algorithm in~\citep{KM17} over different alphabet sizes, and show that our algorithm needs fewer samples for graph recovery (see Figure~\ref{non_binary_comp}).
\end{itemize}

{\bf Notation.} 
We use $[n]$ to denote the set $\{1,2,\cdots, n\}$. For a vector $x\in\R^n$, we use $x_i$ or $x(i)$ to denote its $i$-th coordinate. The $\ell_p$ norm of a vector is defined as $\norm{x}_p=(\sum_i |x_i|^p)^{1/p}$. We use $x_{-i}\in\R^{n-1}$ to denote the vector after deleting the $i$-th coordinate. For a matrix $A\in \R^{n\times k}$, we use $A_{ij}$ or $A(i,j)$ to denote its $(i,j)$-th entry. We use $A(i, :)\in\R^k$ and $A(:,j)\in\R^n$ to the denote the $i$-th row vector and the $j$-th column vector. The $\ell_{p,q}$ norm of a matrix $A\in \R^{n\times k}$ is defined as $\norm{A}_{p,q} = \norm{[\norm{A(1,:)}_p, ..., \norm{A(n,:)}_p]}_q$. We define $\norm{A}_{\infty}=\max_{ij}|A(i,j)|$ throughout the paper (note that this definition is different from the induced matrix norm). We use $\sigma(z) = 1/(1+e^{-z})$ to represent the sigmoid function. We use $\inprod{\cdot, \cdot}$ to represent the dot product between two vectors $\inprod{x,y}=\sum_i x_i y_i$ or two matrices $\inprod{A,B}=\sum_{ij}A(i,j)B(i,j)$.

\section{Main results} \label{sec_main}

We start with the special case of binary variables (i.e., Ising models), and then move to the general case with non-binary variables.

\subsection{Learning Ising models}

We first give a definition of an Ising model distribution.
\begin{definition}
Let $A\in\R^{n\times n}$ be a symmetric weight matrix with $A_{ii} = 0$ for $i\in[n]$. Let $\theta\in\R^n$ be a mean-field vector. The $n$-variable Ising model is a distribution $\D(A, \theta)$ on $\{-1,1\}^n$ that satisfies
\begin{equation}
\Pr_{Z\sim\D(A, \theta)}[Z = z]\propto \exp(\sum_{1\le i<j\le n} A_{ij}z_i z_j + \sum_{i\in[n]}\theta_iz_i).
\end{equation}
The dependency graph of $\D(A,\theta)$ is an undirected graph $G=(V,E)$, with vertices $V = [n]$ and edges $E=\{(i,j): A_{ij}\neq 0\}$. 
The width of $\D(A,\theta)$ is defined as
\begin{equation}
\lambda(A,\theta) = \max_{i\in[n]}(\sum_{j\in[n]}|A_{ij}| + |\theta_i|). 
\end{equation}
Let $\eta(A,\theta)$ be the minimum edge weight in absolute value, i.e., $\eta(A, \theta) = \min_{i,j\in[n]: A_{ij}\neq 0} |A_{ij}|$.
\label{ising_model}
\end{definition}

One property of an Ising model distribution is that the conditional distribution of any variable given the rest variables follows a logistic function. Let $\sigma(z) = 1/(1+e^{-z})$ be the sigmoid function.

\begin{fact}
Let $Z\sim \D(A, \theta)$ and $Z\in \{-1,1\}^n$. For any $i\in[n]$, the conditional probability of the $i$-th variable $Z_i\in \{-1,1\}$ given the states of all other variables $Z_{-i} \in \{-1,1\}^{n-1}$ is
\begin{equation}
\Pr[Z_i = 1 | Z_{-i} = x] = \frac{\exp(\sum_{j\neq i} A_{ij}x_j + \theta_i)}{\exp(\sum_{j\neq i} A_{ij}x_j + \theta_i)+\exp(-\sum_{j\neq i} A_{ij}x_j - \theta_i)} = \sigma(\inprod{w,x'}),
\end{equation}
where $x' = [x, 1]\in\{-1,1\}^n$, and $w = 2[A_{i1},\cdots, A_{i(i-1)}, A_{i(i+1)},\cdots, A_{in},\theta_i]\in\R^{n}$. Moreover, $w$ satisfies $\norm{w}_1\le 2\lambda(A,\theta)$, where $\lambda(A, \theta)$ is the model width defined in Definition~\ref{ising_model}.
\label{logistic_dist}
\end{fact}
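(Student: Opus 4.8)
The plan is to compute the conditional probability directly from the joint distribution using the definition of conditioning, exploiting the fact that the (unknown) normalizing constant of $\D(A,\theta)$ cancels in the ratio. Concretely, for a fixed $i$ and a fixed assignment $x\in\{-1,1\}^{n-1}$ to $Z_{-i}$, I would write
\[
\Pr[Z_i=1\mid Z_{-i}=x]=\frac{\Pr[Z_i=1,Z_{-i}=x]}{\Pr[Z_i=1,Z_{-i}=x]+\Pr[Z_i=-1,Z_{-i}=x]},
\]
and substitute the unnormalized weights $\exp(\sum_{1\le p<q\le n}A_{pq}z_pz_q+\sum_p\theta_pz_p)$ for each of the three joint probabilities. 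The normalizing constant appears identically in all three terms and divides out, so only the unnormalized exponentials matter.

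The key step is to isolate the dependence on $z_i$ in the exponent. Every term in $\sum_{p<q}A_{pq}z_pz_q$ that does not involve index $i$, together with every mean-field term $\theta_pz_p$ with $p\neq i$, is identical in the numerator and the denominator (since those coordinates are frozen to $x$), and therefore cancels. Collecting the surviving terms and using the symmetry of $A$ (each unordered pair $\{i,j\}$ is counted once in $\sum_{p<q}$, contributing $A_{ij}z_iz_j$), the part of the exponent that depends on $z_i$ is exactly $z_i\big(\sum_{j\neq i}A_{ij}x_j+\theta_i\big)$. Writing $u:=\sum_{j\neq i}A_{ij}x_j+\theta_i$, the numerator is proportional to $e^{u}$ and the competing $z_i=-1$ term is proportional to $e^{-u}$.

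Hence the conditional probability reduces to $e^{u}/(e^{u}+e^{-u})=1/(1+e^{-2u})=\sigma(2u)$. I would then observe that $2u=\sum_{j\neq i}(2A_{ij})x_j+2\theta_i=\inprod{w,x'}$ with $x'=[x,1]$ and $w=2[A_{i1},\dots,A_{i(i-1)},A_{i(i+1)},\dots,A_{in},\theta_i]$, matching the claimed vector. The only subtlety here is tracking the factor of $2$ that emerges from rewriting $e^{u}/(e^{u}+e^{-u})$ as a sigmoid, which is precisely why $w$ carries a leading coefficient of $2$.

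Finally, for the norm bound I would compute $\norm{w}_1=2\big(\sum_{j\neq i}|A_{ij}|+|\theta_i|\big)$, and use $A_{ii}=0$ to replace $\sum_{j\neq i}|A_{ij}|$ by $\sum_{j\in[n]}|A_{ij}|$, giving $\norm{w}_1=2(\sum_{j\in[n]}|A_{ij}|+|\theta_i|)\le 2\max_{i\in[n]}(\sum_{j\in[n]}|A_{ij}|+|\theta_i|)=2\lambda(A,\theta)$ by Definition~\ref{ising_model}. This part is routine. Overall there is no real obstacle; the only place to be careful is the symmetry and factor-of-two bookkeeping in the exponent, which determines the constant in front of $w$ and hence the constant in the width bound.
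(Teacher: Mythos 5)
Your proof is correct and follows the same route implicit in the paper, which states this as a standard fact whose displayed middle expression $\exp(u)/(\exp(u)+\exp(-u))$ with $u=\sum_{j\neq i}A_{ij}x_j+\theta_i$ is exactly the intermediate quantity you derive by cancelling the partition function and the $z_i$-independent terms. Your bookkeeping of the symmetry of $A$, the factor of $2$ from $e^{u}/(e^{u}+e^{-u})=\sigma(2u)$, and the use of $A_{ii}=0$ in the width bound all match what the paper intends.
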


Following Fact~\ref{logistic_dist}, the natural approach to estimating the edge weights $A_{ij}$ is to solve a logistic regression problem for each variable. 
For ease of notation, let us focus on the $n$-th variable (the algorithm directly applies to the rest variables). Given $N$ i.i.d. samples $\{z^1, \cdots, z^N\}$, where $z^i\in\{-1,1\}^n$ from an Ising model $\D(A,\theta)$, we first transform the samples into $\{(x^i, y^i)\}_{i=1}^N$, where $x^i = [z^i_{1},\cdots, z^i_{n-1},1]\in\{-1,1\}^n$ and $y^i = z^i_{n}\in\{-1,1\}$. By Fact~\ref{logistic_dist}, we know that $\Pr[y^i=1|x^i=x] = \sigma(\inprod{w^*,x})$ where $w^* = 2[A_{n1},\cdots, A_{n(n-1)},\theta_n]\in\R^{n}$ satisfies $\norm{w^*}_1\le 2\lambda(A,\theta)$. Suppose that $\lambda(A,\theta)\le \lambda$, we are then interested in recovering $w^*$ by solving the following $\ell_1$-constrained logistic regression problem
\begin{equation}
\hat{w} \in \argmin_{w\in\R^n} \frac{1}{N} \sum_{i=1}^N \ell(y^i\inprod{w,x^i}) \quad
\text{\quad s.t. }\norm{w}_1\le 2\lambda, \label{ERM}
\end{equation}
where $\ell:\R\to\R$ is the loss function
\begin{equation}
\ell(y^i\inprod{w,x^i}) = \ln(1+e^{-y^i\inprod{w,x^i}}) = 
\begin{cases}
-\ln\sigma(\inprod{w,x^i}), & \text{ if } y^i=1 \\
-\ln(1-\sigma(\inprod{w,x^i})), & \text{ if } y^i=-1 
\end{cases} \label{logistic_loss}
\end{equation}
Eq. (\ref{logistic_loss}) is essentially the negative log-likelihood of observing $y^i$ given $x^i$ at the current $w$. 

Let $\hat{w}$ be a minimizer of (\ref{ERM}). It is worth noting that in the high-dimensional regime ($N<n$), $\hat{w}$ may not be unique. In this case, we will show that \emph{any} one of them would work. After solving the convex problem in (\ref{ERM}), the edge weight is estimated as $\hat{A}_{nj} = \hat{w}_j/2$. 

The pseudocode of the above algorithm is given in Algorithm~\ref{ising_learning_algo}. Solving the $\ell_1$-constrained logistic regression problem will give us an estimator of the true edge weight. We then form the graph by keeping the edge that has estimated weight larger than $\eta/2$ (in absolute value).

\begin{algorithm2e}
\caption{Learning an Ising model via $\ell_1$-constrained logistic regression}
\label{ising_learning_algo}
\DontPrintSemicolon
\LinesNumbered
\KwIn{$N$ i.i.d. samples $\{z^1, \cdots, z^N\}$, where $z^m\in\{-1,1\}^n$ for $m\in [N]$; an upper bound on $\lambda(A,\theta)\le \lambda$; a lower bound on $\eta(A, \theta)\ge\eta>0$.}
\KwOut{$\hat{A}\in \R^{n\times n}$, and an undirected graph $\hat{G}$ on $n$ nodes.}
\For{$i\leftarrow 1$ \KwTo $n$}{
$\forall m\in [N]$, $x^m \leftarrow [z^m_{-i},1]$, $y^m \leftarrow z^m_{i}$\;
$\hat{w} \leftarrow \argmin_{w\in\R^n} \frac{1}{N} \sum_{m=1}^N \ln(1+e^{-y^m\inprod{w,x^m}}) \text{\quad s.t. }\norm{w}_1\le 2\lambda$\;
$\forall j\in [n]$, $\hat{A}_{ij} \leftarrow  \hat{w}_{\tilde{j}}/2$, where $\tilde{j}=j$ if $j<i$ and $\tilde{j}=j-1$ if $j>i$\;
}
Form an undirected graph $\hat{G}$ on $n$ nodes with edges $\{(i,j): |\hat{A}_{ij}|\ge\eta/2, i<j\}$.
\end{algorithm2e}

\begin{theorem}
Let $\D(A, \theta)$ be an unknown $n$-variable Ising model distribution with dependency graph $G$. Suppose that the $\D(A, \theta)$ has width $\lambda(A, \theta)\le \lambda$. Given $\rho\in(0,1)$ and $\eps>0$, if the number of i.i.d. samples satisfies $N = O(\lambda^2\exp(12\lambda)\ln(n/\rho)/\eps^4)$,
then with probability at least $1-\rho$, Algorithm~\ref{ising_learning_algo} produces $\hat{A}$ that satisfies
\begin{equation}
\max_{i,j\in[n]} |A_{ij} - \hat{A}_{ij}| \le \eps. \label{close_A}
\end{equation}
\label{ising_theorem}
\end{theorem}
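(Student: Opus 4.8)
Through the correspondence $\hat A_{nj} = \hat w_j/2$ and $A_{nj} = w^*_j/2$ (focusing on node $n$ as in the excerpt), the target \eqref{close_A} is equivalent to controlling the edge coordinates of $\hat w - w^*$ in $\ell_\infty$: it suffices to show $|\hat w_j - w^*_j| \le 2\eps$ for every coordinate $j$ corresponding to an edge (the bias coordinate carrying $\theta_n$ need not be recovered), and then union-bound over the $n$ per-node regressions. The plan is to establish this through a three-step chain: (i) an \emph{excess-risk} bound showing the population logistic risk of $\hat w$ is close to that of $w^*$; (ii) a conversion of this excess risk into an $L_2(\D)$ bound on the predictor gap $\inprod{\hat w - w^*, x}$; and (iii) an anti-concentration argument specific to the Ising distribution that turns the $L_2(\D)$ bound into the coordinatewise $\ell_\infty$ bound.

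For step (i): since $w^*$ is feasible for \eqref{ERM} and $\hat w$ is a minimizer, the empirical risk satisfies $\hat\Loss(\hat w) \le \hat\Loss(w^*)$, so the excess population risk $\Loss(\hat w) - \Loss(w^*)$ is at most twice the uniform deviation $\sup_{\norm{w}_1 \le 2\lambda} |\Loss(w) - \hat\Loss(w)|$. Because $\norm{x}_\infty \le 1$, the class $\{x \mapsto \inprod{w,x} : \norm{w}_1 \le 2\lambda\}$ has Rademacher complexity $O(\lambda\sqrt{\log n/N})$ by $\ell_1/\ell_\infty$ duality; the logistic loss is $1$-Lipschitz (Talagrand contraction) and bounded by $O(\lambda)$ on the feasible set (McDiarmid), giving $\Loss(\hat w) - \Loss(w^*) = O(\lambda\sqrt{\log(n/\rho)/N})$ with probability $1-\rho$. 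This is the content of Lemma~\ref{gen_bound}.

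Step (ii) rests on $w^*$ being the Bayes predictor: under $y \mid x \sim \mathrm{Bernoulli}(\sigma(\inprod{w^*,x}))$, the pointwise excess logistic loss equals the Bernoulli divergence, so $\Loss(\hat w) - \Loss(w^*) = \E_x[\mathrm{KL}(\sigma(\inprod{w^*,x}) \,\|\, \sigma(\inprod{\hat w, x}))]$. Pinsker lower-bounds this by $2\,\E_x[(\sigma(\inprod{w^*,x}) - \sigma(\inprod{\hat w,x}))^2]$ (Lemma~\ref{KL_lemma}). Since both arguments lie in $[-2\lambda, 2\lambda]$, the mean value theorem together with $\sigma'(z) \ge e^{-2\lambda}/4$ on this interval yields $\E_x[\inprod{\hat w - w^*, x}^2] \le 16\,e^{4\lambda}\,\E_x[(\sigma(\inprod{w^*,x}) - \sigma(\inprod{\hat w,x}))^2]$, so the mean-squared predictor gap is $O(e^{4\lambda}\lambda\sqrt{\log(n/\rho)/N})$.

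The crux, and the step I expect to be the main obstacle, is (iii): passing from the aggregate $\E_x[\inprod{u,x}^2]$, with $u = \hat w - w^*$, to a bound on each $|u_j|$ with no incoherence assumption. Here I would condition on $x_{-j}$ for a fixed edge coordinate $j$: this fixes every term of $\inprod{u,x}$ except $u_j x_j$, so $\E_x[\inprod{u,x}^2] \ge \E_{x_{-j}}[\mathrm{Var}(u_j x_j \mid x_{-j})] = u_j^2\,\E_{x_{-j}}[4p(1-p)]$, where $p = \Pr[x_j=1 \mid x_{-j}]$. By Fact~\ref{logistic_dist}, $p$ is the sigmoid of a quantity bounded in magnitude by $2\lambda$, hence $p(1-p) = \Omega(e^{-2\lambda})$ uniformly, giving $\E_x[\inprod{u,x}^2] = \Omega(e^{-2\lambda})\,u_j^2$. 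This is the heart of Lemma~\ref{infty_norm}, and it is exactly what lets the argument dispense with the incoherence conditions of~\citep{RWL10}: a width-$\lambda$ graphical-model distribution is automatically non-degenerate in every coordinate direction, with a quantitative constant set by $\lambda$. Chaining the three bounds gives $u_j^2 = O(e^{6\lambda}\lambda\sqrt{\log(n/\rho)/N})$; forcing the right-hand side below $4\eps^2$ reproduces the stated $N = O(\lambda^2 \exp(12\lambda)\ln(n/\rho)/\eps^4)$, after which the $n$-fold union bound over nodes and the thresholding at $\eta/2$ finish the proof.
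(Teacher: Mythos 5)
Your proposal is correct, and your steps (i)--(ii) coincide with the paper's proof: the same $\ell_1/\ell_\infty$ Rademacher generalization bound (Lemma~\ref{gen_bound}), the identity excess risk $=$ expected KL divergence (Lemma~\ref{logistic_kl}), and Pinsker (Lemma~\ref{pinsker}), yielding $\E_x[(\sigma(\inprod{w^*,x})-\sigma(\inprod{\hat w,x}))^2]\le\gamma$ exactly as in Lemma~\ref{KL_lemma}. Where you genuinely diverge is step (iii). The paper invokes Lemma~\ref{infty_norm} (Claim 8.6 of~\citep{KM17}): it pairs each sample with its coordinate flips $X^{i,+},X^{i,-}$, uses $\delta$-unbiasedness with $\delta=e^{-2\lambda}/2$ (Lemmas~\ref{delta_unbias} and~\ref{marginal_unbiased}) to keep both flips conditionally likely, and lower-bounds the sigmoid gap pointwise via $|\sigma(a)-\sigma(b)|\ge e^{-|a|-3}\min(1,|a-b|)$ together with $\min(1,a^2)+\min(1,b^2)\ge\min(1,(a-b)^2/2)$. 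You instead linearize first --- mean value theorem with $\sigma'(z)\ge e^{-2\lambda}/4$ on $[-2\lambda,2\lambda]$ to pass to $\E_x[\inprod{u,x}^2]$ --- and then apply a conditional-variance bound $\E_x[\inprod{u,x}^2]\ge u_j^2\,\E_{x_{-j}}[4p(1-p)]=\Omega(e^{-2\lambda})\,u_j^2$. Both arguments exploit the same conditional non-degeneracy and produce the same $\lambda^2 e^{12\lambda}\ln(n/\rho)/\eps^4$ rate; yours is more elementary (no truncated sigmoid inequality, no flip pairing), but it buys this by using that \emph{both} $w^*$ and $\hat w$ satisfy the $\ell_1$ constraint, so both logits lie in $[-2\lambda,2\lambda]$ and the MVT applies. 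The paper's lemma is deliberately stated with no norm bound on the second vector $u$ (only on $w$), which is why it needs the $\min(1,\cdot)$ truncation; in the present application $\hat w$ is feasible, so your extra assumption is available and that generality is not needed. Your version as stated is also binary-specific (it uses $\mathrm{Var}(x_j\mid x_{-j})=4p(1-p)$), whereas the paper's lemma has a direct non-binary analogue (Lemma~\ref{pairwise_infty_norm}) used for Theorem~\ref{pairwise_theorem}.

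One loose end to tighten: your $p=\Pr[x_j=1\mid x_{-j}]$ is a conditional probability under the \emph{marginal} distribution of $Z_{-n}$ (with $Z_n$, i.e.\ the label, integrated out), so Fact~\ref{logistic_dist} does not literally apply to it. But by the tower property $p$ is an average of full conditionals, each lying in $[\sigma(-2\lambda),\sigma(2\lambda)]$, hence $p(1-p)\ge e^{-2\lambda}/4$ still holds; this is precisely what the paper packages as Lemma~\ref{marginal_unbiased}. With that one-line fix, your chain of bounds and the final union bound over the $n$ nodal regressions are sound.
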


\begin{corollary}
In the setup of Theorem~\ref{ising_theorem}, suppose that the Ising model distribution $\D(A, \theta)$ has minimum edge weight $\eta(A, \theta)\ge \eta>0$. If we set $\eps<\eta/2$ in (\ref{close_A}), which corresponds to sample complexity $N = O(\lambda^2\exp(12\lambda)\ln(n/\rho)/\eta^4)$,
then with probability at least $1-\rho$, Algorithm~\ref{ising_learning_algo} recovers the dependency graph, i.e., $\hat{G}=G$.
\end{corollary}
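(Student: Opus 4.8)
The plan is to derive the corollary directly from the estimation guarantee of Theorem~\ref{ising_theorem} together with the minimum-edge-weight assumption, via a clean thresholding argument. First I would apply Theorem~\ref{ising_theorem} with the accuracy parameter set to $\eps=\eta/2$ (or any fixed value strictly below $\eta/2$). Substituting $\eps\asymp\eta$ into the $1/\eps^4$ factor of the theorem's sample complexity immediately yields the claimed bound $N=O(\lambda^2\exp(12\lambda)\ln(n/\rho)/\eta^4)$. The theorem then guarantees that, with probability at least $1-\rho$, the estimate $\hat{A}$ produced by Algorithm~\ref{ising_learning_algo} satisfies $\max_{i,j\in[n]}|A_{ij}-\hat{A}_{ij}|\le\eps<\eta/2$. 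I would condition on this event for the remainder of the argument.

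Next I would verify that the thresholding step in the last line of Algorithm~\ref{ising_learning_algo}, which keeps an edge $(i,j)$ with $i<j$ precisely when $|\hat{A}_{ij}|\ge\eta/2$, recovers the true edge set $E=\{(i,j):A_{ij}\neq0\}$. This splits into two cases. For a true edge $(i,j)\in E$, the minimum-edge-weight assumption gives $|A_{ij}|\ge\eta$, so by the reverse triangle inequality $|\hat{A}_{ij}|\ge|A_{ij}|-|A_{ij}-\hat{A}_{ij}|>\eta-\eta/2=\eta/2$, and the edge is retained. For a non-edge $(i,j)\notin E$ we have $A_{ij}=0$, whence $|\hat{A}_{ij}|=|\hat{A}_{ij}-A_{ij}|<\eta/2$, and the edge is correctly discarded. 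Since these two cases exhaust all pairs $i<j$, the recovered graph $\hat{G}$ coincides with $G$, completing the argument.

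I would also note that the per-entry error bound supplied by Theorem~\ref{ising_theorem} is a uniform (max over all $i,j$) bound, so no additional union bound is needed beyond the one already absorbed into the theorem's $\ln(n/\rho)$ factor. Moreover, the possible asymmetry of $\hat{A}$ (each $\hat{A}_{ij}$ being read off from node $i$'s own logistic regression) is irrelevant here, because the graph-formation step only inspects the entries with $i<j$, each of which is controlled by the uniform bound.

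There is no deep obstacle in this corollary; the only point requiring care is the bookkeeping of strict versus non-strict inequalities. Choosing $\eps$ strictly below $\eta/2$ ensures that the non-edge estimates fall strictly under the threshold while the true-edge estimates exceed it, so the single threshold value $\eta/2$ separates the two populations exactly and recovery is exact rather than merely approximate.
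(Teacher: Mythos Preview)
Your proposal is correct and follows exactly the approach the paper intends: the corollary is stated in the paper without an explicit proof, since it is immediate from Theorem~\ref{ising_theorem} plus the thresholding rule in the last line of Algorithm~\ref{ising_learning_algo}. Your reverse-triangle-inequality argument for the two cases (true edge vs.\ non-edge) and the remark on strict inequalities are precisely the intended reasoning.
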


\subsection{Learning pairwise graphical models over general alphabet}

\begin{definition}
Let $k$ be the alphabet size. Let $\W = \{W_{ij}\in\R^{k\times k}: i\neq j \in [n]\}$ be a set of weight matrices satisfying $W_{ij} = W_{ji}^T$. Without loss of generality, we assume that every row (and column) vector of $W_{ij}$ has zero mean. Let $\Theta=\{\theta_i\in\R^k: i\in[n]\}$ be a set of external field vectors. Then the $n$-variable pairwise graphical model $\D(\W, \Theta)$ is a distribution over $[k]^n$ where 
\begin{equation}
\Pr_{Z\sim \D(\W, \Theta)}[Z=z] \propto \exp(\sum_{1\le i<j \le n} W_{ij}(z_i, z_j) + \sum_{i\in[n]} \theta_i(z_i)).
\end{equation}
The dependency graph of $\D(\W, \Theta)$ is an undirected graph $G=(V,E)$, with vertices $V = [n]$ and edges $E=\{(i,j): W_{ij}\neq 0\}$. The width of $\D(\W, \Theta)$ is defined as
\begin{equation}
\lambda(\W,\Theta) = \max_{i, a}(\sum_{j\neq i} \max_{b\in[k]}|W_{ij}(a, b)| + |\theta_i(a)|). 
\end{equation}
We define
$\eta(\W,\Theta) = \min_{(i,j)\in E} \max_{a, b} |W_{ij}(a, b)|$.
\label{pairwise_model}
\end{definition}

{\bf Remark.} The assumption that $W_{ij}$ has centered rows and columns (i.e., $\sum_b W_{ij}(a, b) =0$ and $\sum_a W_{ij}(a, b) =0$ for any $a,b\in [k]$) is without loss of generality (see Fact 8.2 in~\citep{KM17}). If the $a$-th row of $W_{ij}$ is not centered, i.e., $\sum_b W_{ij}(a, b) \neq 0$, we can define $W'_{ij}(a, b) = W_{ij}(a,b) - \sum_b W_{ij}(a, b)/k$ and $\theta'_i(a)=\theta_i(a)+\sum_b W_{ij}(a, b)/k$, and notice that $\D(\W, \Theta)=\D(\W', \Theta')$. Because the sets of matrices with centered rows and columns (i.e., $\{M\in\R^{k\times k}: \sum_b M(a, b) =0, \forall a\in[k]\}$ and $\{M\in\R^{k \times k}: \sum_a M(a, b) =0, \forall b\in[k]\}$) are two linear subspaces, alternatively projecting $W_{ij}$ onto the two sets will converge to the intersection of the two subspaces~\citep{Von49}. As a result, the condition of centered rows and columns is necessary for recovering the underlying weight matrices, since otherwise different parameters can give the same distribution. Note that in the case of $k=2$, Definition~\ref{pairwise_model} is the same as Definition~\ref{ising_model} for Ising models. To see their connection, simply define $W_{ij}\in\R^{2\times 2}$ as follows: $W_{ij}(1,1) = W_{ij}(2,2) = A_{ij}$, $W_{ij}(1,2) = W_{ij}(2,1) = -A_{ij}$.

For a pairwise graphical model distribution $\D(\W, \Theta)$, the conditional distribution of any variable (when restricted to a pair of values) given all the other variables follows a logistic function, as shown in Fact~\ref{pairwise_logistic_dist}. This is analogous to Fact~\ref{logistic_dist} for the Ising model distribution.

\begin{fact}\label{pairwise_logistic_dist}
Let $Z\sim \D(\W, \Theta)$ and $Z\in [k]^n$. For any $i\in[n]$, any $\alpha \neq\beta\in [k]$, and any $x\in[k]^{n-1}$,
\begin{equation}
\Pr[Z_i = \alpha | Z_i \in \{\alpha, \beta\}, Z_{-i} = x] = \sigma(\sum_{j\neq i} (W_{ij}(\alpha, x_j)-W_{ij}(\beta, x_j)) + \theta_i(\alpha)- \theta_i(\beta)).
\end{equation}
\end{fact}

Given $N$ i.i.d. samples $\{z^1, \cdots, z^N\}$, where $z^m\in [k]^n\sim \D(\W, \Theta)$ for $m\in[N]$, the goal is to estimate matrices $W_{ij}$ for all $i\neq j\in [n]$. For ease of notation and without loss of generality, let us consider the $n$-th variable. Now the goal is to estimate matrices $W_{nj}$ for all $j\in [n-1]$. 

To use Fact~\ref{pairwise_logistic_dist}, fix a pair of values $\alpha\neq\beta\in [k]$, let $S$ be the set of samples satisfying $z_n\in \{\alpha, \beta\}$. We next transform the samples in $S$ to $\{(x^t, y^t)\}_{t=1}^{|S|}$ as follows: $x^t = \text{OneHotEncode}([z_{-n}^t, 1]) \in \{0,1\}^{n\times k}$, $y^t = 1$ if $z_n^t = \alpha$, and $y^t = -1$ if $z_n^t = \beta$. Here $\text{OneHotEncode}(\cdot):[k]^n\to \{0,1\}^{n\times k}$ is a function that maps a value $t \in [k]$ to the standard basis vector $e_t\in\{0,1\}^k$, where $e_t$ has a single 1 at the $t$-th entry. For each sample $(x,y)$ in the set $S$, Fact~\ref{pairwise_logistic_dist} implies that $\Pr[y=1|x] = \sigma(\inprod{w^*, x})$, where $w^*\in\R^{n \times k}$ satisfies 
\begin{equation}
w^*(j, :) = W_{nj}(\alpha, :) - W_{nj}(\beta, :), \forall j \in [n-1]; \quad w^*(n,:) = [\theta_n(\alpha) - \theta_n(\beta), 0, ..., 0]. \label{w_mat_def}
\end{equation}
Suppose that the width of $\D(\W, \Theta)$ satisfies $\lambda(\W, \Theta)\le \lambda$, then $w^*$ defined in (\ref{w_mat_def}) satisfies $\norm{w^*}_{2,1} \le 2\lambda\sqrt{k}$, where $\norm{w^*}_{2,1}:= \sum_j \norm{w^*(j,:)}_2$. We can now form an $\ell_{2,1}$-constrained logistic regression over the samples in $S$: 
\begin{equation}
w^{\alpha, \beta} \in \argmin_{w\in\R^{n\times k}} \frac{1}{|S|} \sum_{t=1}^{|S|} \ln(1+ e^{-y^t\inprod{w,x^t}}) \quad \text{\quad s.t. }\norm{w}_{2,1}\le 2\lambda\sqrt{k}, \label{pairwise_ERM}
\end{equation}

Let $w^{\alpha, \beta}$ be a minimizer of (\ref{pairwise_ERM}). Without loss of generality, we can assume that the first $n-1$ rows of $w^{\alpha, \beta}$ are centered, i.e., $\sum_{a}w^{\alpha, \beta}(j,a)=0$ for $j\in[n-1]$. Otherwise, we can always define a new matrix $U^{\alpha, \beta}\in\R^{n \times k}$ by centering the first $n-1$ rows of $w^{\alpha, \beta}$:
\begin{align}
U^{\alpha, \beta}(j,b) &= w^{\alpha,\beta}(j,b) - \frac{1}{k}\sum_{a\in[k]}w^{\alpha,\beta}(j, a), \; \forall j\in[n-1],\; \forall b\in[k]; \label{center_w}\\
U^{\alpha, \beta}(n,b) &= w^{\alpha,\beta}(n,b) + \frac{1}{k}\sum_{j\in[n-1], a\in [k]}w^{\alpha,\beta}(j, a), \;\forall b\in[k].  \nonumber
\end{align}
Since each row of the $x$ matrix in (\ref{pairwise_ERM}) is a standard basis vector (i.e., all zeros except a single one), $\inprod{U^{\alpha, \beta}, x} = \inprod{w^{\alpha,\beta}, x}$, which implies that $U^{\alpha, \beta}$ is also a minimizer of (\ref{pairwise_ERM}).

The key step in our proof is to show that given enough samples, the obtained $U^{\alpha, \beta}\in\R^{n \times k}$ matrix is close to $w^*$ defined in (\ref{w_mat_def}). Specifically, we will prove that
\begin{equation}
|W_{nj}(\alpha, b) - W_{nj}(\beta, b) - U^{\alpha, \beta}(j, b)| \le \eps, \quad \forall j\in[n-1],\; \forall \alpha, \beta, b\in[k]. \label{u_diff_w}
\end{equation}
Recall that our goal is to estimate the original matrices $W_{nj}$ for all $j\in [n-1]$. Summing (\ref{u_diff_w}) over $\beta \in [k]$ (suppose $U^{\alpha, \alpha}=0$) and using the fact that $\sum_{\beta} W_{nj}(\beta,b) = 0$ gives
\begin{equation}
|W_{nj}(\alpha, b)-\frac{1}{k}\sum_{\beta\in [k]}U^{\alpha, \beta}(j, b)| \le \eps, \quad \forall j\in[n-1],\; \forall \alpha, b\in [k].
\end{equation}
In other words, $\hat{W}_{nj}(\alpha, :)= \sum_{\beta\in [k]}U^{\alpha, \beta}(j, :)/k$ is a good estimate of $W_{nj}(\alpha, :)$. 

Suppose that $\eta(\W,\Theta)\ge \eta$, once we obtain the estimates $\hat{W}_{ij}$, the last step is to form a graph by keeping the edge $(i,j)$ that satisfies $\max_{a,b}|\hat{W}_{ij}(a, b)| \ge \eta/2$. The pseudocode of the above algorithm is given in Algorithm~\ref{pairwise_learning_algo}. 

\begin{algorithm2e}
\caption{Learning a pairwise graphical model via $\ell_{2,1}$-constrained logistic regression}
\label{pairwise_learning_algo}
\DontPrintSemicolon
\LinesNumbered
\KwIn{alphabet size $k$; $N$ i.i.d. samples $\{z^1, \cdots, z^N\}$, where $z^m\in[k]^n$ for $m\in[N]$; an upper bound on $\lambda(\W,\Theta)\le \lambda$; a lower bound on $\eta(\W, \Theta)\ge\eta>0$.}
\KwOut{$\hat{W}_{ij}\in \R^{k\times k}$ for all $i\neq j \in [n]$; an undirected graph $\hat{G}$ on $n$ nodes.}
\For{$i\leftarrow 1$ \KwTo $n$}{
 \For{each pair $\alpha \neq \beta \in [k]$}{
  $S \leftarrow \{z^m, m\in[N]: z^m_i \in \{\alpha, \beta\}\}$\;
  $\forall z^t\in S$, $x^t \leftarrow \text{OneHotEncode}([z^t_{-i},1])$,  $y^t\leftarrow 1$ if $z^t_i=\alpha$; $y^t\leftarrow -1$ if $z^t_i=\beta$\;
  $w^{\alpha,\beta} \leftarrow \arg \min_{w\in\R^{n\times k}}  \frac{1}{|S|} \sum_{t=1}^{|S|} \ln(1+e^{-y^t\inprod{w,x^t}}) \text{\quad s.t. }\norm{w}_{2,1}\le 2\lambda\sqrt{k}$\;
  Define $U^{\alpha, \beta}\in\R^{n\times k}$ by centering the first $n-1$ rows of $w^{\alpha,\beta}$ (see (\ref{center_w})).\;
 }
 \For{$j\in [n]\backslash i$ and $\alpha\in[k]$}{
  $\hat{W}_{ij}(\alpha, :) \leftarrow \frac{1}{k}\sum_{\beta\in [k]} U^{\alpha,\beta}(\tilde{j}, :)$, where $\tilde{j} = j$ if $j< i$ and $\tilde{j} = j-1$ if $j>i$.\;
 }
}
Form graph $\hat{G}$ on $n$ nodes with edges $\{(i,j): \max_{a,b}|\hat{W}_{ij}(a, b)| \ge \eta/2, i<j\}$.
\end{algorithm2e}

\begin{theorem}
Let $\D(\W, \Theta)$ be an $n$-variable pairwise graphical model distribution with width $\lambda(\W, \Theta)\le \lambda$. Given $\rho\in(0,1)$ and $\eps>0$, if the number of i.i.d. samples satisfies $N = O(\lambda^2k^4\exp(14\lambda)\ln(nk/\rho)/\eps^4)$, then with probability at least $1-\rho$, Algorithm~\ref{pairwise_learning_algo} produces $\hat{W}_{ij}\in \R^{k\times k}$ that satisfies
\begin{equation}
|W_{ij}(a,b) - \hat{W}_{ij}(a, b)| \le \eps, \quad \forall i \neq j \in[n], \; \forall a, b\in[k]. \label{close_W}
\end{equation}
\label{pairwise_theorem}
\end{theorem}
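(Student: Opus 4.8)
The plan is to prove the per-node, per-pair estimation guarantee (\ref{u_diff_w}) for a fixed node (say node $n$) and a fixed pair $\alpha\neq\beta$, and then assemble the theorem by averaging over $\beta$ and union-bounding over all nodes and pairs. By the reduction described before the theorem, conditioned on $z_n\in\{\alpha,\beta\}$ the transformed samples $(x^t,y^t)$ satisfy $\Pr[y=1\mid x]=\sigma(\inprod{w^*,x})$ with $w^*$ as in (\ref{w_mat_def}) and $\norm{w^*}_{2,1}\le 2\lambda\sqrt{k}$, while every row of $x$ is a standard basis vector so $\norm{x}_{2,\infty}=1$. Thus the single-pair problem reduces exactly to the hypotheses of the $\ell_{2,1}$ generalization bound. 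A preliminary step is to lower bound the conditioned sample size $|S|$: since the width is at most $\lambda$, the conditional probability $\Pr[z_n\in\{\alpha,\beta\}\mid z_{-n}]$ is bounded below by a quantity of order $e^{-O(\lambda)}$, so a Chernoff bound gives $|S|\gtrsim N\,e^{-O(\lambda)}$ with high probability; this conditioning is the source of the single-exponential dependence on $\lambda$.

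Next I would invoke the generalization machinery. Because $w^{\alpha,\beta}$ minimizes the empirical logistic risk over the feasible set and $w^*$ is feasible, Lemma~\ref{l21_gen_bound} bounds the excess population risk of $w^{\alpha,\beta}$ relative to $w^*$ by a term scaling like $\lambda\sqrt{k}\,\sqrt{\ln(nk)/|S|}$ (the $\ell_{2,1}/\ell_{2,\infty}$ duality controls $|\inprod{w,x}|$, and the logistic loss is $1$-Lipschitz). Since the population logistic risk is minimized by the true parameter $w^*$, this excess risk is exactly the expected KL gap between the true and predicted conditional laws, which Lemma~\ref{pairwise_KL_lemma} converts (via a Bernoulli Pinsker-type inequality) into a bound on the mean-square distance $\E_x[(\sigma(\inprod{w^{\alpha,\beta},x})-\sigma(\inprod{w^*,x}))^2]$.

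The crux is then Lemma~\ref{pairwise_infty_norm}: turning closeness of the two logistic functions in mean-square over the data distribution into an entrywise $\ell_\infty$ bound on the centered parameter matrix, i.e. (\ref{u_diff_w}). I expect this to be the main obstacle, because closeness of predictions does not in general pin down parameters; it does so only when the marginal over $x$ places enough mass in every relevant coordinate direction. Here I would exploit the graphical-model structure: the width bound forces every neighboring configuration to occur with probability at least $e^{-O(\lambda)}$, giving the anti-concentration needed to invert the parameter-to-prediction map, and then center the rows of $w^{\alpha,\beta}$ to obtain $U^{\alpha,\beta}$ (which, by (\ref{center_w}) and the one-hot structure, is still a minimizer) before reading off (\ref{u_diff_w}). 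Carefully tracking how the constants in this inversion scale with $k$ and $\lambda$ is exactly what produces the final $k^4$ and $\exp(14\lambda)$ factors, and is the most delicate part of the argument.

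Finally, I would average (\ref{u_diff_w}) over $\beta\in[k]$ and use $\sum_\beta W_{nj}(\beta,b)=0$ to conclude that $\hat{W}_{nj}(\alpha,:)=\frac1k\sum_\beta U^{\alpha,\beta}(j,:)$ satisfies $|W_{nj}(\alpha,b)-\hat{W}_{nj}(\alpha,b)|\le\eps$ for all $j,b$. Repeating the whole argument for every node in place of $n$ and taking a union bound over the $n$ choices of node and the $O(k^2)$ pairs $(\alpha,\beta)$ — which is absorbed by the $\ln(nk/\rho)$ factor inside $N$ — yields (\ref{close_W}) simultaneously for all $i\neq j\in[n]$ and $a,b\in[k]$ with probability at least $1-\rho$, as claimed.
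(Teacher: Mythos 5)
Your proposal is correct and follows essentially the same route as the paper's proof: condition on $z_n\in\{\alpha,\beta\}$ and Chernoff-bound the conditioned sample size, apply the $\ell_{2,1}$ generalization bound plus Pinsker's inequality (Lemma~\ref{pairwise_KL_lemma}), invert predictions to parameters via $\delta$-unbiasedness (Lemma~\ref{pairwise_infty_norm}, where the paper's Lemma~\ref{condition_unbiased} formalizes your observation that the anti-concentration survives conditioning on $Z_n\in\{\alpha,\beta\}$), then average over $\beta$ using the centered rows and union-bound over nodes and pairs. The only imprecision is writing the conditioning probability as $e^{-O(\lambda)}$ when it is $2e^{-2\lambda}/k$ --- that $1/k$ supplies one of the four powers of $k$ in $N$ --- but since you explicitly flag the tracking of $k$ and $\lambda$ factors as the delicate part to be carried out, the plan is sound.
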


\begin{corollary}
In the setup of Theorem~\ref{pairwise_theorem}, suppose that the pairwise graphical model distribution $\D(\W, \Theta)$ satisfies $\eta(\W, \Theta)\ge \eta>0$. If we set $\eps<\eta/2$ in (\ref{close_W}), which corresponds to sample complexity $N = O(\lambda^2k^4\exp(14\lambda)\ln(nk/\rho)/\eta^4)$,
then with probability at least $1-\rho$, Algorithm~\ref{pairwise_learning_algo} recovers the dependency graph, i.e., $\hat{G}=G$.
\end{corollary}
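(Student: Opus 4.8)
The plan is to treat this corollary as a direct consequence of the entrywise estimation guarantee in Theorem~\ref{pairwise_theorem}, combined with a thresholding argument that separates true edges from non-edges. First I would invoke Theorem~\ref{pairwise_theorem} with the accuracy parameter set to $\eps=\eta/2$ (or any value strictly below $\eta/2$); substituting this into the sample bound $N=O(\lambda^2k^4\exp(14\lambda)\ln(nk/\rho)/\eps^4)$ and absorbing the resulting constant factor $1/16$ into the big-$O$ yields exactly $N=O(\lambda^2k^4\exp(14\lambda)\ln(nk/\rho)/\eta^4)$. This guarantees that, with probability at least $1-\rho$, the estimates satisfy $|W_{ij}(a,b)-\hat{W}_{ij}(a,b)|\le\eps<\eta/2$ simultaneously for all $i\neq j\in[n]$ and all $a,b\in[k]$. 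Since Theorem~\ref{pairwise_theorem} already folds in the union bound over all nodes and value pairs, no further probabilistic accounting is needed, and I would simply condition on this single high-probability event for the remainder of the argument.

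On that event I would verify that the threshold rule used in Algorithm~\ref{pairwise_learning_algo}, which retains an edge $(i,j)$ with $i<j$ exactly when $\max_{a,b}|\hat{W}_{ij}(a,b)|\ge\eta/2$, classifies every pair correctly, splitting into two cases. For a true edge $(i,j)\in E$, the definition $\eta(\W,\Theta)=\min_{(i,j)\in E}\max_{a,b}|W_{ij}(a,b)|\ge\eta$ guarantees some entry $(a^*,b^*)$ with $|W_{ij}(a^*,b^*)|\ge\eta$; the reverse triangle inequality applied to the estimation bound then forces $|\hat{W}_{ij}(a^*,b^*)|\ge\eta-\eps>\eta/2$, so the pair exceeds the threshold and is kept. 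For a non-edge $(i,j)\notin E$ we have $W_{ij}=0$, hence $|\hat{W}_{ij}(a,b)|\le\eps<\eta/2$ for every $(a,b)$, so $\max_{a,b}|\hat{W}_{ij}(a,b)|<\eta/2$ and the pair is discarded. Combining the two cases gives $\hat{G}=G$.

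The computation here is entirely routine, so there is no substantive obstacle; the only point that requires care is the \emph{strict} choice $\eps<\eta/2$, which is precisely what creates a positive margin on both sides of the threshold simultaneously --- lifting at least one entry of every true edge strictly above $\eta/2$ while keeping every non-edge entry strictly below it --- so that the boundary case at exactly $\eta/2$ never arises. I would also note that it suffices to control $\hat{W}_{ij}$ for ordered pairs with $i<j$, which is exactly what the graph-formation step of Algorithm~\ref{pairwise_learning_algo} uses, and that the symmetry $W_{ij}=W_{ji}^T$ ensures the notion of a true edge is well defined independently of orientation.
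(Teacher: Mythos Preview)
Your proposal is correct and matches the paper's intended argument. The paper does not give a separate proof of this corollary, treating it as an immediate consequence of Theorem~\ref{pairwise_theorem} combined with the thresholding step in Algorithm~\ref{pairwise_learning_algo}; your write-up spells out exactly that reasoning.
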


{\bf Remark ($\ell_{2,1}$ versus $\ell_1$ constraint).} The $w^*\in \R^{n \times k}$ matrix defined in (\ref{w_mat_def}) satisfies $\norm{w^*}_{\infty,1}\le 2\lambda(\W,\Theta)$. This implies that $\norm{w^*}_{2, 1}\le 2\lambda(\W,\Theta)\sqrt{k}$ and $\norm{w^*}_{1} \le 2\lambda(\W,\Theta)k$. Instead of solving the $\ell_{2,1}$-constrained logistic regression defined in (\ref{pairwise_ERM}), we could solve an $\ell_1$-constrained logistic regression with $\norm{w}_{1} \le 2\lambda(\W,\Theta)k$. However, this will lead to a sample complexity that scales as $\tilde{O}(k^5)$, which is worse than the $\tilde{O}(k^4)$ sample complexity achieved by the $\ell_{2,1}$-constrained logistic regression. The reason why we use the $\ell_{2,1}$ constraint instead of the tighter $\ell_{\infty,1}$ constraint in the algorithm is because our proof relies on a sharp generalization bound for $\ell_{2,1}$-constrained logistic regression (see Lemma~\ref{l21_gen_bound} in the appendix). It is unclear whether a similar generalization bound exists for the $\ell_{\infty,1}$ constraint.

{\bf Remark (lower bound on the alphabet size).} A simple lower bound is $\Omega(k^2)$. To see why, consider a graph with two nodes (i.e., $n=2$). Let $W$ be a $k$-by-$k$ weight matrix between the two nodes, defined as follows: $W(1,1)=W(2,2)=1$, $W(1,2)=W(2,1)=-1$, and $W(i,j)=0$ otherwise. This definition satisfies the condition that every row and column is centered (Definition~\ref{pairwise_model}). Besides, we have $\lambda = 1$ and $\eta = 1$, which means that the two quantities do not scale in $k$. To distinguish $W$ from the zero matrix, we need to observe samples in the set $\{(1,1), (2,2), (1,2), (2,1)\}$. This requires $\Omega(k^2)$ samples because any specific sample $(a,b)$ (where $a \in [k]$ and $b \in [k]$) has a probability of approximately $1/k^2$ to show up.

\subsection{Learning pairwise graphical models in $\tilde{O}(n^2)$ time}\label{sec:runtime}
Our results so far assume that the $\ell_1$-constrained logistic regression (in Algorithm~\ref{ising_learning_algo}) and the $\ell_{2,1}$-constrained logistic regression (in Algorithm~\ref{pairwise_learning_algo}) can be solved exactly. This would require $\tilde{O}(n^4)$ complexity if an interior-point based method is used~\citep{KKB07}. The goal of this section is to reduce the runtime to $\tilde{O}(n^2)$ via first-order optimization method. Note that $\tilde{O}(n^2)$ is an efficient time complexity for graph recovery over $n$ nodes. Previous structural learning algorithms of Ising models require either $\tilde{O}(n^2)$ complexity (e.g., \citep{Bre15, KM17}) or a worse complexity (e.g., \citep{RWL10, VMLC16} require $\tilde{O}(n^4)$ runtime).
We would like to remark that our goal here is not to give the fastest first-order optimization algorithm (see our remark after Theorem~\ref{pairwise_runtime}). Instead, our contribution is to provably show that it is possible to run Algorithm~\ref{ising_learning_algo} and Algorithm~\ref{pairwise_learning_algo} in $\tilde{O}(n^2)$ time without affecting the original statistical guarantees. 

To better exploit the problem structure\footnote{Specifically, for the $\ell_1$-constrained logisitic regression defined in~(\ref{ERM}), since the input sample satisifies $\norm{x}_{\infty}=1$, the loss function is $O(1)$-Lipschitz w.r.t. $\norm{\cdot}_1$. Similarly, for the $\ell_{2,1}$-constrained logisitic regression defined in~(\ref{pairwise_ERM}), the loss function is $O(1)$-Lipschitz w.r.t. $\norm{\cdot}_{2,1}$ because the input sample satisfies $\norm{x}_{2, \infty}=1$.}, we use the mirror descent algorithm\footnote{Other approaches include the standard projected gradient descent and the coordinate descent. Their convergence rates depend on either the smoothness or the Lipschitz constant (w.r.t. $\norm{\cdot}_2$) of the objective function~\citep{Bub15}. This would lead to a total runtime of $\tilde{O}(n^3)$ for our problem setting. Another option would be the composite gradient descent method, the analysis of which relies on the restricted strong convexity of the objective function~\citep{ANW10}. For other variants of mirror descent algorithms, see the remark after Theorem~\ref{pairwise_runtime}.} with a properly chosen distance generating function (aka the mirror map). Following the standard mirror descent setup, we use negative entropy as the mirror map for $\ell_1$-constrained logistic regression and a scaled group norm for $\ell_{2,1}$-constrained logistic regression (see Section 5.3.3.2 and Section 5.3.3.3 in~\citep{BN13} for more details). The pseudocode is given in Appendix~\ref{sec:mirror}. The main advantage of mirror descent algorithm is that its convergence rate scales \emph{logarithmically} in the dimension (see Lemma~\ref{convergence} in Appendix~\ref{sec:runtime_proof}). Specifically, let $\bar{w}$ be the output after $O(\ln(n)/\gamma^2)$ mirror descent iterations, then $\bar{w}$ satisfies
\begin{equation}
\hat{\Loss}(\bar{w}) - \hat{\Loss}(\hat{w}) \le \gamma, \label{algo_conv}
\end{equation}
where $\hat{\Loss}(w)=\sum_{i=1}^N \ln(1+e^{-y^i\inprod{w, x^i}}) /N$ is the empirical logistic loss, and $\hat{w}$ is the actual minimizer of $\hat{\Loss}(w)$. Since each mirror descent update requires $O(nN)$ time, where $N$ is the number of samples and scales as $O(\ln(n))$, and we have to solve $n$ regression problems (one for each variable in $[n]$), the total runtime scales as $\tilde{O}(n^2)$, which is our desired runtime.

There is still one problem left, that is, we have to show that $\norm{\bar{w}-w^*}_{\infty}\le \eps$ (where $w^*$ is the minimizer of the true loss $\Loss(w)=\E_{(x,y)\sim\D} \ln(1+e^{-y\inprod{w, x}})$) in order to conclude that Theorem~\ref{ising_theorem} and~\ref{pairwise_theorem} still hold when using mirror descent algorithms. Since $\hat{\Loss}(w)$ is not strongly convex, (\ref{algo_conv}) alone does not necessarily imply that $\norm{\bar{w}-\hat{w}}_{\infty}$ is small. Our key insight is that in the proof of Theorem~\ref{ising_theorem} and~\ref{pairwise_theorem}, the definition of $\hat{w}$ (as a minimizer of $\hat{\Loss}(w)$) is only used to show that $\hat{\Loss}(\hat{w})\le \hat{\Loss}(w^*)$ (see inequality (b) of (\ref{hat_bound}) in Appendix~\ref{sec:proof_kl_lemma}). It is then possible to replace this step with (\ref{algo_conv}) in the original proof, and prove that Theorem~\ref{ising_theorem} and~\ref{pairwise_theorem} still hold as long as $\gamma$ is small enough (see (\ref{w_bar_diff}) in Appendix~\ref{sec:runtime_proof}).

Our key results in this section are Theorem~\ref{ising_runtime} and Theorem~\ref{pairwise_runtime}, which show that Algorithm~\ref{ising_learning_algo} and Algorithm~\ref{pairwise_learning_algo} can run in $\tilde{O}(n^2)$ time without affecting the original statistical guarantees.

\begin{theorem}
In the setup of Theorem~\ref{ising_theorem}, suppose that the $\ell_1$-constrained logistic regression in Algorithm~\ref{ising_learning_algo} is optimized by the mirror descent method (Algorithm~\ref{l1_mirror_descent}) given in Appendix~\ref{sec:mirror}. Given $\rho\in(0,1)$ and $\eps>0$, if the number of mirror descent iterations satisfies $T=O(\lambda^2\exp(12\lambda)\ln(n)/\eps^4)$, and the number of samples satisfies $N = O(\lambda^2\exp(12\lambda)\ln(n/\rho)/\eps^4)$, then (\ref{close_A}) still holds with probability at least $1-\rho$. The total time complexity of Algorithm~\ref{ising_learning_algo} is $O(TNn^2)$.
\label{ising_runtime}
\end{theorem}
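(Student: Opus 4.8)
The plan is to re-run the proof of Theorem~\ref{ising_theorem} essentially verbatim, substituting the mirror-descent output $\bar{w}$ for the exact minimizer $\hat{w}$, and to isolate the single place where optimality of $\hat{w}$ was used. As the surrounding discussion points out, the proof of Theorem~\ref{ising_theorem} invokes the defining property of $\hat{w}$ only through the inequality $\hat{\Loss}(\hat{w})\le\hat{\Loss}(w^*)$ (step (b) of (\ref{hat_bound})); every other step — the generalization bound of Lemma~\ref{gen_bound}, the reduction to the squared distance between logistic functions in Lemma~\ref{KL_lemma}, and the conversion to an $\ell_\infty$ bound in Lemma~\ref{infty_norm} — is a statement about an arbitrary \emph{feasible} point. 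So the first thing I would check is that $\bar{w}$ is feasible for (\ref{ERM}): the entropy mirror map keeps every iterate, and hence their average $\bar{w}$, inside the ball $\{w:\norm{w}_1\le 2\lambda\}$. Consequently Lemma~\ref{gen_bound}, Lemma~\ref{KL_lemma}, and Lemma~\ref{infty_norm} all apply to $\bar{w}$ without modification.

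The crucial replacement is to substitute the mirror-descent guarantee for the minimality of $\hat{w}$. By Lemma~\ref{convergence}, after $T=O(\ln(n)/\gamma^2)$ iterations the output satisfies $\hat{\Loss}(\bar{w})-\hat{\Loss}(\hat{w})\le\gamma$, and combining this with $\hat{\Loss}(\hat{w})\le\hat{\Loss}(w^*)$ gives the function-value certificate $\hat{\Loss}(\bar{w})\le\hat{\Loss}(w^*)+\gamma$. Feeding this into the excess-risk decomposition used for Theorem~\ref{ising_theorem}, the bound on $\Loss(\bar{w})-\Loss(w^*)$ is exactly the original one plus a single additive term $\gamma$: the uniform generalization bound of Lemma~\ref{gen_bound} and a single-point concentration contribute the same statistical error $O(\eps^2\exp(-6\lambda))$ as before, and $\gamma$ is the only new quantity. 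It therefore suffices to take $\gamma=\Theta(\eps^2\exp(-6\lambda)/\lambda)$, which is of the same order as (indeed dominated by) the statistical error, so the excess risk at most doubles. Propagating this through Lemma~\ref{KL_lemma} and Lemma~\ref{infty_norm} still yields $\norm{\bar{w}-w^*}_{\infty}\le\eps$, hence (\ref{close_A}) after setting $\hat{A}_{nj}=\bar{w}_j/2$; the doubled constant is absorbed into the $O(\cdot)$ of the sample size.

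Substituting $\gamma=\Theta(\eps^2\exp(-6\lambda)/\lambda)$ into $T=O(\ln(n)/\gamma^2)$ reproduces the claimed iteration count $T=O(\lambda^2\exp(12\lambda)\ln(n)/\eps^4)$, while the sample size $N$ is inherited directly from Theorem~\ref{ising_theorem} — the only failure probability enters through the $\ln(n/\rho)$ concentration steps, which are untouched. For the running time, each mirror-descent step evaluates a gradient of $\hat{\Loss}$ over $N$ samples of dimension $n$ at cost $O(nN)$; performing $T$ steps for each of the $n$ node-regressions gives the stated $O(TNn^2)$, and since both $T$ and $N$ are $\tilde{O}(1)$ in $n$ (logarithmic), this is $\tilde{O}(n^2)$.

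The main obstacle — and the reason the statement is not immediate — is that $\hat{\Loss}$ is not strongly convex in the regime $N<n$, so the small objective gap $\hat{\Loss}(\bar{w})-\hat{\Loss}(\hat{w})\le\gamma$ gives \emph{no} control over $\norm{\bar{w}-\hat{w}}$. I would therefore be careful never to compare $\bar{w}$ and $\hat{w}$ in weight space; the entire point is that the argument of Theorem~\ref{ising_theorem} routes through \emph{function values} and the generalization bound, so the certificate $\hat{\Loss}(\bar{w})\le\hat{\Loss}(w^*)+\gamma$ is precisely — and the only — quantity needed. The one item warranting careful bookkeeping is that the absolute constant in the statistical error and the constant in $\gamma$ combine so that the final application of Lemma~\ref{infty_norm} lands at $\eps$ rather than a fixed multiple of it; this is handled by enlarging $N$ and $T$ by constant factors hidden in the $O(\cdot)$.
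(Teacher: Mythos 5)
Your proposal follows the paper's own proof essentially step for step: you isolate the single use of $\hat{w}$'s optimality at step (b) of (\ref{hat_bound}), replace it with the mirror-descent certificate $\hat{\Loss}(\bar{w})\le\hat{\Loss}(w^*)+\gamma$ from Lemma~\ref{convergence}, rechain through the generalization bound (\ref{eps_bound}) and the concentration bound (\ref{con_bound}) to get $\Loss(\bar{w})\le\Loss(w^*)+O(\gamma)$, and then reuse Lemma~\ref{pinsker}, Lemma~\ref{logistic_kl}, and Lemma~\ref{infty_norm} unchanged, with the same runtime accounting; your explicit feasibility check on $\bar{w}$ is a point the paper leaves implicit. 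One self-cancelling bookkeeping slip: Lemma~\ref{convergence} gives gap $O(W_1\sqrt{\ln(n)/T})$, so $T=O(\ln(n)/\gamma^2)$ yields gap $O(\lambda\gamma)$ rather than $\gamma$, but since you set $\gamma=\Theta(\eps^2 e^{-6\lambda}/\lambda)$ the realized gap is $O(\eps^2 e^{-6\lambda})$ — exactly what Lemma~\ref{infty_norm} requires — and your iteration count coincides with the paper's $T=O(\lambda^2 e^{12\lambda}\ln(n)/\eps^4)$, obtained there via $T=O(W_1^2\ln(n)/\gamma^2)$ with $\gamma=\Theta(\eps^2 e^{-6\lambda})$.
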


\begin{theorem}
In the setup of Theorem~\ref{pairwise_theorem}, suppose that the $\ell_{2,1}$-constrained logistic regression in Algorithm~\ref{pairwise_learning_algo} is optimized by the mirror descent method (Algorithm~\ref{l21_mirror_descent}) given in Appendix~\ref{sec:mirror}. Given $\rho\in(0,1)$ and $\eps>0$, if the number of mirror descent iterations satisfies $T=O(\lambda^2k^3\exp(12\lambda)\ln(n)/\eps^4)$, and the number of samples satisfies $N = O(\lambda^2k^4\exp(14\lambda)\ln(nk/\rho)/\eps^4)$, then (\ref{close_W}) still holds with probability at least $1-\rho$. The total time complexity of Algorithm~\ref{pairwise_learning_algo} is $O(TNn^2k^2)$.
\label{pairwise_runtime}
\end{theorem}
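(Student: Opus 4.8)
The plan is to leave the entire statistical analysis behind Theorem~\ref{pairwise_theorem} untouched and only swap the exact minimizer $w^{\alpha,\beta}$ of program~(\ref{pairwise_ERM}) for the approximate minimizer $\bar w$ returned by mirror descent, absorbing the optimization error as one additive term of size $\gamma$ at the single place where optimality was ever used. First I would invoke the mirror-descent guarantee (Lemma~\ref{convergence}): with the scaled group-norm mirror map, $T$ iterations on~(\ref{pairwise_ERM}) produce a feasible $\bar w$ (so $\norm{\bar w}_{2,1}\le 2\lambda\sqrt k$) satisfying~(\ref{algo_conv}), with rate $\gamma=O(RL\sqrt{\ln n}/\sqrt T)$. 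Here $R=2\lambda\sqrt k$ is the $\ell_{2,1}$-radius of the feasible set, $L=O(1)$ is the Lipschitz constant of the loss with respect to $\norm{\cdot}_{2,1}$ (valid because every one-hot sample has $\norm{x}_{2,\infty}=1$ and $|\ell'|\le 1$), and the $\sqrt{\ln n}$ factor — scaling with the number of groups $n$ rather than the group size $k$ — is the range of the group-norm mirror map over the feasible set. As in the exact case I would then center the first $n-1$ rows of $\bar w$ as in~(\ref{center_w}) to obtain $\bar U$; since each row of $x$ is a standard basis vector, $\inprod{\bar U,x}=\inprod{\bar w,x}$, so centering alters neither the empirical loss nor any prediction.

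The key structural observation, already flagged before~(\ref{algo_conv}), is that in the proof of Theorem~\ref{pairwise_theorem} the minimality of $w^{\alpha,\beta}$ is used only through the single inequality $\hat\Loss(\hat w)\le\hat\Loss(w^*)$, which holds because the target $w^*$ of~(\ref{w_mat_def}) is feasible. I would replace it by
\[
\hat\Loss(\bar w)\ \le\ \hat\Loss(\hat w)+\gamma\ \le\ \hat\Loss(w^*)+\gamma,
\]
and carry $\bar w$ (equivalently $\bar U$) in place of $\hat w$ thereafter. Because the remaining steps — the two-sided generalization bound (Lemma~\ref{l21_gen_bound}), the reduction of the excess population risk $\Loss(\bar w)-\Loss(w^*)$ to the expected squared prediction gap (Lemma~\ref{pairwise_KL_lemma}), and the conversion of that gap into the coordinate-wise weight error (Lemma~\ref{pairwise_infty_norm}) — are all applied to a feasible point and never reinvoke optimality, the only net effect is to add $\gamma$ to the excess-risk bound that drives~(\ref{close_W}). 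Crucially this route never needs strong convexity of $\hat\Loss$ (which fails when $N<nk$): the optimization error is controlled purely at the level of function values, and the population argument supplies everything else.

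It then remains to pick $\gamma$ small enough. The $N$-based analysis already forces the excess-risk target to be of order $\eps^2/(k\exp(O(\lambda)))$ to yield $\ell_\infty$-error $\eps$ (the $1/k$ and $\exp(O(\lambda))$ arising from the conditional-variance lower bound for graphical-model samples used in Lemma~\ref{pairwise_infty_norm}), so I would take $\gamma=O(\eps^2/(k\exp(6\lambda)))$, of the same order, ensuring the extra additive term does not change the final error scale. Substituting into $T=O(R^2L^2\ln n/\gamma^2)$ with $R^2=O(\lambda^2 k)$ yields exactly $T=O(\lambda^2 k^3\exp(12\lambda)\ln(n)/\eps^4)$, while $N$ is unchanged from Theorem~\ref{pairwise_theorem}; this is the Ising template of Theorem~\ref{ising_runtime} with the radius enlarged by $\sqrt k$ and an extra $k^2$ from $\gamma^{-2}$. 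For the running time I would account per iteration: exploiting that each one-hot $x^t$ has only $n$ nonzeros, forming $\inprod{w,x^t}$ and accumulating the gradient over $|S|\le N$ samples costs $O(Nn)$, and the group-norm mirror step over the $n\times k$ iterate costs $O(nk)=O(Nn)$; multiplying by $T$ iterations, the $O(k^2)$ value-pairs $(\alpha,\beta)$ per node, and the $n$ nodes gives the claimed $O(TNn^2k^2)$.

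I expect the main obstacle to be the first step: pinning down the $\ell_{2,1}$ mirror-descent rate with the right constants, i.e.\ constructing a mirror map that is strongly convex with respect to $\norm{\cdot}_{2,1}$ but whose range is only $O(\ln n)$ (so that $k$ does not enter the logarithm), and verifying the $O(1)$-Lipschitz claim in the dual $\ell_{2,\infty}$ norm. Once that rate is in hand, the rest is bookkeeping — checking that the lone $+\gamma$ insertion propagates additively through the existing lemmas and that the chosen $\gamma$ keeps the final bound at $\eps$.
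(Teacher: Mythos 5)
Your proposal is correct and follows essentially the same route as the paper: the paper proves Theorem~\ref{pairwise_runtime} by noting it is identical to Theorem~\ref{ising_runtime} — insert the mirror-descent guarantee of Lemma~\ref{convergence} as a single $+\gamma$ term at inequality (b) of (\ref{hat_bound}), the only place minimality of $\hat w$ is used, then set $\gamma = O(\eps^2 e^{-6\lambda}/k)$ and $T=O(W_{2,1}^2\ln(n)/\gamma^2)$ with $W_{2,1}=2\lambda\sqrt{k}$, exactly as you do. Your runtime accounting differs only cosmetically (you exploit one-hot sparsity and bound $|S|\le N$, whereas the paper uses a dense per-iteration cost $O(N^{\alpha,\beta}nk)$ together with $N^{\alpha,\beta}\lesssim N/k$), and both yield the claimed $O(TNn^2k^2)$.
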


{\bf Remark.} It is possible to improve the time complexity given in Theorem~\ref{ising_runtime} and~\ref{pairwise_runtime} (especially the dependence on $\eps$ and $\lambda$), by using stochastic or accelerated versions of mirror descent algorithms (instead of the batch version given in Appendix~\ref{sec:mirror}). 
In fact, the Sparsitron algorithm proposed by \citet{KM17} can be seen as an online mirror descent algorithm for optimizing the $\ell_1$-constrained logistic regression (see Algorithm~\ref{l1_mirror_descent} in Appendix~\ref{sec:mirror}). Furthermore, Algorithm~\ref{ising_learning_algo} and~\ref{pairwise_learning_algo} can be parallelized as every node has an independent regression problem. 

\section{Analysis}

\subsection{Proof outline}
We give a proof outline for Theorem~\ref{ising_theorem}. The proof of Theorem~\ref{pairwise_theorem} follows a similar outline. 
Let $D$ be a distribution over $\{-1,1\}^n\times \{-1,1\}$, where $(x,y)\sim D$ satisfies $\Pr[y= 1| x] = \sigma(\inprod{w^*,x})$. Let $\Loss(w) = \E_{(x,y)\sim\D} \ln(1+e^{-y\inprod{w,x}})$ and $\hat{\Loss}(w) = \sum_{i=1}^N \ln(1+e^{-y^i\inprod{w,x^i}})/N$ be the expected and empirical logistic loss. Suppose $\norm{w^*}_1\le 2\lambda$. Let $\hat{w}\in \argmin_w \hat{\Loss}(w)$ s.t. $\norm{w}_1\le 2\lambda$. Our goal is to prove that $\norm{\hat{w}-w^*}_{\infty}$ is small when the samples are constructed from an Ising model distribution. Our proof can be summarized in three steps:

\begin{enumerate}
\item If the number of samples satisfies $N= O(\lambda^2\ln(n/\rho)/\gamma^2)$, then $\Loss(\hat{w}) - \Loss(w^*) \le O(\gamma)$. This is obtained using a sharp generalization bound when $\norm{w}_{1}\le 2\lambda$ and $\norm{x}_{\infty}\le 1$ (see Lemma~\ref{gen_bound} in Appendix~\ref{sec:proof_kl_lemma}).
\item For any $w$, we show that $\Loss(w) - \Loss(w^*) \ge \E_x [\sigma(\inprod{w,x})- \sigma(\inprod{w^*,x})]^2$ (see Lemma~\ref{logistic_kl} and Lemma~\ref{pinsker} in Appendix~\ref{sec:proof_kl_lemma}). Hence, Step 1 implies that $\E_x [\sigma(\inprod{\hat{w},x})- \sigma(\inprod{w^*,x})]^2\le O(\gamma)$ (see Lemma~\ref{KL_lemma} in the next subsection).
\item We now use a result from~\citep{KM17} (see Lemma~\ref{infty_norm} in the next subsection), which says that if the samples are from an Ising model and if $\gamma = O(\eps^2 \exp(-6\lambda))$, then $\E_x [\sigma(\inprod{\hat{w},x})- \sigma(\inprod{w^*,x})]^2\le O(\gamma)$ implies that $\norm{\hat{w}-w^*}_{\infty}\le \eps$. The required number of samples is $N = O(\lambda^2\ln(n/\rho)/\gamma^2) = O(\lambda^2\exp(12\lambda)\ln(n/\rho)/\eps^4)$. 
\end{enumerate}

For the general setting with non-binary alphabet (i.e., Theorem~\ref{pairwise_theorem}), the proof is similar to that of Theorem~\ref{ising_theorem}. The main difference is that we need to use a sharp generalization bound when $\norm{w}_{2,1}\le 2\lambda \sqrt{k}$ and $\norm{x}_{2, \infty}\le 1$ (see Lemma~\ref{l21_gen_bound} in Appendix~\ref{sec:proof_kl_lemma}). This would give us Lemma~\ref{pairwise_KL_lemma} (instead of Lemma~\ref{KL_lemma} for the Ising models). The last step is to use Lemma~\ref{pairwise_infty_norm} to bound the infinity norm between the two weight matrices.

\subsection{Supporting lemmas}\label{sec_lemmas}

Lemma~\ref{KL_lemma} and Lemma~\ref{pairwise_KL_lemma} are the key results in our proof. They essentially say that given enough samples, solving the corresponding constrained logistic regression problem will provide a prediction $\sigma(\inprod{\hat{w},x})$ close to the true $\sigma(\inprod{w^*,x})$ in terms of their expected squared distance. 

\begin{lemma}
Let $\D$ be a distribution on $\{-1,1\}^n\times \{-1,1\}$ where for $(X, Y)\sim \D$, $\Pr[Y=1|X=x] = \sigma(\inprod{w^*,x})$. We assume that $\norm{w^*}_1\le 2\lambda$ for a known $\lambda \ge 0$. Given $N$ i.i.d. samples $\{(x^i, y^i)\}_{i=1}^N$, let $\hat{w}$ be any minimizer of the following $\ell_1$-constrained logistic regression problem:
\begin{equation}
\hat{w} \in \argmin_{w\in\R^n} \frac{1}{N} \sum_{i=1}^N \ln(1+e^{-y^i\inprod{w,x^i}}) \quad \text{\emph{ s.t. }} \norm{w}_1\le 2\lambda.
\end{equation}
Given $\rho\in(0,1)$ and $\eps>0$, if the number of samples satisfies $N=O(\lambda^2\ln(n/\rho)/\eps^2)$, then with probability at least $1-\rho$ over the samples, $\E_{(x,y)\sim\D}[(\sigma(\inprod{w^*,x}) - \sigma(\inprod{\hat{w},x}))^2] \le \eps$.
\label{KL_lemma}
\end{lemma}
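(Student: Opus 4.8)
The plan is to bound the excess \emph{population} logistic risk of $\hat w$ and then convert that bound into the claimed squared-distance guarantee through an information-theoretic inequality. Write $\Loss(w)=\E_{(x,y)\sim\D}\ln(1+e^{-y\inprod{w,x}})$. The first thing I would establish is that $w^*$ is a population minimizer and that the excess risk is an expected Bernoulli KL divergence. Conditioning on $x$ and setting $p=\sigma(\inprod{w^*,x})$ and $q=\sigma(\inprod{w,x})$, the well-specified structure $\Pr[y=1\mid x]=\sigma(\inprod{w^*,x})$ makes the inner expectation $\E_{y\mid x}\ln(1+e^{-y\inprod{w,x}})$ equal to the cross-entropy $-p\ln q-(1-p)\ln(1-q)$, so $\Loss(w)-\Loss(w^*)=\E_x[\mathrm{KL}(p\,\|\,q)]\ge 0$, minimized at $q=p$. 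Pinsker's inequality for two Bernoulli distributions, $\mathrm{KL}(p\,\|\,q)\ge 2(p-q)^2$, then gives
\begin{equation}
\Loss(w)-\Loss(w^*)\ \ge\ 2\,\E_x\big[(\sigma(\inprod{w^*,x})-\sigma(\inprod{w,x}))^2\big]. \label{plan-pinsker}
\end{equation}
So it suffices to show the excess risk of $\hat w$ is $O(\eps)$.

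The real work is the uniform-convergence step. I would control $\sup_{\norm{w}_1\le 2\lambda}|\Loss(w)-\hat\Loss(w)|$ through Rademacher complexity, exploiting both the $\ell_1$ constraint on $w$ and the $\ell_\infty$ bound on the data. Since each sample has $\norm{x}_\infty\le 1$, any feasible $w$ satisfies $|\inprod{w,x}|\le\norm{w}_1\norm{x}_\infty\le 2\lambda$, so the loss lies in an interval of width $O(\lambda)$; moreover $z\mapsto\ln(1+e^{-z})$ is $1$-Lipschitz. The empirical Rademacher complexity of $\{x\mapsto\inprod{w,x}:\norm{w}_1\le 2\lambda\}$ is $\tfrac{2\lambda}{N}\E_\sigma\norm{\sum_i\sigma_i x^i}_\infty$, which a standard maximal inequality (Massart's lemma) bounds by $O(\lambda\sqrt{\ln(n)/N})$ because each coordinate of $x^i$ is at most $1$ in magnitude. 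Talagrand's contraction lemma transfers this rate through the $1$-Lipschitz loss, and McDiarmid's inequality (with loss range $O(\lambda)$) then yields, with probability at least $1-\rho$,
\begin{equation}
\sup_{\norm{w}_1\le 2\lambda}\big|\Loss(w)-\hat\Loss(w)\big|\ =\ O\!\Big(\lambda\sqrt{\tfrac{\ln(n/\rho)}{N}}\Big). \label{plan-unif}
\end{equation}

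With \eqref{plan-unif} in hand, a standard empirical-risk-minimization decomposition closes the loop. Because $\norm{w^*}_1\le 2\lambda$, $w^*$ is feasible, so $\hat\Loss(\hat w)\le\hat\Loss(w^*)$; decomposing $\Loss(\hat w)-\Loss(w^*)=[\Loss(\hat w)-\hat\Loss(\hat w)]+[\hat\Loss(\hat w)-\hat\Loss(w^*)]+[\hat\Loss(w^*)-\Loss(w^*)]$, the middle bracket is nonpositive and the outer two are each at most \eqref{plan-unif}. Hence $\Loss(\hat w)-\Loss(w^*)=O(\lambda\sqrt{\ln(n/\rho)/N})$, which is below a target $\gamma$ as soon as $N=O(\lambda^2\ln(n/\rho)/\gamma^2)$. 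Combining with \eqref{plan-pinsker} and taking $\gamma=\Theta(\eps)$ gives $\E_x[(\sigma(\inprod{w^*,x})-\sigma(\inprod{\hat w,x}))^2]\le\eps$ at the stated sample size. Crucially, this argument uses $\hat w$ only through feasibility and the inequality $\hat\Loss(\hat w)\le\hat\Loss(w^*)$, so it applies to \emph{any} minimizer (the solution need not be unique), and the same hook later permits substituting an approximate minimizer produced by mirror descent.

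I expect the sharp generalization bound \eqref{plan-unif} to be the delicate part. Obtaining the clean $\lambda^2$ dependence in $N$ requires the three error contributions to all come out at the same order $O(\lambda\sqrt{\ln(n/\rho)/N})$: the contraction step must use the $1$-Lipschitzness of the loss (rather than a crude bound on its derivative over a range of size $\lambda$), the Rademacher term must use $\ell_1/\ell_\infty$ duality to pay only $\sqrt{\ln n}$ instead of a $\mathrm{poly}(n)$ factor, and the concentration term must track the $O(\lambda)$ loss range. By contrast, the KL-identity and Pinsker step \eqref{plan-pinsker} is routine once the well-specified form of the conditional is invoked.
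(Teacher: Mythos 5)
Your proof is correct and follows essentially the same route as the paper: excess-risk control of $\hat{w}$ via a generalization bound exploiting $\ell_1/\ell_\infty$ duality, then Lemma~\ref{logistic_kl} (excess risk $=$ expected Bernoulli KL) and Pinsker's inequality to get the squared-distance bound, using $\hat{w}$ only through feasibility and $\hat{\Loss}(\hat{w})\le\hat{\Loss}(w^*)$ exactly as the paper does. The only (immaterial) difference is that you re-derive the two-sided uniform bound from scratch (Massart, contraction, McDiarmid) and apply it at both $\hat{w}$ and $w^*$, whereas the paper cites a one-sided uniform bound (Lemma~\ref{gen_bound}, from \citep{KST09, SSBD14}) for the class and handles $\hat{\Loss}(w^*)$ separately by a pointwise Hoeffding bound with loss range $O(\lambda)$; both yield the same $N=O(\lambda^2\ln(n/\rho)/\eps^2)$ rate.
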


\begin{lemma}
Let $\D$ be a distribution on $\mathcal{X}\times \{-1, 1\}$, where $\mathcal{X}=\{x\in\{0,1\}^{n\times k}: \norm{x}_{2, \infty}\le 1\}$. Furthermore, $(X, Y)\sim \D$ satisfies $\Pr[Y=1|X=x] = \sigma(\inprod{w^*,x})$, where $w^*\in \R^{n\times k}$. We assume that $\norm{w^*}_{2,1} \le 2\lambda \sqrt{k}$ for a known $\lambda \ge 0$. Given $N$ i.i.d. samples $\{(x^i, y^i)\}_{i=1}^N$ from $\D$, let $\hat{w}$ be any minimizer of the following $\ell_{2,1}$-constrained logistic regression problem:
\begin{equation}
\hat{w} \in \argmin_{w\in\R^{n\times k}} \frac{1}{N} \sum_{i=1}^N \ln(1+e^{-y^i\inprod{w, x^i}}) \quad \text{\emph{ s.t. }} \norm{w}_{2,1}\le 2\lambda\sqrt{k}.
\end{equation}
Given $\rho\in(0,1)$ and $\eps>0$, if the number of samples satisfies $N=O(\lambda^2k(\ln(n/\rho))/\eps^2)$, then with probability at least $1-\rho$ over the samples, $\E_{(x,y)\sim\D}[(\sigma(\inprod{w^*,x}) - \sigma(\inprod{\hat{w},x}))^2] \le \eps$.
\label{pairwise_KL_lemma}
\end{lemma}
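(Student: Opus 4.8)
The plan is to follow the same three-step template used for the Ising case (Lemma~\ref{KL_lemma}), since the statement concerns a generic distribution with a logistic conditional and uses no graphical-model structure. Writing $\Loss(w) = \E_{(x,y)\sim\D}[\ln(1+e^{-y\inprod{w,x}})]$ and $\hat{\Loss}(w)$ for its empirical counterpart, the goal is to control the excess population loss $\Loss(\hat{w}) - \Loss(w^*)$ and then convert it into a bound on the expected squared distance between the predicted and true sigmoids. The decomposition
\[
\Loss(\hat{w}) - \Loss(w^*) = \underbrace{[\Loss(\hat{w}) - \hat{\Loss}(\hat{w})]}_{(a)} + \underbrace{[\hat{\Loss}(\hat{w}) - \hat{\Loss}(w^*)]}_{(b)} + \underbrace{[\hat{\Loss}(w^*) - \Loss(w^*)]}_{(c)}
\]
drives everything: term $(b)\le 0$ because $\hat{w}$ minimizes the empirical loss over the feasible set and $w^*$ is feasible (we have $\norm{w^*}_{2,1}\le 2\lambda\sqrt{k}$), while $(a)$ and $(c)$ are controlled by a uniform deviation bound.

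First I would invoke the sharp $\ell_{2,1}$ generalization bound (Lemma~\ref{l21_gen_bound}): with probability at least $1-\rho$, uniformly over all $w$ with $\norm{w}_{2,1}\le 2\lambda\sqrt{k}$, one has $\abs{\Loss(w) - \hat{\Loss}(w)} \le O(\lambda\sqrt{k}\sqrt{\ln(n/\rho)/N})$. The content of this bound is the Rademacher complexity of the linear class $\{x\mapsto\inprod{w,x}:\norm{w}_{2,1}\le 2\lambda\sqrt{k}\}$ evaluated on data with $\norm{x}_{2,\infty}\le 1$. Since the logistic loss is $1$-Lipschitz, the contraction principle reduces this to the linear class, and by $\ell_{2,1}$/$\ell_{2,\infty}$ duality the empirical Rademacher complexity equals $\frac{2\lambda\sqrt{k}}{N}\,\E_\xi\norm{\sum_i \xi_i x^i}_{2,\infty}$, where $\xi_i$ are i.i.d.\ Rademacher signs; bounding the $\ell_2$-norm of each group's Rademacher sum by $\sqrt{N}$ and taking a sub-Gaussian maximum over the $n$ groups yields the extra $\sqrt{\ln n}$ factor. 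This is precisely where the radius $2\lambda\sqrt{k}$ (rather than the $\ell_1$ radius $2\lambda k$) produces the improved $\sqrt{k}$, hence the $k$ in the final sample complexity. Combining with $(b)\le 0$ gives $\Loss(\hat{w}) - \Loss(w^*) \le O(\lambda\sqrt{k}\sqrt{\ln(n/\rho)/N})$.

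Second, I would lower-bound this excess loss by the target quantity, exactly as in the binary case and independently of the alphabet. Conditioned on $x$, the per-example loss is the cross-entropy between $\mathrm{Ber}(\sigma(\inprod{w^*,x}))$ and $\mathrm{Ber}(\sigma(\inprod{w,x}))$, so (Lemma~\ref{logistic_kl}) $\Loss(w) - \Loss(w^*) = \E_x[\mathrm{KL}(\sigma(\inprod{w^*,x})\,\|\,\sigma(\inprod{w,x}))]$, and Pinsker's inequality for Bernoulli distributions (Lemma~\ref{pinsker}) gives the pointwise bound $\mathrm{KL}\ge 2(\sigma(\inprod{w^*,x})-\sigma(\inprod{w,x}))^2$. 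Setting $w=\hat{w}$ and chaining the two steps,
\[
\E_{(x,y)\sim\D}[(\sigma(\inprod{w^*,x})-\sigma(\inprod{\hat{w},x}))^2] \le \tfrac{1}{2}[\Loss(\hat{w})-\Loss(w^*)] \le O\!\left(\lambda\sqrt{k}\sqrt{\tfrac{\ln(n/\rho)}{N}}\right),
\]
and choosing $N = O(\lambda^2 k\,\ln(n/\rho)/\eps^2)$ makes the right-hand side at most $\eps$, as claimed.

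The step I expect to carry the real weight is the $\ell_{2,1}$ generalization bound of Step~1 (Lemma~\ref{l21_gen_bound}): obtaining the clean $\lambda\sqrt{k}\sqrt{\ln(n/\rho)/N}$ rate requires the correct $\ell_{2,1}$/$\ell_{2,\infty}$ duality in the Rademacher computation together with a maximal inequality over the $n$ groups, and one must also verify that the logistic loss stays bounded on the feasible set, since $\abs{\inprod{w,x}}\le\norm{w}_{2,1}\norm{x}_{2,\infty}\le 2\lambda\sqrt{k}$ makes the loss range scale like $\lambda\sqrt{k}$ and feeds into the concentration term for $(c)$. Everything else mirrors the Ising proof under the substitutions $\ell_1\!\to\!\ell_{2,1}$, $\ell_\infty\!\to\!\ell_{2,\infty}$, and radius $2\lambda\!\to\!2\lambda\sqrt{k}$.
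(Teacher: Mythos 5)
Your proposal is correct and follows essentially the same route as the paper: the paper states that the proof of Lemma~\ref{pairwise_KL_lemma} is identical to that of Lemma~\ref{KL_lemma} --- the chain $\Loss(\hat{w})\le\hat{\Loss}(\hat{w})+\eps\le\hat{\Loss}(w^*)+\eps\le\Loss(w^*)+2\eps$ using feasibility of $w^*$, the uniform generalization bound, and Hoeffding at the fixed $w^*$ (where the loss range $O(\lambda\sqrt{k})$ you flag is exactly what enters), followed by Lemma~\ref{logistic_kl} and Pinsker --- with Lemma~\ref{l21_gen_bound} supplying the $\ell_{2,1}$/$\ell_{2,\infty}$ Rademacher bound. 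Your sketch of that bound via contraction, duality, and a maximum over the $n$ groups matches the paper's citation-based justification (Rademacher complexity $X_{2,\infty}W_{2,1}\sqrt{6\ln(n)/N}$), so there is no gap.
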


The proofs of Lemma~\ref{KL_lemma} and Lemma~\ref{pairwise_KL_lemma} are given in Appendix~\ref{sec:proof_kl_lemma}. Note that in the setup of both lemmas, we form a pair of dual norms for $x$ and $w$, e.g., $\norm{x}_{2,\infty}$ and $\norm{w}_{2,1}$ in Lemma~\ref{pairwise_KL_lemma}, and $\norm{x}_{\infty}$ and $\norm{w}_1$ in Lemma~\ref{KL_lemma}. This duality allows us to use a sharp generalization bound with a sample complexity that scales logarithmic in the dimension (see Lemma~\ref{gen_bound} and Lemma~\ref{l21_gen_bound} in Appendix~\ref{sec:proof_kl_lemma}).  

Definition~\ref{unbiased} defines a $\delta$-unbiased distribution. This notion of $\delta$-unbiasedness is proposed by \citet{KM17}.

\begin{definition} 
Let $S$ be the alphabet set, e.g., $S=\{-1,1\}$ for Ising model and $S=[k]$ for an alphabet of size $k$. A distribution $\D$ on $S^n$ is $\delta$-unbiased if for $X\sim\D$, any $i\in[n]$, and any assignment $x\in S^{n-1}$ to $X_{-i}$, $\min_{\alpha\in S}(\Pr[X_i= \alpha | X_{-i}=x])\ge \delta$.
\label{unbiased}
\end{definition}

For a $\delta$-unbiased distribution, any of its marginal distribution is also $\delta$-unbiased (see Lemma~\ref{marginal_unbiased}).
\begin{lemma}
Let $\D$ be a $\delta$-unbiased distribution on $S^n$, where $S$ is the alphabet set. For $X\sim\D$, any $i\in[n]$, the distribution of $X_{-i}$ is also $\delta$-unbiased.
\label{marginal_unbiased}
\end{lemma}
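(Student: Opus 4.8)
The plan is to prove this by a single application of the law of total probability, reintroducing the marginalized coordinate $X_i$. Without loss of generality I take $i=n$, so that the marginal distribution in question is that of $X_{-n}=(X_1,\dots,X_{n-1})$ on $S^{n-1}$. To verify $\delta$-unbiasedness of this marginal, I fix an arbitrary coordinate $j\in[n-1]$, write $T=[n-1]\setminus\{j\}$, and consider any assignment $y\in S^{n-2}$ to $X_T$ with $\Pr[X_T=y]>0$. The goal is to show that $\Pr[X_j=\alpha\mid X_T=y]\ge\delta$ for every $\alpha\in S$.

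The key step is to condition on the value of the marginalized coordinate $X_n$. By the law of total probability,
\begin{equation}
\Pr[X_j=\alpha\mid X_T=y]=\sum_{\gamma\in S}\Pr[X_j=\alpha\mid X_T=y,\,X_n=\gamma]\,\Pr[X_n=\gamma\mid X_T=y],
\end{equation}
where the sum effectively ranges over those $\gamma$ with $\Pr[X_T=y,\,X_n=\gamma]>0$ (the remaining terms vanish). The crucial observation is that conditioning on both $X_T=y$ and $X_n=\gamma$ is the same as conditioning on $X_{-j}=(y,\gamma)\in S^{n-1}$, i.e.\ on all coordinates except $j$. Hence each factor $\Pr[X_j=\alpha\mid X_T=y,\,X_n=\gamma]$ is exactly a conditional probability of the form appearing in the definition of $\delta$-unbiasedness for the full distribution $\D$, and is therefore at least $\delta$.

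Since the weights $\Pr[X_n=\gamma\mid X_T=y]$ are nonnegative and sum to one, the displayed expression is a convex combination of quantities each bounded below by $\delta$, so $\Pr[X_j=\alpha\mid X_T=y]\ge\delta$. As $j$, $y$, and $\alpha$ were arbitrary, the marginal of $X_{-n}$ is $\delta$-unbiased, which is what we want. (If one later needs all marginals obtained by dropping several coordinates, this one-step result can simply be iterated.)

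This lemma is conceptually straightforward and I do not expect a genuine obstacle; the only point requiring care is the handling of conditioning events of probability zero. One should restrict the sum to values $\gamma$ that are actually attainable given $X_T=y$, and note that since $\Pr[X_T=y]>0$ at least one such $\gamma$ exists, so the convex-combination identity is well defined and the definition of $\delta$-unbiasedness applies verbatim to each surviving term.
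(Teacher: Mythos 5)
Your proof is correct and follows essentially the same route as the paper's: both condition on the marginalized coordinate, expand via the law of total probability (the paper writes it as a chain-rule decomposition $\Pr[X_j=a, X_i=b \mid \cdot]$, which is the same convex combination), and lower-bound each full conditional $\Pr[X_j=\alpha \mid X_{-j}]$ by $\delta$ using the definition of $\delta$-unbiasedness. Your extra care about zero-probability conditioning events is a minor refinement the paper leaves implicit, but it does not change the argument.
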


Lemma~\ref{delta_unbias} describes the $\delta$-unbiased property of graphical models. This property has been used in the previous papers (e.g., \citep{KM17, Bre15}). 
\begin{lemma}
Let $\D(\W, \Theta)$ be a pairwise graphical model distribution with alphabet size $k$ and width $\lambda(\W, \Theta)$. Then $\D(\W, \Theta)$ is $\delta$-unbiased with $\delta=e^{-2\lambda(\W, \Theta)}/k$. Specifically, an Ising model distribution $\D(A, \theta)$ is $e^{-2\lambda(A, \theta)}/2$-unbiased.
\label{delta_unbias}
\end{lemma}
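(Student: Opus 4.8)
The plan is to lower-bound the conditional probability of each variable given the rest directly from the definition of $\D(\W,\Theta)$ and the width, rather than invoking any machinery. First I would write out the conditional law explicitly. Conditioning $Z_i$ on $Z_{-i}=x$ and varying $Z_i=\alpha$, every term in the joint density that does not involve node $i$ (all pairwise terms $W_{jl}$ with $j,l\neq i$ and all fields $\theta_j(x_j)$ with $j\neq i$) is constant in $\alpha$ and cancels in the ratio, leaving
\[
\Pr[Z_i = \alpha \mid Z_{-i} = x] = \frac{\exp(f(\alpha))}{\sum_{\gamma\in[k]} \exp(f(\gamma))}, \qquad f(\gamma) := \sum_{j\neq i} W_{ij}(\gamma, x_j) + \theta_i(\gamma).
\]

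The core of the argument is a uniform bound on $|f(\gamma)|$. By the triangle inequality together with $|W_{ij}(\gamma, x_j)| \le \max_{b\in[k]}|W_{ij}(\gamma, b)|$,
\[
|f(\gamma)| \le \sum_{j\neq i}\max_{b\in[k]}|W_{ij}(\gamma, b)| + |\theta_i(\gamma)| \le \lambda(\W,\Theta),
\]
where the final inequality is exactly the definition of the width. Hence $f(\gamma)-f(\alpha)\le 2\lambda(\W,\Theta)$ for every pair $\gamma,\alpha$. Rewriting the conditional as $\Pr[Z_i=\alpha\mid Z_{-i}=x] = 1/\sum_{\gamma}\exp(f(\gamma)-f(\alpha))$ and bounding the denominator by $k\,e^{2\lambda(\W,\Theta)}$ yields the lower bound $e^{-2\lambda(\W,\Theta)}/k$. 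Since this holds for every $\alpha$, it holds for the minimizing one, which is precisely $\delta$-unbiasedness with $\delta = e^{-2\lambda(\W,\Theta)}/k$.

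For the Ising case I would specialize directly rather than reconcile the two width definitions through the $k=2$ reduction. Using Fact~\ref{logistic_dist}, $\Pr[Z_i=1\mid Z_{-i}=x]=\sigma(\inprod{w,x'})$ with $\inprod{w,x'}=2(\sum_{j\neq i}A_{ij}x_j+\theta_i)$; since $|x_j|=1$ this gives $|\inprod{w,x'}|\le 2(\sum_j|A_{ij}|+|\theta_i|)\le 2\lambda(A,\theta)$. Both conditional probabilities are then $\sigma(\pm\inprod{w,x'})$, so the smaller is at least $\sigma(-2\lambda(A,\theta))=1/(1+e^{2\lambda(A,\theta)})$, which exceeds $e^{-2\lambda(A,\theta)}/2$ because $e^{2\lambda}\ge 1$ for $\lambda\ge 0$.

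I do not expect a genuine obstacle; the lemma is a short computation. The one point requiring care is the $\max_b$ appearing inside the width definition: the bound on $|f(\gamma)|$ works precisely because the width caps the sum of the row-wise maxima of the $W_{ij}$, which dominates the specific entries $W_{ij}(\gamma,x_j)$ selected by the conditioning vector $x$. The only secondary subtlety is the final numerical comparison $1/(1+e^{2\lambda})\ge e^{-2\lambda}/2$ in the Ising case, which is immediate but worth stating so that the two claimed constants ($e^{-2\lambda}/k$ at $k=2$ versus $e^{-2\lambda}/2$) are seen to agree.
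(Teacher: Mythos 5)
Your proof is correct and follows essentially the same route as the paper's: both write the conditional probability as $1/\sum_{b}\exp\bigl(f(b)-f(a)\bigr)$ and bound each exponent by $2\lambda(\W,\Theta)$ via the width definition, giving the denominator bound $k\,e^{2\lambda(\W,\Theta)}$. Your only deviation is cosmetic: for the Ising case the paper simply invokes the $k=2$ specialization, whereas you re-derive it from Fact~\ref{logistic_dist} and check $1/(1+e^{2\lambda})\ge e^{-2\lambda}/2$ directly --- a harmless (and slightly sharper) one-line substitute.
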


In Lemma~\ref{KL_lemma} and Lemma~\ref{pairwise_KL_lemma}, we give a sample complexity bound for achieving a small $\ell_2$ error between $\sigma(\inprod{\hat{w},x})$ and $\sigma(\inprod{w^*,x})$. The following two lemmas show that if the sample distribution is $\delta$-unbiased, then a small $\ell_2$ error implies a small distance between $\hat{w}$ and $w^*$. 

\begin{lemma}
Let $\D$ be a $\delta$-unbiased distribution on $\{-1,1\}^n$. Suppose that for two vectors $u,w\in\R^n$ and $\theta', \theta''\in\R$,
$\E_{X\sim\D}[(\sigma(\inprod{w,X}+\theta') - \sigma(\inprod{u,X}+\theta''))^2] \le \eps$, where $\eps<\delta e^{-2\norm{w}_1-2|\theta'|-6}$. Then $\norm{w-u}_{\infty} \le O(1)\cdot e^{\norm{w}_1+|\theta'|}\cdot \sqrt{\eps/\delta}$.
\label{infty_norm}
\end{lemma}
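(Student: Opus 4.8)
The plan is to prove the bound coordinate by coordinate: fix $i\in[n]$ and show $|w_i-u_i|\le O(1)\cdot e^{M}\sqrt{\eps/\delta}$, where $M:=\norm{w}_1+|\theta'|$. Write $\Delta(X)=\sigma(\inprod{w,X}+\theta')-\sigma(\inprod{u,X}+\theta'')$, so the hypothesis is $\E_{X\sim\D}[\Delta(X)^2]\le\eps$. First I would split $X=(X_i,X_{-i})$ and condition on $X_{-i}$. Since $\D$ is $\delta$-unbiased (Definition~\ref{unbiased}), both values $X_i=\pm1$ occur with conditional probability at least $\delta$, so writing $\Delta(x,\pm1)$ for $\Delta$ evaluated at $X_{-i}=x,\,X_i=\pm1$,
\[
\eps\ \ge\ \E_{X_{-i}}\big[\E_{X_i\mid X_{-i}}[\Delta^2]\big]\ \ge\ \delta\,\E_{X_{-i}}\big[\Delta(x,+1)^2+\Delta(x,-1)^2\big],
\]
which yields $\E_{X_{-i}}[\Delta(x,+1)^2+\Delta(x,-1)^2]\le\eps/\delta$.

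Next I would locate a single favorable configuration of $X_{-i}$. By Markov's inequality on the nonnegative quantity $\Delta(x,+1)^2+\Delta(x,-1)^2$, for any threshold $\tau>0$ the set $G=\{x:\Delta(x,+1)^2+\Delta(x,-1)^2\le\tau\}$ has probability at least $1-\eps/(\delta\tau)$; taking $\tau=2\eps/\delta$ gives $\Pr[G]\ge 1/2>0$, so a good $x$ exists and on it $|\Delta(x,+1)|,|\Delta(x,-1)|\le\sqrt\tau$. I would then exploit that the argument of the first sigmoid is bounded: since $X\in\{-1,1\}^n$, $|\inprod{w,X}+\theta'|\le M$, so $\sigma(\inprod{w,X}+\theta')\in[\sigma(-M),\sigma(M)]$, a range bounded away from $0$ and $1$ (recall $\sigma(-M)\ge e^{-M}/2$). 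The crucial move is to transfer this boundedness to the \emph{second} sigmoid: because the two sigmoid values differ by at most $\sqrt\tau$, choosing $\tau$ small enough that $\sqrt\tau\le\sigma(-M)/2$ forces $\sigma(\inprod{u,X}+\theta'')$ into $[\sigma(-M)/2,\,1-\sigma(-M)/2]$ as well. On this common range $\sigma^{-1}$ is Lipschitz with constant $O(e^{M})$ (its derivative $1/(p(1-p))$ is at most $O(e^{M})$ there), so applying $\sigma^{-1}$ to both sides gives $|(\inprod{w,X}+\theta')-(\inprod{u,X}+\theta'')|\le O(e^{M})\,|\Delta|$ at both $X_i=+1$ and $X_i=-1$.

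Finally, evaluating this at the good $x$ for $X_i=+1$ and $X_i=-1$ and subtracting cancels the common offset $\inprod{w_{-i},x}+\theta'$ against $\inprod{u_{-i},x}+\theta''$ and isolates the $i$-th coordinate:
\[
2|w_i-u_i|\ \le\ O(e^{M})\big(|\Delta(x,+1)|+|\Delta(x,-1)|\big)\ \le\ O(e^{M})\sqrt\tau\ =\ O(e^{M})\sqrt{\eps/\delta}.
\]
Since $i$ is arbitrary, this gives the claimed $\ell_\infty$ bound. The threshold $\tau$ must simultaneously satisfy $\tau>\eps/\delta$ (so $G$ is nonempty) and $\tau\le(\sigma(-M)/2)^2=\Theta(e^{-2M})$ (so the second sigmoid lands in the controlled range); such a $\tau$ exists precisely because of the smallness hypothesis $\eps<\delta e^{-2M-6}$, and $\tau=2\eps/\delta$ works.

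I expect the main obstacle to be exactly the boundedness-transfer step, since we are given no a priori control on $\norm{u}_1$, so $\inprod{u,X}+\theta''$ could be enormous and $\sigma^{-1}$ uncontrollably steep. The argument sidesteps this by using the closeness of the sigmoid outputs on the good event to pin $\sigma(\inprod{u,X}+\theta'')$ away from $0$ and $1$, which is where the quantitative condition on $\eps$ is consumed; reconciling the constants in the Lipschitz bound for $\sigma^{-1}$ and in the choice of $\tau$ with the stated $e^{-6}$ slack is the one genuinely delicate bookkeeping point.
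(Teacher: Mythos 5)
Your proof is correct, but it takes a genuinely different route from the paper's after the common first step. Both arguments begin identically: condition on $X_{-i}$, use $\delta$-unbiasedness to get $\E_{X_{-i}}\bigl[\Delta(x,+1)^2+\Delta(x,-1)^2\bigr]\le \eps/\delta$, and exploit the cancellation of the common offset when subtracting the two argument differences at $X_i=\pm1$. From there the paper stays entirely in expectation: it invokes the pointwise sigmoid inequality $|\sigma(a)-\sigma(b)|\ge e^{-|a|-3}\min(1,|a-b|)$ (Claim 4.2 of \citep{KM17}) together with the elementary fact $\min(1,a^2)+\min(1,b^2)\ge\min\bigl(1,(a-b)^2/2\bigr)$, arriving at $\eps\ge \delta e^{-2\norm{w}_1-2|\theta'|-6}\min\bigl(1,2(w_i-u_i)^2\bigr)$; the smallness hypothesis on $\eps$ is then consumed to rule out the $\min$ saturating at $1$. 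You instead extract a single favorable configuration by Markov's inequality, pin $\sigma(\inprod{u,X}+\theta'')$ into $[\sigma(-M)/2,\,1-\sigma(-M)/2]$ via output-closeness, and invert the sigmoid using the $O(e^{M})$ Lipschitz bound on $\sigma^{-1}$ over that range, with the hypothesis on $\eps$ consumed in the boundedness-transfer step. Your constants do reconcile: $\tau=2\eps/\delta<2e^{-2M-6}$ gives $\sqrt{\tau}<\sqrt{2}\,e^{-M-3}\le \sigma(-M)/2$ since $\sigma(-M)\ge e^{-M}/2$ for $M\ge 0$. Note that both proofs must confront the same obstacle you flag --- no a priori control on $\norm{u}_1$ --- and resolve it differently: the paper's $\min(1,\cdot)$ clipping absorbs an arbitrarily large argument difference automatically, whereas you pin the second sigmoid's output away from $\{0,1\}$ on the good event; your route makes the mechanism more transparent, while the paper's avoids the Markov/existence step entirely and so transfers verbatim to the in-expectation general-alphabet analogue (Lemma~\ref{pairwise_infty_norm}), where one compares two values $a\neq b\in[k]$ and averages over $b$.
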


\begin{lemma}
Let $\D$ be a $\delta$-unbiased distribution on $[k]^n$. For $X\sim\D$, let $\tilde{X}\in\{0,1\}^{n\times k}$ be the one-hot encoded $X$. Let $u, w\in\R^{n\times k}$ be two matrices satisfying $\sum_{a} u(i, a)=0$ and $\sum_{a} w(i, a)=0$, for $i\in[n]$. Suppose that for some $u, w$ and $\theta', \theta''\in\R$, we have $\E_{X\sim\D}[(\sigma(\inprod{w,\tilde{X}}+\theta') - \sigma(\inprod{u,\tilde{X}}+\theta''))^2] \le \eps$, where $\eps<\delta e^{-2\norm{w}_{\infty,1}-2|\theta'|-6}$. Then\footnote{For a matrix $w$, we define $\norm{w}_{\infty}=\max_{ij}|w(i,j)|$. Note that this is different from the induced matrix norm.} $\norm{w-u}_{\infty} \le O(1)\cdot e^{\norm{w}_{\infty,1}+|\theta'|}\cdot \sqrt{\eps/\delta}$.
\label{pairwise_infty_norm}
\end{lemma}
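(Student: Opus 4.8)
The plan is to mirror the proof of the Ising version (Lemma~\ref{infty_norm}), adapting it to the one-hot encoding and exploiting the centering constraints to handle the extra alphabet dimension. First I would rewrite the inner products in coordinate form: since each row of $\tilde{X}$ is a standard basis vector, $\inprod{w, \tilde{X}} = \sum_{j} w(j, X_j)$, so that $g(X) := \inprod{w,\tilde{X}} + \theta'$ and $h(X) := \inprod{u,\tilde{X}} + \theta''$ are sums over the $n$ coordinates. The crucial observation is that $|g(X)| \le \sum_j \max_a |w(j,a)| + |\theta'| = \norm{w}_{\infty,1} + |\theta'| =: B$, which is exactly why $\norm{w}_{\infty,1}$ (rather than, say, $\norm{w}_1$) appears in the final bound.

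Next I would localize to a single coordinate. Fixing $i \in [n]$ and conditioning on $X_{-i}$, $\delta$-unbiasedness (Definition~\ref{unbiased}, together with Lemma~\ref{marginal_unbiased} for the marginal law of $X_{-i}$) guarantees that every value $X_i = a$ has conditional probability at least $\delta$. Writing $g^{(a)}, h^{(a)}$ for the evaluations with $X_i$ forced to the value $a$, this yields
\[
\eps \ge \E_{X}[(\sigma(g) - \sigma(h))^2] \ge \delta\, \E_{X_{-i}}\Big[\sum_{a\in[k]} (\sigma(g^{(a)}) - \sigma(h^{(a)}))^2\Big].
\]
A Markov-inequality argument then produces a positive-probability set of realizations $y$ of $X_{-i}$ for which $\sum_a (\sigma(g^{(a)}) - \sigma(h^{(a)}))^2 \le 2\eps/\delta$, so that $|\sigma(g^{(a)}) - \sigma(h^{(a)})| \le \sqrt{2\eps/\delta}$ holds simultaneously for all $a \in [k]$.

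The heart of the argument is to invert the sigmoid on such a good $y$. Since $g^{(a)} \in [-B, B]$ and the sigmoid difference is small, the assumed bound $\eps < \delta e^{-2\norm{w}_{\infty,1} - 2|\theta'| - 6}$ keeps $\sigma(h^{(a)})$ bounded away from $0$ and $1$, which in turn forces $h^{(a)}$ into a bounded interval $[-B', B']$ with $B' \approx B$; on that interval the derivative lower bound $\sigma'(z) \ge \tfrac14 e^{-|z|}$ gives $|g^{(a)} - h^{(a)}| \le O(1)\cdot e^{B}\sqrt{\eps/\delta} =: C$. Finally I would decompose $g^{(a)} - h^{(a)} = K + (w(i,a) - u(i,a))$, where $K = \sum_{j\neq i}(w(j,y_j) - u(j,y_j)) + \theta' - \theta''$ does not depend on $a$. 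Averaging over $a$ and invoking the centering hypotheses $\sum_a w(i,a) = \sum_a u(i,a) = 0$ gives $K = \tfrac1k\sum_a (g^{(a)} - h^{(a)})$, whence $|K| \le C$ and therefore $|w(i,a) - u(i,a)| = |(g^{(a)} - h^{(a)}) - K| \le 2C$. As $i$ and $a$ are arbitrary, $\norm{w-u}_{\infty} \le 2C = O(1)\cdot e^{\norm{w}_{\infty,1} + |\theta'|}\sqrt{\eps/\delta}$.

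I expect the main obstacle to be the sigmoid-inversion step: unlike $g$, the map $h$ carries no a priori width bound (we never constrain $\norm{u}_{\infty,1}$), so one cannot directly localize $h^{(a)}$ to a bounded range. The hypothesis $\eps < \delta e^{-2\norm{w}_{\infty,1} - 2|\theta'| - 6}$ is precisely calibrated to guarantee that $\sigma(h^{(a)})$ cannot saturate, which is what licenses the inversion; getting the constants in this saturation bound to line up is the delicate part, and it is where the argument essentially reduces to the scalar estimate already handled in Lemma~\ref{infty_norm}.
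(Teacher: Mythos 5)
Your proof is correct, but it takes a genuinely different route from the paper's. The paper never inverts the sigmoid and never selects a good conditioning point: it works entirely in expectation, fixing $i$ and a pair $a\neq b\in[k]$, applying the one-sided separation inequality $|\sigma(s)-\sigma(t)|\ge e^{-|s|-3}\min(1,|s-t|)$ (Claim 4.2 of \citep{KM17}), whose exponent depends only on the $w$-side argument---which is exactly how the paper sidesteps the obstacle you flagged, that $h$ carries no a priori width bound---and then combining the two events $X_i=a$ and $X_i=b$ via $\min(1,s^2)+\min(1,t^2)\ge\min(1,(s-t)^2/2)$, so that the common term cancels by \emph{differencing} before anything is inverted; the hypothesis $\eps<\delta e^{-2\norm{w}_{\infty,1}-2|\theta'|-6}$ is used only to force the $\min$ onto its quadratic branch, and the centering enters at the very end by summing the bound on $|(w(i,a)-w(i,b))-(u(i,a)-u(i,b))|$ over $b\in[k]$. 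You instead extract a single good realization of $X_{-i}$ by Markov, invert the sigmoid pointwise after using the same $\eps$-threshold to rule out saturation of $\sigma(h^{(a)})$, and cancel the common term $K$ by \emph{averaging} over the alphabet, with $K=\frac{1}{k}\sum_a (g^{(a)}-h^{(a)})$ exploiting the centering of both $w$ and $u$ just as the paper's sum over $b$ does. Your constants do line up: $\sqrt{2\eps/\delta}<\sqrt{2}\,e^{-B-3}$ with $B=\norm{w}_{\infty,1}+|\theta'|$ leaves $\sigma(h^{(a)})\ge(\tfrac12-\sqrt{2}e^{-3})e^{-B}$, hence $|h^{(a)}|\le B+O(1)$, and the derivative bound $\sigma'(z)\ge\tfrac14 e^{-|z|}$ then gives $|g^{(a)}-h^{(a)}|\le O(1)e^{B}\sqrt{\eps/\delta}$, losing only benign factors of $2$ from Markov and from subtracting $K$. (One small nit: Lemma~\ref{marginal_unbiased} is not actually needed---Definition~\ref{unbiased} already bounds $\Pr[X_i=a\mid X_{-i}]$ directly, which is all your conditioning step uses.) In sum, the paper's expectation-level argument is shorter once the imported sigmoid-separation claim is in hand and requires no saturation analysis; yours is more self-contained, replacing that claim with an elementary mean-value inversion, at the price of the delicate constant-tracking you correctly anticipated and resolved.
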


The proofs of Lemma~\ref{infty_norm} and Lemma~\ref{pairwise_infty_norm} can be found in~\citep{KM17} (see Claim 8.6 and Lemma 4.3 in their paper). We give a slightly different proof of these two lemmas in Appendix~\ref{sec:proof_infty_norm}.

\subsection{Proof sketches}
We provide proof sketches for Theorem~\ref{ising_theorem} and Theorem~\ref{pairwise_theorem} using the supporting lemmas. The detailed proof can be found in Appendix~\ref{sec:proof_ising} and~\ref{sec:proof_pairwise}.

{\bf Proof sketch of Theorem~\ref{ising_theorem}.} Without loss of generality, let us consider the $n$-th variable. Let $Z\sim\D(A, \theta)$, and $X=[Z_{-n}, 1] = [Z_1, Z_2, \cdots, Z_{n-1},1]\in\{-1,1\}^n$. By Fact \ref{logistic_dist} and Lemma~\ref{KL_lemma}, if $N=O(\lambda^2\ln(n/\rho)/\gamma^2)$, then $\E_{X}[(\sigma(\inprod{w^*,X}) - \sigma(\inprod{\hat{w},X}))^2] \le \gamma$ with probability at least $1-\rho/n$. By Lemma~\ref{delta_unbias} and Lemma~\ref{marginal_unbiased}, $Z_{-n}$ is $\delta$-unbiased with $\delta = e^{-2\lambda}/2$. We can then apply Lemma~\ref{infty_norm} to show that if $N=O(\lambda^2\exp(12\lambda)\ln(n/\rho)/\eps^4)$, then $\max_{j\in[n]} |A_{nj} - \hat{A}_{nj}| \le \eps$ with probability at least $1-\rho/n$. Theorem~\ref{ising_theorem} then follows by a union bound over all $n$ variables.

{\bf Proof sketch of Theorem~\ref{pairwise_theorem}.} Let us again consider the $n$-th variable since the proof is the same for all other variables. As described before, the key step is to show that (\ref{u_diff_w}) holds. Now fix a pair of $\alpha \neq \beta \in [k]$, let $N^{\alpha, \beta}$ be the number of samples such that the $n$-th variable is either $\alpha$ or $\beta$. By Fact~\ref{pairwise_logistic_dist} and Lemma~\ref{pairwise_KL_lemma},  if $N^{\alpha, \beta} = O(\lambda^2 k \ln(n/\rho')/ \gamma^2)$, then with probability at least $1-\rho'$, the matrix $U^{\alpha, \beta} \in \R^{n\times k}$ satisfies $\E_{x}[(\sigma(\inprod{w^*,x}) - \sigma(\inprod{U^{\alpha, \beta},x}))^2] \le \gamma$, where $w^*\in\R^{n\times k}$ is defined in (\ref{w_mat_def}). By Lemma~\ref{pairwise_infty_norm} and Lemma~\ref{delta_unbias}, if $N^{\alpha, \beta} = O(\lambda^2k^3 \exp(12\lambda) \ln(n/\rho'))/\eps^4)$, then with probability at least $1-\rho'$, $|W_{nj}(\alpha, b) - W_{nj}(\beta, b)-U^{\alpha, \beta}(j, b)| \le \eps, \;\forall j\in[n-1],\;\forall b\in[k]$. Since $\D(\W, \Theta)$ is $\delta$-unbiased with $\delta=e^{-2\lambda}/k$, in order to have $N^{\alpha, \beta}$ samples for a given $(\alpha, \beta)$ pair, we need the total number of samples to satisfy $N =O(N^{\alpha, \beta}/\delta)$. Theorem~\ref{pairwise_theorem} then follows by setting $\rho' = \rho/(nk^2)$ and taking a union bound over all $(\alpha, \beta)$ pairs and all $n$ variables.

\section{Experiments}
In both simulations below, we assume that the external field is zero. Sampling is done via exactly computing the distribution.

{\bf Learning Ising models.} In Figure~\ref{incoherence_example} we construct a diamond-shape graph and show that the incoherence value at Node 1 becomes bigger than 1 (and hence violates the incoherence condition in~\citep{RWL10}) when we increase the graph size $n$ and edge weight $a$. We then run 100 times of Algorithm~\ref{ising_learning_algo} and plot the fraction of runs that exactly recovers the underlying graph structure. In each run we generate a different set of samples. The result shown in Figure~\ref{incoherence_example} is consistent with our analysis and also indicates that our conditions for graph recovery are weaker than those in~\citep{RWL10}.
\begin{figure}[ht]

\centering
\begin{subfigure}	
    \centering
	\includegraphics[scale=0.5]{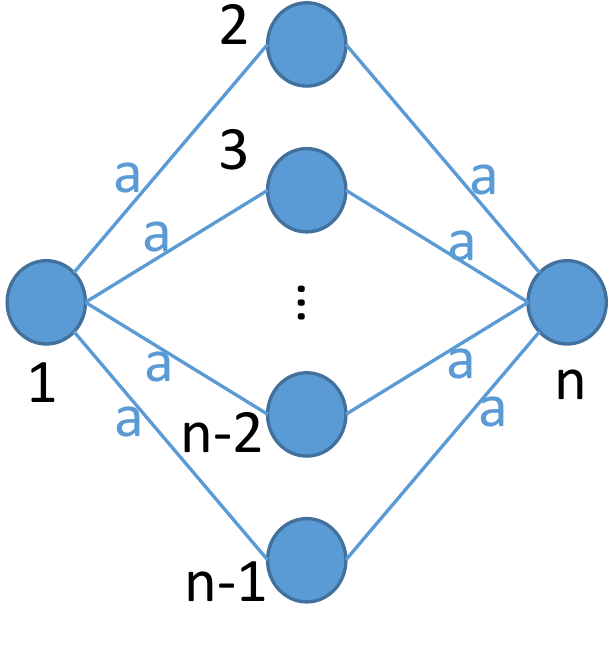}
\end{subfigure}
\begin{subfigure}
    \centering
	\includegraphics[width=0.45\textwidth]{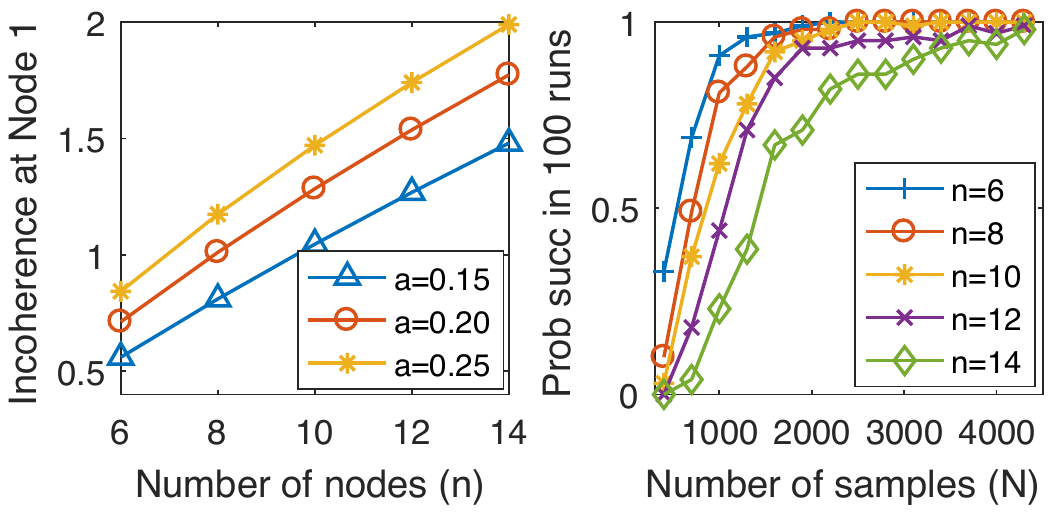}
\end{subfigure}

\caption{{\bf Left}: The graph structure used in this simulation. It has $n$ nodes and $2(n-2)$ edges. Every edge has the same weight $a>0$. {\bf Middle}: Incoherence value at Node 1. {\bf Right}: We simulate 100 runs of Algorithm~\ref{ising_learning_algo} for the diamond graph with edge weight $a=0.2$.} 
\label{incoherence_example}

\end{figure}

{\bf Learning general pairwise graphical models.} We compare our algorithm (Algorithm~\ref{pairwise_learning_algo}) with the Sparsitron algorithm in~\citep{KM17} on a two-dimensional 3-by-3 grid (shown in Figure~\ref{grid_graph}). We experiment two alphabet sizes: $k=4,6$. For each value of $k$, we simulate both algorithms 100 runs, and in each run we generate random $W_{ij}$ matrices with entries $\pm 0.2$. As shown in the Figure~\ref{non_binary_comp}, our algorithm requires fewer samples for successfully recovering the graphs. More details about this experiment can be found in Appendix~\ref{sec:more_exp}.

\begin{figure}[ht]
\centering
\begin{subfigure}	
    \centering
	\includegraphics[scale=0.5]{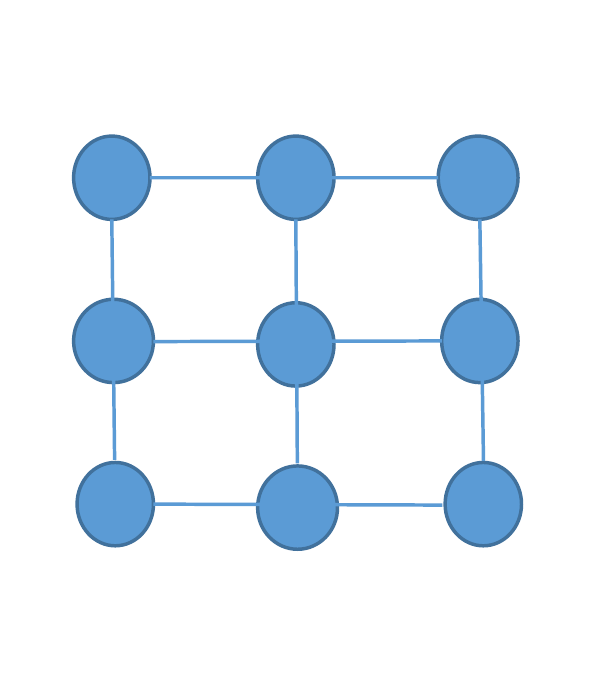}
\end{subfigure}
\begin{subfigure}
    \centering
	\includegraphics[width=0.45\textwidth]{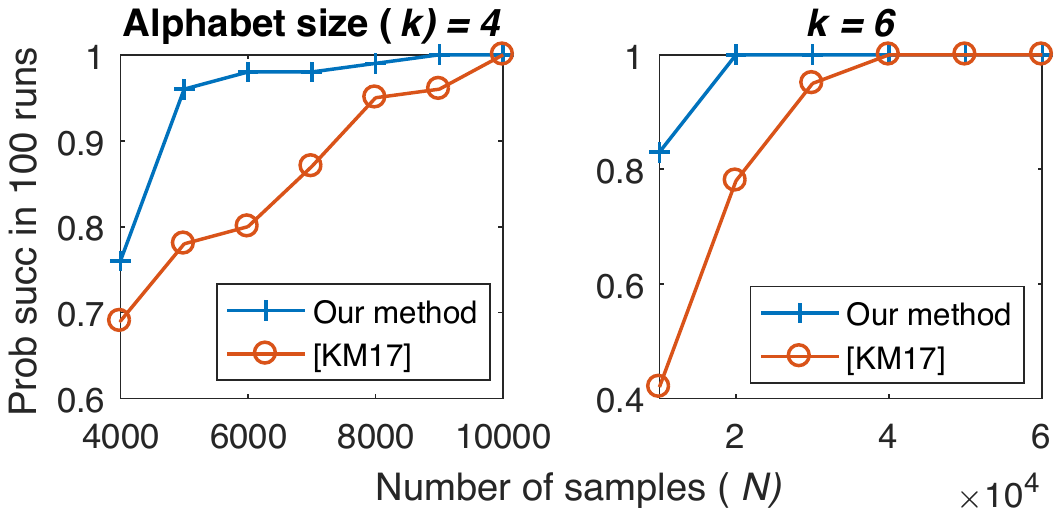}
\end{subfigure}
\caption{{\bf Left}: A two-dimensional 3-by-3 grid graph used in the simulation. {\bf Middle and right}: Our algorithm needs fewer samples than the Sparsitron algorithm for graph recovery.} 
\label{grid_graph}
\label{non_binary_comp}
\end{figure}
\section{Conclusion}
We have shown that the $\ell_{2,1}$-constrained logistic regression can recover the Markov graph of any discrete pairwise graphical model from i.i.d. samples. For the Ising model, it reduces to the $\ell_1$-constrained logistic regression. This algorithm has better sample complexity than the previous state-of-the-art result ($k^4$ versus $k^5$), and can run in $\tilde{O}(n^2)$ time. One interesting direction for future work is to see if the $1/\eta^4$ dependency in the sample complexity can be improved. 
Another interesting direction is to consider MRFs with higher-order interactions. 

\newpage
\bibliography{ref.bib}
\bibliographystyle{apalike}

\appendix

\section{Related work on learning Ising models}\label{sec:comps}

For the special case of learning Ising models (i.e., binary variables), we compare the sample complexity among different graph recovery algorithms in Table~\ref{table:comps}. 
{\renewcommand{\arraystretch}{1}%
\begin{table} [htbp]
\begin{center}
 \begin{tabular}{|p{3.3cm}|p{7.1cm}|p{3.8cm}|} 
 \hline
Paper  & Assumptions & Sample complexity ($N$) \\
 \hline
	\multirow{5}{3.3cm}{Information-theoretic lower bound \citep{SW12}} & 1. Model width $\le \lambda$, and $\lambda\ge 1$ &$\max\{\frac{\ln(n)}{2\eta \tanh(\eta)}$,\\
	& 2. Degree $\le d$ & $\frac{d}{8}\ln(\frac{n}{8d})$,\\
	& 3. Minimum edge weight $\ge \eta>0$ & \multirow{2}{3.8cm}{$\frac{\exp(\lambda)\ln(nd/4-1)}{4\eta d \exp(\eta)}\}$}\\ 
	& 4. External field $=0$ &\\
	&&\\
 \hline
	\multirow{11}{3.3cm}{$\ell_1$-regularized logistic regression~\citep{RWL10}} &$Q^*$ is the Fisher information matrix, & \multirow{11}{3.8cm}{$O(d^3\ln(n))$} \\ 
	& $S$ is set of neighbors of a given variable. &\\
	& 1. Dependency: $\exists$ $C_{\min}>0$ such that &\\
	& \hspace{2.1cm} eigenvalues of $Q^*_{SS}\ge C_{\min}$ & \\
	& 2. Incoherence: $\exists$ $\alpha \in(0,1]$ such that &\\
	& \hspace{2.1cm}$\norm{Q^*_{S^cS}(Q^*_{SS})^{-1}}_{\infty}\le 1-\alpha$ &\\
	& 3. Regularization parameter: &\\
	& \hspace{2.1cm} $\lambda_N\ge \frac{16(2-\alpha)}{\alpha}\sqrt{\frac{\ln(n)}{N}}$ &\\
	& 4. Minimum edge weight $\ge 10\sqrt{d}\lambda_N/C_{\min}$ &\\
	& 5. External field $=0$ &\\
	& 6. Probability of success $\ge 1-2e^{-O(\lambda_N^2N)}$ &\\
 \hline
	\multirow{4}{3.3cm}{Greedy algorithm~\citep{Bre15}} & 1. Model width $\le \lambda$ & \multirow{4}{3.8cm}{$O(\exp(\frac{\exp(O(d\lambda))}{\eta^{O(1)}})\ln(\frac{n}{\rho}))$}\\
	& 2. Degree $\le d$ &\\
	& 3. Minimum edge weight $\ge \eta>0$ &\\
	& 4. Probability of success $\ge 1-\rho$ &\\
\hline
\multirow{5}{3.3cm}{Interaction Screening~\citep{VMLC16}} & 1. Model width $\le \lambda$ &\\
	& 2. Degree $\le d$ &$O(\max\{d, \frac{1}{\eta^2}\}$\\
	& 3. Minimum edge weight $\ge \eta>0$ &$d^3\exp(6\lambda)\ln(\frac{n}{\rho}))$\\
	& 4. Regularization parameter $=4\sqrt{\frac{\ln(3n^2/\rho)}{N}}$ &\\
	& 5. Probability of success $\ge 1-\rho$ &\\
\hline
\multirow{5}{3.3cm}{$\ell_1$-regularized logistic regression~\citep{LVMC18}} & 1. Model width $\le \lambda$ &\\
    & 2. Degree $\le d$ &$O(\max\{d, \frac{1}{\eta^2}\}$ \\
	& 3. Minimum edge weight $\ge \eta>0$ &$d^3\exp(8\lambda)\ln(\frac{n}{\rho}))$\\
	& 4. Regularization parameter $O(\sqrt{\frac{\ln(n^2/\rho)}{N}})$ &\\
	& 5. Probability of success $\ge 1-\rho$ &\\
\hline
\multirow{4}{3.3cm}{$\ell_1$-constrained logistic regression~\citep{RH17}} & 1. Model width $\le \lambda$  & \multirow{3}{3.8cm}{$O(\frac{\lambda^2\exp(8\lambda)}{\eta^4}\ln(\frac{n}{\rho}))$} \\
	&2. Minimum edge weight $\ge\eta>0$ &\\
	&3. Probability of success $\ge 1-\rho$ &\\
	&&\\
\hline
	\multirow{3}{3.3cm}{Sparsitron~\citep{KM17}} & 1. Model width $\le \lambda$  & \multirow{3}{3.8cm}{$O(\frac{\lambda^2\exp(12\lambda)}{\eta^4}\ln(\frac{n}{\rho\eta}))$} \\
	&2. Minimum edge weight $\ge\eta>0$ &\\
	&3. Probability of success $\ge 1-\rho$ &\\
\hline
	\multirow{3}{3.3cm}{$\ell_1$-constrained logistic regression [{\bf this paper}]} & 1. Model width $\le \lambda$ & \multirow{3}{3.8cm}{$O(\frac{\lambda^2\exp(12\lambda)}{\eta^4}\ln(\frac{n}{\rho}))$}\\
	&2. Minimum edge weight $\ge\eta>0$ &\\ 
	&3. Probability of success $\ge 1-\rho$ &\\
\hline 
\end{tabular}
\end{center}
\vspace{-0.35cm}
\caption{Sample complexity comparison for learning Ising models. The second column lists the assumptions in their analysis. Given $\lambda$ and $\eta$, the degree is bounded by $d\le \lambda/\eta$, with equality achieved when every edge has the same weight and there is no external field.}
\label{table:comps}
\end{table}}

Note that the algorithms in~\citep{RWL10, Bre15, VMLC16, LVMC18} are designed for learning Ising models instead of general pairwise graphical models. Hence, they are not presented in Table~\ref{table:comps_non_binary}. 

As mentioned in the Introduction, \citet{RWL10} consider $\ell_1$-regularized logistic regression for learning Ising models in the high-dimensional setting. They require incoherence assumptions that ensure, via conditions on sub-matrices of the Fisher information matrix, that sparse predictors of each node are hard to confuse with a false set. Their analysis obtains significantly better sample complexity compared to what is possible when these extra assumptions are not imposed (see \citep{BM09}). Others have also considered $\ell_1$-regularization (e.g., \citep{LGK07,YL07, BGA08, JRVS11, YALR12, AE12}) for structure learning of Markov random fields but they all require certain assumptions about the graphical model and hence their methods do not work for general graphical models. The analysis of~\citep{RWL10} is of essentially the same convex program as this work (except that we have an additional thresholding procedure). The main difference is that they obtain a better sample guarantee but require significantly more restrictive assumptions. 

In the general setting with no restrictions on the model, \citet{SW12} provide an information-theoretic lower bound on the number of samples needed for graph recovery. This lower bound depends logarithmically on $n$, and exponentially on the width $\lambda$, and (somewhat inversely) on the minimum edge weight $\eta$. We will find these general broad trends, but with important differences, in the other algorithms as well.

\citet{Bre15} provides a greedy algorithm and shows that it can learn with sample complexity that grows logarithmically in $n$, but {\em doubly} exponentially in the width $\lambda$ and also exponentially in $1/\eta$. It is thus suboptimal with respect to its dependence on $\lambda$ and $\eta$. 

\citet{VMLC16} propose a new convex program (i.e. different from logistic regression), and for this they are able to show a single-exponential dependence on $\lambda$. There is also low-order polynomial dependence on $\lambda$ and $1/\eta$.
Note that given $\lambda$ and $\eta$, the degree is bounded by $d\le \lambda/\eta$ (the equality is achieved when every edge has the same weight and there is no external field). Therefore, their sample complexity can scale as worse as $1/\eta^5$. Later, the same authors~\citep{LVMC18} prove a similar result for the $\ell_1$-regularized logistic regression using essentially the same proof technique as~\citep{VMLC16}.

\citet{RH17} analyze the $\ell_1$-constrained logistic regression for learning Ising models. Their sample complexity\footnote{Lemma 5.21 in~\citep{RH17} has a typo: The upper bound should depend on $\exp(2\lambda)$. Accordingly, Theorem 5.23 should depend on $\exp(4\lambda)$ rather than $\exp(3\lambda)$.} has a better dependence on $1/\eta$ ($1/\eta^4$ vs $1/\eta^5$) than~\citep{LVMC18}. However, na\"ively extending their analysis to the $\ell_{2,1}$-constrained logistic regression will give a sample complexity exponential in the alphabet size\footnote{This is because the Hessian of the population loss has a lower bound that depends on $\exp(-2\lambda\sqrt{k})$ for $\norm{w}_{2,1} \le \lambda \sqrt{k}$ and $\norm{x}_{2, \infty} \le 1$.}.

In this paper, we analyze the $\ell_{2,1}$-constrained logistic regression for learning discrete pairwise graphical models with general alphabet. Our proof uses a sharp generalization bound for constrained logistic regression, which is different from~\citep{LVMC18, RH17}. For Ising models (shown in Table~\ref{table:comps}), our sample complexity matches that of~\citep{KM17}. For non-binary pairwise graphical models (shown in Table~\ref{table:comps_non_binary}), our sample complexity improves the state-of-the-art result.

\section{Proof of Lemma~\ref{KL_lemma} and Lemma~\ref{pairwise_KL_lemma}}\label{sec:proof_kl_lemma}
The proof of Lemma~\ref{KL_lemma} relies on the following lemmas. The first lemma is a generalization error bound for any Lipschitz loss of linear functions with bounded $\norm{w}_{1}$ and $\norm{x}_{\infty}$. 

\begin{lemma} (see, e.g., Corollary 4 of~\citep{KST09} and Theorem 26.15 of~\citep{SSBD14})
Let $\D$ be a distribution on $\mathcal{X}\times\mathcal{Y}$, where $\mathcal{X}=\{x\in\R^n: \norm{x}_{\infty}\le X_{\infty}\}$, and $\mathcal{Y}=\{-1,1\}$. Let $\ell:\R\to\R$ be a loss function with Lipschitz constant $L_{\ell}$. 
Define the expected loss $\Loss(w)$ and the empirical loss $\hat{\Loss}(w)$ as 
\begin{equation}
\Loss(w) = \E_{(x,y)\sim\D} \ell(y\inprod{w,x}), \quad \hat{\Loss}(w) = \frac{1}{N} \sum_{i=1}^N \ell(y^i\inprod{w,x^i}), 
\end{equation}
where $\{x^i,y^i\}_{i=1}^N$ are i.i.d. samples from distribution $\D$. Define $\mathcal{W}=\{w\in\R^n: \norm{w}_1\le W_1\}$.
Then with probability at least $1-\rho$ over the samples, we have that for all $w\in\mathcal{W}$,
\begin{equation}
\Loss(w) \le \hat{\Loss}(w) +  2L_{\ell}X_{\infty}W_1\sqrt{\frac{2\ln(2n)}{N}} + L_{\ell}X_{\infty}W_1\sqrt{\frac{2\ln(2/\rho)}{N}}.
\end{equation}
\label{gen_bound}
\end{lemma}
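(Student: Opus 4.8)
The plan is to prove this via the standard Rademacher-complexity route for uniform convergence, specialized to the $\ell_1/\ell_\infty$ geometry so that the dimension enters only logarithmically. Write the composed function class as $\mathcal{G} = \{(x,y)\mapsto \ell(y\inprod{w,x}): w\in\mathcal{W}\}$. First I would invoke the classical symmetrization bound together with McDiarmid's bounded-differences inequality: with probability at least $1-\rho$ over the draw of the $N$ samples, $\sup_{w\in\mathcal{W}}(\Loss(w)-\hat{\Loss}(w)) \le 2\mathcal{R}_N(\mathcal{G}) + L_\ell X_\infty W_1\sqrt{2\ln(2/\rho)/N}$, where $\mathcal{R}_N(\mathcal{G})$ is the empirical Rademacher complexity. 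The concentration term is legitimate because replacing a single sample changes $\hat{\Loss}$, hence the supremum, by at most $2L_\ell X_\infty W_1/N$: the loss is $L_\ell$-Lipschitz and, by H\"older, $|\inprod{w,x}|\le\norm{w}_1\norm{x}_\infty\le W_1 X_\infty$.

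Second, to strip away the loss I would apply Talagrand's contraction (Ledoux--Talagrand) lemma, which peels off the $L_\ell$-Lipschitz map $\ell$ at the cost of a factor $L_\ell$, reducing the problem to bounding the Rademacher complexity of the linear class $\mathcal{F}=\{x\mapsto\inprod{w,x}:\norm{w}_1\le W_1\}$. Thus it remains to control $\mathcal{R}_N(\mathcal{F})=\E_\sigma\sup_{\norm{w}_1\le W_1}\frac{1}{N}\sum_i\sigma_i\inprod{w,x^i}$, where $\sigma_1,\dots,\sigma_N$ are i.i.d. Rademacher signs.

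Third --- and this is the crux --- I would exploit the duality between $\norm{\cdot}_1$ and $\norm{\cdot}_\infty$. Since the $\ell_1$ ball of radius $W_1$ is the convex hull of its $2n$ vertices $\{\pm W_1 e_j\}_{j\in[n]}$ and the objective is linear in $w$, the supremum is attained at a vertex, giving $\mathcal{R}_N(\mathcal{F}) = \frac{W_1}{N}\E_\sigma\max_{j\in[n]}\bigl|\sum_i\sigma_i x^i_j\bigr|$. Each inner sum is a zero-mean sub-Gaussian variable with variance proxy $\sum_i (x^i_j)^2 \le N X_\infty^2$, so Massart's finite-class maximal inequality over these $2n$ coordinate vectors yields $\E_\sigma\max_{j}\bigl|\sum_i\sigma_i x^i_j\bigr|\le \sqrt{N}\,X_\infty\sqrt{2\ln(2n)}$, hence $\mathcal{R}_N(\mathcal{F}) \le X_\infty W_1\sqrt{2\ln(2n)/N}$. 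This is exactly where the $\ell_\infty$ constraint on $x$ matters: it delivers $\sqrt{\ln(2n)}$ rather than polynomial dependence on $n$. Combining the factor $2$ from symmetrization, the factor $L_\ell$ from contraction, and this estimate produces the leading term $2L_\ell X_\infty W_1\sqrt{2\ln(2n)/N}$, while the bounded-differences term supplies the additive $L_\ell X_\infty W_1\sqrt{2\ln(2/\rho)/N}$, matching the claim. The main obstacle is the sharp accounting in the maximal inequality: one must extract the clean $\sqrt{\ln(2n)}$ factor (and the correct constants) rather than settle for a loose dimension-dependent Cauchy--Schwarz bound, which would destroy the logarithmic scaling that the rest of the paper relies on.
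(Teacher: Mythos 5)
Your proposal is correct and is precisely the standard argument underlying the results the paper cites for this lemma (Corollary 4 of \citep{KST09} and Theorem 26.15 of \citep{SSBD14}): the paper offers no proof of its own, and the cited proofs proceed exactly as you do, via symmetrization with McDiarmid's inequality, Talagrand contraction to strip the $L_{\ell}$-Lipschitz loss, reduction of the supremum over the $\ell_1$ ball to its $2n$ vertices by linearity, and Massart's finite-class maximal inequality, yielding the same constants. One minor point: you state the high-probability bound with the \emph{empirical} Rademacher complexity, which in general requires an extra concentration step, but here this is harmless since your bound $X_{\infty}W_1\sqrt{2\ln(2n)/N}$ holds deterministically for every sample realization and therefore also bounds the expected Rademacher complexity used in the one-shot McDiarmid argument.
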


\begin{lemma}(Pinsker's inequality)
Let $D_{KL}(a||b):= a\ln(a/b) + (1-a)\ln((1-a)/(1-b))$ denote the KL-divergence between two Bernoulli distributions $(a,1-a)$, $(b,1-b)$ with $a,b\in [0,1]$. Then
\begin{equation}
(a-b)^2 \le \frac{1}{2}D_{KL}(a||b).
\end{equation}
\label{pinsker}
\end{lemma}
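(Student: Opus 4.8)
The plan is to fix $a \in [0,1]$ and study the single-variable function $g(b) := \tfrac{1}{2}D_{KL}(a||b) - (a-b)^2$ on $b\in[0,1]$, showing that it attains its global minimum, with value $0$, at $b=a$. Since $g(b)\ge 0$ for all $b$ is exactly the claimed inequality, establishing this suffices. This reduces a two-variable inequality to a routine one-variable calculus argument.

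First I would record the base case $g(a)=0$, which holds because $D_{KL}(a||a)=0$ and $(a-a)^2=0$. Next I would differentiate in $b$ on the open interval $(0,1)$, where all logarithms are finite. A direct computation gives $\frac{\partial}{\partial b}\,\tfrac12 D_{KL}(a||b) = \frac{b-a}{2b(1-b)}$ and $\frac{\partial}{\partial b}(a-b)^2 = 2(b-a)$, so that
\begin{equation}
g'(b) = (b-a)\left(\frac{1}{2b(1-b)} - 2\right).
\end{equation}

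The key analytic observation is that $b(1-b)\le \tfrac14$ for every $b\in[0,1]$, which forces $\frac{1}{2b(1-b)} \ge 2$ and hence makes the parenthesized factor nonnegative on $(0,1)$. Consequently $g'(b)$ has the same sign as $b-a$: it is nonpositive for $b<a$ and nonnegative for $b>a$. Thus $g$ decreases up to $b=a$ and increases afterward, so $b=a$ is its global minimizer on $(0,1)$, giving $g(b)\ge g(a)=0$ throughout the interior.

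Finally I would dispatch the boundary values $b\in\{0,1\}$ separately, since the KL term may be infinite there: if $a\neq b$ then $D_{KL}(a||b)=+\infty$ while $(a-b)^2\le 1$, so the inequality is trivial, and if $a=b$ both sides vanish. No step here is a genuine obstacle; the only mild care required is the sign analysis of $g'$ via the elementary bound $b(1-b)\le\tfrac14$, and the separate treatment of the infinite-KL boundary cases.
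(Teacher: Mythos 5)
Your proof is correct. Note that the paper itself gives no proof of this lemma: it states Pinsker's inequality as a known classical fact (it is the two-point/Bernoulli case of the general Pinsker inequality between KL divergence and total variation), so there is no internal argument to compare against. Your calculus argument is a standard and fully rigorous way to establish the binary case: the derivative computation $\frac{\partial}{\partial b}D_{KL}(a\|b) = \frac{b-a}{b(1-b)}$ is right, the bound $b(1-b)\le\frac14$ correctly forces $g'(b)$ to have the sign of $b-a$ so that $b=a$ is the global minimizer with $g(a)=0$, and you properly handle the boundary cases $b\in\{0,1\}$ where the KL term is infinite (the remaining degenerate cases $a\in\{0,1\}$ with $b$ interior are also covered by your computation under the usual convention $0\ln 0=0$). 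This self-contained argument is, if anything, more than the paper provides.
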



\begin{lemma}
Let $\D$ be a distribution on $\mathcal{X}\times \{-1, 1\}$. For $(X, Y)\sim \D$, $\Pr[Y=1|X=x] = \sigma(\inprod{w^*,x})$, where $\sigma(x) = 1/(1+e^{-x})$ is the sigmoid function. Let $\Loss(w)$ be the expected logistic loss: 
\begin{equation}
\Loss(w) = \E_{(x,y)\sim\D} \ln(1+e^{-y\inprod{w,x}}) = \E_{(x,y)\sim\D}[ -\frac{y+1}{2} \ln(\sigma(\inprod{w,x})) - \frac{1-y}{2} \ln(1-\sigma(\inprod{w,x}))].
\end{equation}
Then for any $w$, we have
\begin{equation}
\Loss(w)-\Loss(w^*) =\E_{(x,y)\sim\D}[D_{KL}(\sigma(\inprod{w^*,x})||\sigma(\inprod{w,x}))],
\end{equation}
where $D_{KL}(a||b):= a\ln(a/b) + (1-a)\ln((1-a)/(1-b))$ denotes the KL-divergence between two Bernoulli distributions $(a,1-a)$, $(b,1-b)$ with $a,b\in [0,1]$. 
\label{logistic_kl}
\end{lemma}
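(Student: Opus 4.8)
The plan is to push the expectation over $y$ inside first, using the tower property, recognize the resulting quantity as the cross-entropy between the true conditional Bernoulli distribution and the predicted one, and then subtract the $w=w^*$ case to convert the cross-entropy into the KL-divergence. There is no genuinely hard step here; the content is entirely in correctly identifying the conditional expectation of the label.

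First I would use the second expression for $\Loss(w)$ already given in the statement,
\[
\Loss(w) = \E_{(x,y)\sim\D}\left[-\frac{y+1}{2}\ln\sigma(\inprod{w,x}) - \frac{1-y}{2}\ln(1-\sigma(\inprod{w,x}))\right],
\]
and condition on $X=x$. Writing $p^* := \sigma(\inprod{w^*,x})$ for the true conditional probability $\Pr[Y=1\mid X=x]$, the key observation is that $\frac{Y+1}{2}$ is the indicator of $\{Y=1\}$ and $\frac{1-Y}{2}$ is the indicator of $\{Y=-1\}$, so that $\E[\frac{Y+1}{2}\mid X=x] = p^*$ and $\E[\frac{1-Y}{2}\mid X=x] = 1-p^*$. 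Applying the tower property and setting $p := \sigma(\inprod{w,x})$, this yields the cross-entropy form
\[
\Loss(w) = \E_{X}\left[-p^*\ln p - (1-p^*)\ln(1-p)\right].
\]
Here $p^*$ depends only on $x$ through $w^*$, while $p$ depends on both $x$ and the argument $w$; crucially $p^*$ is unchanged whether we evaluate $\Loss(w)$ or $\Loss(w^*)$, since the data distribution $\D$ is fixed.

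Second, I would specialize to $w=w^*$ to obtain $\Loss(w^*) = \E_{X}[-p^*\ln p^* - (1-p^*)\ln(1-p^*)]$, and subtract the two expressions to get
\[
\Loss(w) - \Loss(w^*) = \E_{X}\left[p^*\ln\frac{p^*}{p} + (1-p^*)\ln\frac{1-p^*}{1-p}\right],
\]
which is exactly $\E_X[D_{KL}(p^*||p)]$ by the definition of the KL-divergence between the Bernoulli distributions $(p^*,1-p^*)$ and $(p,1-p)$. Substituting back $p^* = \sigma(\inprod{w^*,x})$ and $p = \sigma(\inprod{w,x})$ then gives the claimed identity.

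The only point requiring care is the conditioning argument in the first step: one must verify that taking the conditional expectation of the two indicator coefficients correctly reproduces $p^*$ and $1-p^*$, using that all the randomness in $Y$ given $X=x$ is captured by the hypothesis $\Pr[Y=1\mid X=x] = \sigma(\inprod{w^*,x})$. Once the loss is written as a cross-entropy, the elementary identity \emph{cross-entropy minus entropy equals KL-divergence} does all the remaining work, so I do not anticipate a substantive obstacle.
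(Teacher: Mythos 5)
Your proposal is correct and follows essentially the same route as the paper's proof: both apply the tower property to reduce the conditional expectation of the label terms to $\sigma(\inprod{w^*,x})$ (the paper computes $\E[Y\mid X=x]=2\sigma(\inprod{w^*,x})-1$, which is equivalent to your indicator observation) and then recognize the resulting difference as an expected Bernoulli KL-divergence. The only cosmetic difference is order of operations — you condition first and subtract second, while the paper subtracts the two losses first and then conditions — which changes nothing substantive.
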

\begin{proof}
Simply plugging in the definition of the expected logistic loss $\Loss(\cdot)$ gives
\begin{align}
\Loss(w)- \Loss(w^*) & =  \E_{(x,y)\sim\D}[ -\frac{y+1}{2}  \ln(\sigma(\inprod{w,x})) - \frac{1-y}{2}  \ln(1-\sigma(\inprod{w,x}))] \nonumber \\
&\quad\quad +\E_{(x,y)\sim\D} [ \frac{y+1}{2} \ln(\sigma(\inprod{w^*,x})) + \frac{1-y}{2}  \ln(1-\sigma(\inprod{w^*,x}))] \nonumber\\
& = \E_x \E_{y|x} [ -\frac{y+1}{2} \ln(\sigma(\inprod{w,x})) - \frac{1-y}{2} \ln(1-\sigma(\inprod{w,x}))]  \nonumber \\
&\quad\quad +\E_x \E_{y|x}  [ \frac{y+1}{2} \ln(\sigma(\inprod{w^*,x})) + \frac{1-y}{2} \ln(1-\sigma(\inprod{w^*,x}))] \nonumber\\
& \overset{(a)}{=} \E_x  [-\sigma(\inprod{w^*,x})\ln(\sigma(\inprod{w,x})) - (1-\sigma(\inprod{w^*,x}))\ln(1-\sigma(\inprod{w,x}))] \nonumber\\
&\quad\quad + \E_x [\sigma(\inprod{w^*,x})\ln(\sigma(\inprod{w^*,x})) + (1-\sigma(\inprod{w^*,x}))\ln(1-\sigma(\inprod{w^*,x}))] \nonumber\\
& = \E_x \left[\sigma(\inprod{w^*,x})\ln\left(\frac{\sigma(\inprod{w^*,x}) }{\sigma(\inprod{w,x})}\right)+(1- \sigma(\inprod{w^*,x}))\ln\left(\frac{1-\sigma(\inprod{w^*,x})}{1-\sigma(\inprod{w,x})}\right)\right] \nonumber \\
& = \E_{(x,y)\sim\D}[D_{KL}(\sigma(\inprod{w^*,x})||\sigma(\inprod{w,x}))], \nonumber
\end{align}
where (a) follows from the fact that 
\begin{equation}
E_{y|x}[y] = 1\cdot \Pr[y=1|x]  + (- 1)\cdot \Pr[y=-1|x]  = 2\sigma(\inprod{w^*,x})-1. \nonumber
\end{equation}
\end{proof}

We are now ready to prove Lemma~\ref{KL_lemma} (which is restated below):
\begin{app-lemma}
Let $\D$ be a distribution on $\{-1,1\}^n\times \{-1,1\}$ where for $(X, Y)\sim \D$, $\Pr[Y=1|X=x] = \sigma(\inprod{w^*,x})$. We assume that $\norm{w^*}_1\le 2\lambda$ for a known $\lambda \ge 0$. Given $N$ i.i.d. samples $\{(x^i, y^i)\}_{i=1}^N$, let $\hat{w}$ be any minimizer of the following $\ell_1$-constrained logistic regression problem:
\begin{equation}
\hat{w} \in \arg \min_{w\in\R^n} \frac{1}{N} \sum_{i=1}^N \ln(1+e^{-y^i\inprod{w,x^i}}) \quad \text{\emph{ s.t. }} \norm{w}_1\le 2\lambda.
\end{equation}
Given $\rho\in(0,1)$ and $\eps>0$, suppose that $N=O(\lambda^2(\ln(n/\rho))/\eps^2)$, then with probability at least $1-\rho$ over the samples, we have that $\E_{(x,y)\sim\D}[(\sigma(\inprod{w^*,x}) - \sigma(\inprod{\hat{w},x}))^2] \le \eps$.
\end{app-lemma}

\begin{proof}
We first apply Lemma~\ref{gen_bound} to the setup of Lemma~\ref{KL_lemma}. The loss function $\ell(z) = \ln(1+e^{-z})$ defined above has Lipschitz constant $L_{\ell}=1$. The input sample $x\in\{-1,1\}^n$ satisfies $\norm{x}_{\infty}\le 1$. Let $\mathcal{W}=\{w\in\R^{n\times k}: \norm{w}_{1} \le 2\lambda\}$. According to Lemma~\ref{gen_bound}, with probability at least $1-\rho/2$ over the draw of the training set, we have that for all $w\in\mathcal{W}$, 
\begin{equation}
\Loss(w) \le \hat{\Loss}(w) +  4\lambda \sqrt{\frac{2\ln(2n)}{N}} + 2\lambda\sqrt{\frac{2\ln(4/\rho)}{N}}. \label{risk_bound}
\end{equation}
where $\Loss(w)=\E_{(x,y)\sim\D} \ln(1+e^{-y\inprod{w, x}}) $ and $\hat{\Loss}(w)=\sum_{i=1}^N \ln(1+e^{-y^i\inprod{w, x^i}}) /N$ are the expected loss and empirical loss.

Let $N= C\cdot\lambda^2\ln(8n/\rho)/\eps^2$ for a global constant $C$, then (\ref{risk_bound}) implies that with probability at least $1-\rho/2$, 
\begin{equation}
\Loss(w) \le \hat{\Loss}(w) + \eps, \text{ for all }w\in\mathcal{W}. \label{eps_bound}
\end{equation}

We next prove a concentration result for $\hat{\Loss}(w^*)$. Here $w^*$ is the true regression vector and is assumed to be fixed. First notice that $ \ln(1+e^{-y\inprod{w^*,x}})$ is bounded because $|y\inprod{w^*,x}|\le 2\lambda$. Besides, the $\ln(1+e^{-z})$ has Lipschitz 1, so $|\ln(1+e^{-2\lambda})-\ln(1+e^{2\lambda})|\le 4\lambda$. Hoeffding's inequality gives that $\Pr[\hat{\Loss}(w^*) -  \Loss(w^*) \ge t]  \le e^{-2Nt^2/(4\lambda)^2}$. Let $N= C'\cdot\lambda^2\ln(2/\rho)/\eps^2$ for a global constant $C'$, then with probability at least $1-\rho/2$ over the samples, 
\begin{equation}
\hat{\Loss}(w^*)  \le  \Loss(w^*) + \eps. \label{con_bound}
\end{equation}

Then the following holds with probability at least $1-\rho$:
\begin{equation}
\Loss(\hat{w}) \overset{(a)}{\le} \hat{\Loss}(\hat{w}) +  \eps \overset{(b)}{\le} \hat{\Loss}(w^*) +  \eps \overset{(c)}{\le} \Loss(w^*) + 2\eps, \label{hat_bound}
\end{equation} 
where (a) follows from (\ref{eps_bound}), (b) follows from the fact $\hat{w}$ is the minimizer of $\hat{\Loss}(w)$, and (c) follows from (\ref{con_bound}). 

So far we have shown that $\Loss(\hat{w}) - \Loss(w^*)\le 2\eps$ with probability at least $1-\rho$. The last step is to lower bound $\Loss(\hat{w}) - \Loss(w^*)$ by $\E_{(x,y)\sim\D} (\sigma(\inprod{w^*,x}) - \sigma(\inprod{w,x}))^2$ using Lemma~\ref{pinsker} and Lemma~\ref{logistic_kl}.
\begin{align}
\E_{(x,y)\sim\D} (\sigma(\inprod{w^*,x}) - \sigma(\inprod{w,x}))^2 &\overset{(d)}{\le} \E_{(x,y)\sim\D} D_{KL}(\sigma(\inprod{w^*,x})||\sigma(\inprod{w,x}))/2 \nonumber\\
& \overset{(e)}{=} (\Loss(\hat{w})- \Loss(w^*))/2 \nonumber\\
& \overset{(f)}{\le} \eps, \nonumber
\end{align}
where (d) follows from Lemma~\ref{pinsker}, (e) follows from Lemma~\ref{logistic_kl}, and (f) follows from (\ref{hat_bound}). Therefore, we have that $\E_{(x,y)\sim\D} (\sigma(\inprod{w^*,x}) - \sigma(\inprod{w,x}))^2 \le \eps$ with probability at least $1-\rho$, if the number of samples satisfies $N= O(\lambda^2\ln(n/\rho)/\eps^2)$.
\end{proof}

The proof of Lemma~\ref{pairwise_KL_lemma} is identical to the proof of Lemma~\ref{KL_lemma}, except that it relies on the following generalization error bound for Lipschitz loss functions with bounded $\ell_{2,1}$-norm. 
\begin{lemma}
Let $\D$ be a distribution on $\mathcal{X}\times\mathcal{Y}$, where $\mathcal{X}=\{x\in\R^{n\times k}: \norm{x}_{2, \infty}\le X_{2,\infty}\}$, and $\mathcal{Y}=\{-1,1\}$. Let $\ell:\R\to\R$ be a loss function with Lipschitz constant $L_{\ell}$. Define the expected loss $\Loss(w)$ and the empirical loss $\hat{\Loss}(w)$ as 
\begin{equation}
\Loss(w) = \E_{(x,y)\sim\D} \ell(y\inprod{w,x}), \quad \hat{\Loss}(w) = \frac{1}{N} \sum_{i=1}^N \ell(y^i\inprod{w,x^i}), 
\end{equation}
where $\{x^i,y^i\}_{i=1}^N$ are i.i.d. samples from distribution $\D$. Define $\mathcal{W}=\{w\in\R^{n\times k}: \norm{w}_{2,1}\le W_{2,1}\}$.
Then with probability at least $1-\rho$ over the draw of $N$ samples, we have that for all $w\in\mathcal{W}$,
\begin{equation}
\Loss(w) \le \hat{\Loss}(w)  + 2L_{\ell}X_{2,\infty}W_{2,1}\sqrt{\frac{6\ln(n)}{N}} + L_{\ell}X_{2,\infty}W_{2,1}\sqrt{\frac{2\ln(2/\rho)}{N}}.
\end{equation}
\label{l21_gen_bound}
\end{lemma}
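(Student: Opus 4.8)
The plan is to follow the standard Rademacher-complexity route for generalization bounds and reduce the statement to a single Rademacher complexity computation for the $\ell_{2,1}$-constrained linear class, exactly paralleling the proof of Lemma~\ref{gen_bound} but replacing the $\ell_1/\ell_\infty$ duality with $\ell_{2,1}/\ell_{2,\infty}$ duality. First I would invoke a uniform deviation bound for bounded-difference functionals: since $|\inprod{w,x}|\le \norm{w}_{2,1}\norm{x}_{2,\infty}\le W_{2,1}X_{2,\infty}$ on the domain, the functional $\Phi(S)=\sup_{w\in\mathcal{W}}(\Loss(w)-\hat{\Loss}_S(w))$ changes by at most $2L_\ell W_{2,1}X_{2,\infty}/N$ when a single sample is altered (using $|\ell(a)-\ell(b)|\le L_\ell(|a|+|b|)$), so McDiarmid's inequality gives $\Phi(S)\le \E_S\Phi(S) + L_\ell X_{2,\infty}W_{2,1}\sqrt{2\ln(2/\rho)/N}$ with probability at least $1-\rho$, accounting for the last term of the claim. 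Standard symmetrization then yields $\E_S\Phi(S)\le 2\mathfrak{R}_N(\ell\circ\mathcal{W})$, and Talagrand's contraction lemma removes the $L_\ell$-Lipschitz loss, leaving $\E_S\Phi(S)\le 2L_\ell\,\mathfrak{R}_N(\mathcal{W})$ with $\mathcal{W}=\{x\mapsto\inprod{w,x}:\norm{w}_{2,1}\le W_{2,1}\}$.

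The core of the argument is to bound the empirical Rademacher complexity of the linear class. Using that $\norm{\cdot}_{2,1}$ and $\norm{\cdot}_{2,\infty}$ are dual norms, I would write
\[
\mathfrak{R}_N(\mathcal{W}) = \frac{1}{N}\,\E_\sigma \sup_{\norm{w}_{2,1}\le W_{2,1}} \inprod{w,\sum_{i=1}^N\sigma_i x^i} = \frac{W_{2,1}}{N}\,\E_\sigma\norm{\sum_{i=1}^N\sigma_i x^i}_{2,\infty},
\]
and since $\norm{\cdot}_{2,\infty}$ of a matrix is the maximum over rows $j\in[n]$ of the $\ell_2$ norm of that row, the problem reduces to bounding $\E_\sigma\max_{j\in[n]}\norm{v_j}_2$, where $v_j=\sum_{i}\sigma_i x^i(j,:)\in\R^k$ and each summand satisfies $\norm{x^i(j,:)}_2\le X_{2,\infty}$.

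For this maximal inequality I would use the exponential-moment (soft-max) method. For each fixed $j$, $v_j$ is a sum of independent bounded symmetric vectors, so $\E_\sigma\norm{v_j}_2\le (\sum_i\norm{x^i(j,:)}_2^2)^{1/2}\le X_{2,\infty}\sqrt{N}$, and flipping a single $\sigma_i$ changes $\norm{v_j}_2$ by at most $2X_{2,\infty}$, so by bounded differences $\norm{v_j}_2$ is sub-Gaussian around its mean with variance proxy $NX_{2,\infty}^2$. Combining these gives $\E_\sigma e^{\beta\norm{v_j}_2}\le \exp(\beta X_{2,\infty}\sqrt{N}+\beta^2 NX_{2,\infty}^2/2)$; then applying $\E_\sigma\max_j\norm{v_j}_2\le \beta^{-1}\ln(\sum_j \E_\sigma e^{\beta\norm{v_j}_2})$ and optimizing over $\beta$ yields $\E_\sigma\max_{j}\norm{v_j}_2\le X_{2,\infty}\sqrt{6N\ln n}$. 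Plugging back gives $2L_\ell\,\mathfrak{R}_N(\mathcal{W})\le 2L_\ell X_{2,\infty}W_{2,1}\sqrt{6\ln(n)/N}$, the middle term of the claim.

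The step I expect to be the main obstacle — and the crux of the paper's $k^4$ (rather than $k^5$) improvement — is this last maximal inequality: controlling the maximum over the $n$ groups while keeping the bound \emph{independent} of the alphabet size $k$. The key point is that the within-group dimension enters only through $\E_\sigma\norm{v_j}_2\le X_{2,\infty}\sqrt{N}$, which is governed by the $\ell_2$ norm of each row rather than by $\sqrt{k}$; the price for the maximum over groups is only the $\sqrt{\ln n}$ factor, exactly as in the $\ell_\infty$ bound of Lemma~\ref{gen_bound}. Pinning down the precise constant $\sqrt{6}$ just requires care in combining the mean term $X_{2,\infty}\sqrt{N}$ with the sub-Gaussian tail term during the soft-max optimization.
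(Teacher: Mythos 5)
Your proof is correct in substance but takes a genuinely different route from the paper's. The paper obtains the key Rademacher complexity bound for the linear class $\{x \mapsto \inprod{w,x} : \norm{w}_{2,1}\le W_{2,1}\}$ essentially as a black box: it cites Corollary 14 of \citep{KST12} together with Theorem 1 of \citep{KST09} (strong convexity of the squared $(2,p)$ group norm with $p$ near $1$, combined with $\norm{w}_{2,p}\le \norm{w}_{2,1}$) to get the $k$-independent bound $X_{2,\infty}W_{2,1}\sqrt{6\ln(n)/N}$, and then quotes the standard Rademacher-based generalization bound for bounded Lipschitz losses (\citep{BM02}; Theorem 26.5 of \citep{SSBD14}) rather than re-deriving it via McDiarmid, symmetrization, and contraction as you do. Your replacement of the strong-convexity machinery by a direct soft-max maximal inequality --- Jensen giving $\E_{\sigma}\norm{v_j}_2 \le X_{2,\infty}\sqrt{N}$, bounded differences on the Rademacher cube giving sub-Gaussianity of $\norm{v_j}_2$ with variance proxy $NX_{2,\infty}^2$, then optimizing $\beta$ --- is valid, and it has the virtue of making transparent exactly why $k$ never enters (only the row $\ell_2$ norms matter, while the maximum over the $n$ groups costs only $\sqrt{\ln n}$), whereas in the paper this fact is hidden inside the cited group-norm result. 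The one blemish is the constant: your method yields $\E_{\sigma}\max_j\norm{v_j}_2 \le X_{2,\infty}\bigl(\sqrt{N} + \sqrt{2N\ln n}\bigr)$, and the inequality $1+\sqrt{2\ln n}\le\sqrt{6\ln n}$ holds for $n\ge 3$ but fails marginally at $n=2$ because of the additive mean term; so as written you recover the lemma's displayed constant only for $n\ge 3$ (harmless in the intended regime --- the paper itself assumes $n\ge 3$ elsewhere when using the same group-norm mirror map --- but worth a remark if the lemma is to be stated for all $n$). The remaining steps --- McDiarmid with bounded differences $2L_{\ell}W_{2,1}X_{2,\infty}/N$ yielding the $\sqrt{2\ln(2/\rho)/N}$ term, the $\ell_{2,1}/\ell_{2,\infty}$ duality identity, and contraction to strip the factor $L_{\ell}$ --- all check out.
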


Lemma~\ref{l21_gen_bound} can be readily derived from the existing results. First, notice that the dual norm of $\norm{\cdot}_{2,1}$ is $\norm{\cdot}_{2, \infty}$. Using Corollary 14 in~\citep{KST12}, Theorem 1 in~\citep{KST09}, and the fact that $\norm{w}_{2,q}\le \norm{w}_{2,1}$ for $q\ge 1$, we conclude that the Rademacher complexity of the function class $\mathcal{F}:=\{x\to \inprod{w,x}: \norm{w}_{2,1}\le W_{2,1}\}$ is at most $X_{2,\infty}W_{2,1}\sqrt{6\ln(n)/N}$. We can then obtain the standard Rademacher-based generalization bound (see, e.g.,~\citep{BM02} and Theorem 26.5 in~\citep{SSBD14}) for bounded Lipschitz loss functions. 

We omit the proof of Lemma~\ref{pairwise_KL_lemma} since it is the same as that of Lemma~\ref{KL_lemma}.

\section{Proof of Lemma~\ref{marginal_unbiased}}
Lemma~\ref{marginal_unbiased} is restated below.
\begin{app-lemma}
Let $\D$ be a $\delta$-unbiased distribution on $S^n$, where $S$ is the alphabet set. For $X\sim\D$, any $i\in[n]$, the distribution of $X_{-i}$ is also $\delta$-unbiased.
\end{app-lemma}

\begin{proof}
For any $j\neq i \in [n]$, any $a\in S$, and any $x\in S^{n-2}$, we have
\begin{align}
\Pr[X_j = a | X_{[n]\backslash\{i,j\}}=x] & = \sum_{b\in S} \Pr[X_j= a, X_i = b | X_{[n]\backslash\{i,j\}}=x] \nonumber\\
& = \sum_{b\in S} \Pr[X_i=b |  X_{[n]\backslash\{i,j\}}=x]\cdot \Pr[X_j=a | X_i=b, X_{[n]\backslash\{i,j\}}=x] \nonumber \\
& \overset{(a)}{\ge} \delta  \sum_{b\in S} \Pr[X_i=b |  X_{[n]\backslash\{i,j\}}=x] \nonumber\\
& = \delta, \label{margin_delta}
\end{align}
where (a) follows from the fact that $X\sim\D$ and $\D$ is a $\delta$-unbiased distribution. Since (\ref{margin_delta}) holds for any $j\neq i \in [n]$, any $a\in S$, and any $x\in S^{n-2}$, by definition, the distribution of $X_{-i}$ is $\delta$-unbiased.
\end{proof}

\section{Proof of Lemma~\ref{delta_unbias}}
The lemma is restated below, followed by its proof.
\begin{app-lemma}
Let $\D(\W, \Theta)$ be a pairwise graphical model distribution with alphabet size $k$ and width $\lambda(\W, \Theta)$. Then $\D(\W, \Theta)$ is $\delta$-unbiased with $\delta=e^{-2\lambda(\W, \Theta)}/k$. Specifically, an Ising model distribution $\D(A, \theta)$ is $e^{-2\lambda(A, \theta)}/2$-unbiased. 
\end{app-lemma}

\begin{proof}
Let $X\sim \D(\W, \Theta)$, and assume that $X\in [k]^n$. For any $i\in[n]$, any $a\in [k]$, and any $x\in [k]^{n-1}$, we have
\begin{align}
\Pr[X_i = a | X_{-i} =x] & = \frac{\exp(\sum_{j\neq i } W_{ij}(a, x_j) + \theta_{i}(a))}{\sum_{b\in [k]} \exp(\sum_{j\neq i } W_{ij}(b, x_j) + \theta_{i}(b))} \nonumber\\
& = \frac{1}{\sum_{b\in[k]}\exp(\sum_{j\neq i } (W_{ij}(b, x_j)-W_{ij}(a, x_j)) + \theta_{i}(b)-\theta_i(a))} \nonumber\\
& \overset{(a)}{\ge} \frac{1}{k\cdot \exp(2\lambda(\W, \Theta))} = e^{-2\lambda(\W, \Theta)}/k,
\end{align}
where (a) follows from the definition of model width. The lemma then follows (Ising model corresponds to the special case of $k=2$).
\end{proof}

\section{Proof of Lemma~\ref{infty_norm} and Lemma~\ref{pairwise_infty_norm}}\label{sec:proof_infty_norm}
The proof relies on the following basic property of the sigmoid function (see Claim 4.2 of~\citep{KM17}):
\begin{equation}
|\sigma(a)-\sigma(b)| \ge e^{-|a|-3} \cdot \min(1, |a-b|), \quad \forall a, b\in\R. 
\label{sigma_diff}
\end{equation}

We first prove Lemma~\ref{infty_norm} (which is restated below).
\begin{app-lemma}
Let $\D$ be a $\delta$-unbiased distribution on $\{-1,1\}^n$. Suppose that for two vectors $u,w\in\R^n$ and $\theta', \theta''\in\R$,
$\E_{X\sim\D}[(\sigma(\inprod{w,X}+\theta') - \sigma(\inprod{u,X}+\theta''))^2] \le \eps$, where $\eps<\delta e^{-2\norm{w}_1-2|\theta'|-6}$. Then $\norm{w-u}_{\infty} \le O(1)\cdot e^{\norm{w}_1+|\theta'|}\cdot \sqrt{\eps/\delta}$.
\end{app-lemma}

\begin{proof}
For any $i\in[n]$, any $X\in\{-1,1\}^n$, let $X_i\in \{-1,1\}$ be the $i$-th variable and $X_{-i}\in\{-1,1\}^{n-1}$ be the $[n]\backslash\{i\}$ variables. Let $X^{i, +}\in \{-1,1\}^n$ (respectively $X^{i, -}$) be the vector obtained from $X$ by setting $X_i=1$ (respectively $X_i=-1$). Then we have 
\begin{align}
\eps & \ge \E_{X\sim\D}[(\sigma(\inprod{w,X}+\theta') - \sigma(\inprod{u,X}+\theta''))^2] \nonumber \\
&= \E_{X_{-i}}\left[ \E_{X_{i}| X_{-i}}(\sigma(\inprod{w,X}+\theta') - \sigma(\inprod{u,X}+\theta''))^2\right] \nonumber\\
&= \E_{X_{-i}}[(\sigma(\inprod{w,X^{i,+}}+\theta') - \sigma(\inprod{u,X^{i,+}}+\theta''))^2\cdot \Pr[X_i=1|X_{-i}] \nonumber\\
& \quad \quad\quad +(\sigma(\inprod{w,X^{i,-}}+\theta') - \sigma(\inprod{u,X^{i,-}}+\theta''))^2\cdot \Pr[X_i=-1|X_{-i}] ]\nonumber \\
&\overset{(a)}{\ge} \delta\cdot \E_{X_{-i}} [(\sigma(\inprod{w,X^{i,+}}+\theta') - \sigma(\inprod{u,X^{i,+}}+\theta''))^2\nonumber\\
& \quad \quad\quad +(\sigma(\inprod{w,X^{i,-}}+\theta') - \sigma(\inprod{u,X^{i,-}}+\theta''))^2]\nonumber\\
&\overset{(b)}{\ge} \delta e^{-2\norm{w}_1-2|\theta'|-6}\cdot\E_{X_{-i}}[ \min(1, ((\inprod{w,X^{i,+}}+\theta') - (\inprod{u,X^{i,+}}+\theta''))^2)\nonumber \\
& \quad \quad\quad\quad \quad\quad\quad\quad\quad\quad + \min(1, ((\inprod{w,X^{i,-}}+\theta')-(\inprod{u,X^{i,-}}+\theta''))^2)]\nonumber\\
& \overset{(c)}{\ge}  \delta e^{-2\norm{w}_1-2|\theta'|-6}\cdot\E_{X_{-i}}  \min(1, (2w_i - 2u_i)^2/2) \nonumber\\
& \overset{(d)}{=} \delta e^{-2\norm{w}_1-2|\theta'|-6}\cdot\min(1, 2(w_i - u_i)^2).  \label{eps_lower}
\end{align} 
Here (a) follows from the fact that $\D$ is a $\delta$-unbiased distribution, which implies that $\Pr[X_i=1|X_{-i}] \ge \delta$ and $\Pr[X_i=-1|X_{-i}] \ge \delta$. Inequality (b) is obtained by substituting (\ref{sigma_diff}). Inequality (c) uses the following fact
\begin{equation}
\min(1, a^2)+\min(1, b^2) \ge \min (1, (a-b)^2/2), \forall a, b\in\R. \label{ab_ineq}
\end{equation}
To see why (\ref{ab_ineq}) holds, note that if both $|a|, |b|\le1$, then (\ref{ab_ineq}) is true since $a^2+b^2 \ge (a-b)^2/2$.  Otherwise, (\ref{ab_ineq}) is true because the left-hand side is at least 1 while the right-hand side is at most 1. The last equality (d) follows from that $X_{-i}$ is independent of $\min(1, 2(w_i - u_i)^2)$. 

Since $\eps<\delta e^{-2\norm{w}_1-2|\theta'|-6}$, (\ref{eps_lower}) implies that $|w_i - u_i| \le O(1)\cdot e^{\norm{w}_1+|\theta'|}\cdot \sqrt{\eps/\delta}$. Because (\ref{eps_lower}) holds for any $i\in[n]$, we have that $\norm{w-u}_{\infty} \le O(1)\cdot e^{\norm{w}_1+|\theta'|}\cdot \sqrt{\eps/\delta}$.
\end{proof}

We now prove Lemma~\ref{pairwise_infty_norm} (which is restated below).
\begin{app-lemma}
Let $\D$ be a $\delta$-unbiased distribution on $[k]^n$. For $X\sim\D$, let $\tilde{X}\in\{0,1\}^{n\times k}$ be the one-hot encoded $X$. Let $u, w\in\R^{n\times k}$ be two matrices satisfying $\sum_{j} u(i, j)=0$ and $\sum_{j} w(i, j)=0$ for $i\in[n]$. Suppose that for some $u, w$ and $\theta', \theta''\in\R$, we have $\E_{X\sim\D}[(\sigma(\inprod{w,\tilde{X}}+\theta') - \sigma(\inprod{u,\tilde{X}}+\theta''))^2] \le \eps$, where $\eps<\delta e^{-2\norm{w}_{\infty,1}-2|\theta'|-6}$. Then $\norm{w-u}_{\infty} \le O(1)\cdot e^{\norm{w}_{\infty,1}+|\theta'|}\cdot \sqrt{\eps/\delta}$.
\end{app-lemma}

\begin{proof}
Fix an $i\in[n]$ and $a\neq b\in[k]$. Let $X^{i, a}\in [k]^n$ (respectively $X^{i, b}$) be the vector obtained from $X$ by setting $X_i=a$ (respectively $X_i=b$). Let $\tilde{X}^{i,a}\in \{0,1\}^{n\times k}$ be the one-hot encoding of $X^{i, a}\in [k]^n$. Then we have
\begin{align}
\eps & \ge \E_{X\sim\D}[(\sigma(\inprod{w,\tilde{X}}+\theta') - \sigma(\inprod{u,\tilde{X}}+\theta''))^2] \nonumber \\
 & = \E_{X_{-i}}\left[ \E_{X_{i}| X_{-i}} (\sigma(\inprod{w,\tilde{X}}+\theta') - \sigma(\inprod{u,\tilde{X}}+\theta''))^2\right] \nonumber\\
 & \ge \E_{X_{-i}}[ (\sigma(\inprod{w,\tilde{X}^{i,a}}+\theta') - \sigma(\inprod{u,\tilde{X}^{i,a}}+\theta''))^2\cdot \Pr[X_i=a|X_{-i}]\nonumber\\
 &\quad\quad\quad + (\sigma(\inprod{w,\tilde{X}^{i,b}}+\theta') - \sigma(\inprod{u,\tilde{X}^{i,b}}+\theta''))^2\cdot \Pr[X_i=b|X_{-i}] ]\nonumber\\
 & \overset{(a)}{\ge} \delta e^{-2\norm{w}_{\infty,1}-2|\theta'|-6} \cdot \E_{X_{-i}}[ \min(1, ((\inprod{w,\tilde{X}^{i,a}}+\theta') - (\inprod{u,\tilde{X}^{i,a}}+\theta''))^2) \nonumber\\
 &\quad\quad\quad\quad\quad\quad\quad\quad\quad\quad\quad + \min(1, ((\inprod{w,\tilde{X}^{i,b}}+\theta') - (\inprod{u,\tilde{X}^{i,b}}+\theta''))^2)]\nonumber\\
 & \overset{(b)}{\ge}  \delta e^{-2\norm{w}_{\infty,1}-2|\theta'|-6} \cdot \E_{X_{-i}} \min(1, ((w(i,a)-w(i,b))-(u(i,a)-u(i,b)))^2/2) \nonumber\\
 & = \delta e^{-2\norm{w}_{\infty,1}-2|\theta'|-6} \min(1, ((w(i,a)-w(i,b))-(u(i,a)-u(i,b)))^2/2) \label{pairwise_eps_lower}
\end{align}
Here (a) follows from that $\D$ is a $\delta$-unbiased distribution and (\ref{sigma_diff}). Inequality (b) follows from (\ref{ab_ineq}). Because $\eps<\delta e^{-2\norm{w}_{\infty,1}-2|\theta'|-6}$, (\ref{pairwise_eps_lower}) implies that
\begin{align}
(w(i,a)-w(i,b))-(u(i,a)-u(i,b)) & \le O(1)\cdot e^{\norm{w}_{\infty,1}+|\theta'|}\cdot\sqrt{\eps/\delta}. \label{wu_diff}\\
(u(i,a)-u(i,b))-(w(i,a)-w(i,b)) & \le O(1)\cdot e^{\norm{w}_{\infty,1}+|\theta'|}\cdot\sqrt{\eps/\delta}. \label{uw_diff}
\end{align}
Since (\ref{wu_diff}) and (\ref{uw_diff}) hold for any $a\neq b\in [k]$, we can sum over $b\in[k]$ and use the fact that $\sum_{j} u(i, j)=0$ and $\sum_{j} w(i, j)=0$ to get
\begin{align}
w(i,a)- u(i,a) & = \frac{1}{k}\sum_b (w(i,a)-w(i,b))-(u(i,a)-u(i,b)) \le O(1)\cdot e^{\norm{w}_{\infty,1}+|\theta'|}\cdot\sqrt{\eps/\delta}.\nonumber\\
u(i,a)- w(i,a) & = \frac{1}{k}\sum_b (u(i,a)-u(i,b))-(w(i,a)-w(i,b)) \le O(1)\cdot e^{\norm{w}_{\infty,1}+|\theta'|}\cdot\sqrt{\eps/\delta}.\nonumber
\end{align}
Therefore, we have $|w(i,a) - u(i, a)|\le O(1)\cdot e^{\norm{w}_{\infty,1}+|\theta'|}\cdot\sqrt{\eps/\delta}$, for any $i\in[n]$ and $a\in [k]$.
\end{proof}

\section{Proof of Theorem~\ref{ising_theorem}}\label{sec:proof_ising}
We first restate Theorem~\ref{ising_theorem} and then give the proof.
\begin{app-theorem}
Let $\D(A, \theta)$ be an unknown $n$-variable Ising model distribution with dependency graph $G$. Suppose that the $\D(A, \theta)$ has width $\lambda(A, \theta)\le \lambda$. Given $\rho\in(0,1)$ and $\eps>0$, if the number of i.i.d. samples satisfies $N = O(\lambda^2\exp(12\lambda)\ln(n/\rho)/\eps^4)$,
then with probability at least $1-\rho$, Algorithm~\ref{ising_learning_algo} produces $\hat{A}$ that satisfies
\begin{equation}
\max_{i,j\in[n]} |A_{ij} - \hat{A}_{ij}| \le \eps. 
\end{equation}
\end{app-theorem}

\begin{proof}
For ease of notation, we consider the $n$-th variable. The goal is to prove that Algorithm~\ref{ising_learning_algo} is able to recover the $n$-th row of the true weight matrix $A$. Specifically, we will show that if the number samples satisfies $N = O(\lambda^2\exp(O(\lambda))\ln(n/\rho)/\eps^4)$, then with probability as least $1-\rho/n$,
\begin{equation}
\max_{j\in[n]} |A_{nj} - \hat{A}_{nj}| \le \eps. \label{n_var}
\end{equation}
We then use a union bound to conclude that with probability as least $1-\rho$, $\max_{i,j\in[n]} |A_{ij} - \hat{A}_{ij}| \le \eps$.

Let $Z\sim\D(A, \theta)$, $X=[Z_{-n}, 1] = [Z_1, Z_2, \cdots, Z_{n-1},1]\in\{-1,1\}^n$, and $Y = Z_n \in\{-1,1\}$. By Fact~\ref{logistic_dist}, $\Pr[Y=1|X= x] = \sigma(\inprod{w^*, x})$, where $w^*= 2[A_{n1},\cdots, A_{n(n-1)}, \theta_n]$. Further, $\norm{w^*}_1\le 2\lambda$. Let $\hat{w}$ be the solution of the $\ell_1$-constrained logistic regression problem defined in (\ref{ERM}). 

By Lemma~\ref{KL_lemma}, if the number of samples satisfies $N=O(\lambda^2\ln(n/\rho)/\gamma^2)$, then with probability at least $1-\rho/n$, we have
\begin{equation}
\E_{X}[(\sigma(\inprod{w^*,X}) - \sigma(\inprod{\hat{w},X}))^2] \le \gamma. \label{close_l2}
\end{equation}

By Lemma~\ref{delta_unbias}, $Z_{-n}\in\{-1,1\}^{n-1}$ is $\delta$-unbiased (Definition~\ref{unbiased}) with $\delta = e^{-2\lambda}/2$. By Lemma~\ref{infty_norm}, if $\gamma<C_1\delta e^{-4\lambda} $ for some constant $C_1>0$, then (\ref{close_l2}) implies that
\begin{equation}
\norm{w^*_{1:(n-1)} - \hat{w}_{1:(n-1)}}_{\infty} \le O(1)\cdot e^{2\lambda}\cdot \sqrt{\gamma/\delta}. \label{close_infty}
\end{equation}
Note that $w^*_{1:(n-1)} = 2[A_{n1},\cdots, A_{n(n-1)}]$ and $\hat{w}_{1:(n-1)} = 2[\hat{A}_{n1},\cdots, \hat{A}_{n(n-1)}]$.  Let $\gamma= C_2 \delta e^{-4\lambda}\eps^2$ for some constant $C_2>0$ and $\eps\in (0,1)$, (\ref{close_infty}) then implies that 
\begin{equation}
\max_{j\in[n]} |A_{nj} - \hat{A}_{nj}| \le \eps.
\end{equation}
The number of samples needed is $N = O(\lambda^2\ln(n/\rho)/\gamma^2) =O(\lambda^2e^{12\lambda}\ln(n/\rho)/\eps^4)$.

We have proved that (\ref{n_var}) holds with probability at least $1-\rho/n$. Using a union bound over all $n$ variables gives that with probability as least $1-\rho$, $\max_{i,j\in[n]} |A_{ij} - \hat{A}_{ij}| \le \eps$.
\end{proof}

\section{Proof of Theorem~\ref{pairwise_theorem}}\label{sec:proof_pairwise}
The following lemma will be used in the proof.
\begin{lemma}
    Let $Z\sim \D$, where $\D$ is a $\delta$-unbiased distribution on $[k]^n$. Given $\alpha\neq \beta \in [k]$, conditioned on $Z_n \in \{\alpha, \beta\}$, $Z_{-n} \in [k]^{n-1}$ is also $\delta$-unbiased.
    \label{condition_unbiased}
\end{lemma}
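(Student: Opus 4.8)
The plan is to unwind the two layers of conditioning directly from the definition of $\delta$-unbiasedness (Definition~\ref{unbiased}) and reduce the claim to a single inequality of the form $\Pr[Z_i = a \mid Z_{-i} = y] \ge \delta$, which holds by hypothesis because $\D$ is $\delta$-unbiased on $[k]^n$. Write $\D'$ for the distribution of $Z_{-n}$ conditioned on $Z_n \in \{\alpha, \beta\}$; this is a distribution on $[k]^{n-1}$, and our goal is to verify that it satisfies the $\delta$-unbiasedness condition.

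First I would fix an arbitrary coordinate $i \in [n-1]$, an arbitrary value $a \in [k]$, and an arbitrary assignment $x \in [k]^{n-2}$ to the coordinates of $Z_{-n}$ other than $i$. Let $W$ denote the tuple of those $n-2$ coordinates, so that conditioning on $W = x$ together with any fixed value of $Z_n$ pins down every coordinate of $Z$ except $Z_i$. By Definition~\ref{unbiased}, it suffices to show
\[
\Pr[Z_i = a \mid W = x, \; Z_n \in \{\alpha, \beta\}] \ge \delta .
\]

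Next I would expand this conditional probability using the definition of conditional probability and split the event $Z_n \in \{\alpha, \beta\}$ into its two disjoint cases. Writing the numerator as $\Pr[Z_i = a, Z_n = \alpha \mid W=x] + \Pr[Z_i = a, Z_n = \beta \mid W=x]$ and factoring each term as, e.g., $\Pr[Z_n = \alpha \mid W=x]\cdot \Pr[Z_i = a \mid Z_n = \alpha, W=x]$, the second factor is precisely $\Pr[Z_i = a \mid Z_{-i} = y]$ for the full assignment $y$ that sets coordinate $n$ to $\alpha$ and the remaining $n-2$ coordinates to $x$. This is at least $\delta$ because $\D$ is $\delta$-unbiased, and the same bound applies to the $\beta$ term. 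Substituting these lower bounds, the numerator is at least $\delta\cdot(\Pr[Z_n = \alpha \mid W=x] + \Pr[Z_n = \beta \mid W=x]) = \delta\cdot \Pr[Z_n \in \{\alpha,\beta\} \mid W=x]$, which is exactly $\delta$ times the denominator; dividing yields the claim.

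I do not anticipate any substantive obstacle: the only point needing care is the bookkeeping of indices, namely checking that conditioning on $W = x$ together with a fixed value of $Z_n$ really does specify $Z_{-i}$ completely, so that the $\delta$-unbiasedness hypothesis applies verbatim to coordinate $i$. Once that identification is made the argument is a one-line application of the hypothesis to each of the two terms. I would also note in passing that $\Pr[Z_n \in \{\alpha,\beta\} \mid W=x] > 0$ (indeed $\delta$-unbiasedness forces every configuration to have positive conditional probability), so the conditional distribution $\D'$ is well defined.
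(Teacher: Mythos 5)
Your proposal is correct and follows essentially the same route as the paper's proof: fix $i$, $a$, and the assignment $x$ to the remaining $n-2$ coordinates, split the event $Z_n\in\{\alpha,\beta\}$ into its two cases, and apply $\delta$-unbiasedness of $\D$ to each term. The only cosmetic difference is that the paper bounds the resulting ratio via the mediant inequality $(a+b)/(c+d)\ge\min(a/c,\,b/d)$, whereas you write the conditional probability as a convex combination of $\Pr[Z_i=a\mid Z_n=\alpha, W=x]$ and $\Pr[Z_i=a\mid Z_n=\beta, W=x]$ and bound each by $\delta$ --- an equivalent step (and your parenthetical check that the conditioning event has positive probability is a small point the paper leaves implicit).
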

\begin{proof}
For any $i\in [n-1]$, $a\in[k]$, and $x\in [k]^{n-2}$, we have
\begin{align}
    &\Pr[Z_i = a | Z_{[n]\backslash\{i,n\}}=x, Z_n \in \{\alpha, \beta\}] \nonumber\\
    &= \frac{\Pr[Z_i = a, Z_{[n]\backslash\{i,n\}}=x, Z_n = \alpha] + \Pr[Z_i = a, Z_{[n]\backslash\{i,n\}} = x, Z_n = \beta]}{\Pr[Z_{[n]\backslash\{i,n\}}=x, Z_n = \alpha]+ \Pr[Z_{[n]\backslash\{i,n\}}=x, Z_n = \beta]} \nonumber\\
    &\overset{(a)}{\ge} \min (\frac{\Pr[Z_i = a, Z_{[n]\backslash\{i,n\}}=x, Z_n = \alpha]}{\Pr[Z_{[n]\backslash\{i,n\}}=x, Z_n = \alpha]}, \frac{\Pr[Z_i = a, Z_{[n]\backslash\{i,n\}}=x, Z_n = \beta]}{\Pr[Z_{[n]\backslash\{i,n\}}=x, Z_n = \beta]}) \nonumber\\
    & = \min (\Pr[Z_i = a|Z_{[n]\backslash\{i,n\}}=x, Z_n = \alpha], \Pr[Z_i = a|Z_{[n]\backslash\{i,n\}}=x, Z_n = \beta])\nonumber\\
    & \overset{(b)}{\ge} \delta.
\end{align}
where (a) follows from the fact that $(a+b)/(c+d)\ge \min(a/c, b/d)$ for $a,b,c,d>0$, (b) follows from the fact that $Z$ is $\delta$-unbiased.
\end{proof}

Now we are ready to prove Theorem~\ref{pairwise_theorem}, which is restated below.
\begin{app-theorem}
Let $\D(\W, \Theta)$ be an $n$-variable pairwise graphical model distribution with width $\lambda(\W, \Theta)\le \lambda$ and alphabet size $k$. Given $\rho\in(0,1)$ and $\eps>0$, if the number of i.i.d. samples satisfies $N = O(\lambda^2k^4\exp(14\lambda)\ln(nk/\rho)/\eps^4)$, then with probability at least $1-\rho$, Algorithm~\ref{pairwise_learning_algo} produces $\hat{W}_{ij}\in \R^{k\times k}$ that satisfies
\begin{equation}
|W_{ij}(a,b) - \hat{W}_{ij}(a, b)| \le \eps, \quad \forall i \neq j \in[n], \; \forall a, b\in[k].
\end{equation}
\end{app-theorem}

\begin{proof}
To ease notation, let us consider the $n$-th variable (i.e., set $i=n$ inside the first ``for'' loop of Algorithm~\ref{pairwise_learning_algo}). The proof directly applies to other variables. We will prove the following result: if the number of samples $N = O(\lambda^2k^4\exp(14\lambda)\ln(nk/\rho)/\eps^4)$, then with probability at least $1-\rho/n$, the $U^{\alpha, \beta}\in \R^{n\times k}$ matrices produced by Algorithm~\ref{pairwise_learning_algo} satisfies
\begin{equation}
|W_{nj}(\alpha, b) - W_{nj}(\beta, b) - U^{\alpha, \beta}(j, b)| \le \eps, \quad \forall j\in[n-1],\; \forall \alpha, \beta, b\in[k]. \label{u_diff}
\end{equation}
Suppose that (\ref{u_diff}) holds, summing over $\beta\in [k]$ and using the fact that $\sum_{\beta} W_{nj}(\beta,b) = 0$ gives
\begin{equation}
|W_{nj}(\alpha, b)-\frac{1}{k}\sum_{\beta\in [k]}U^{\alpha, \beta}(j, b)| \le \eps, \quad \forall j\in[n-1],\; \forall \alpha, b\in [k].
\end{equation}
Theorem~\ref{pairwise_theorem} then follows by taking a union bound over the $n$ variables.

The only thing left is to prove (\ref{u_diff}). Now fix a pair of $\alpha, \beta \in [k]$, let $N^{\alpha, \beta}$ be the number of samples such that the $n$-th variable is either $\alpha$ or $\beta$. By Lemma~\ref{pairwise_KL_lemma} and Fact~\ref{pairwise_logistic_dist},  if $N^{\alpha, \beta} = O(\lambda^2 k \ln(n/\rho')/ \gamma^2)$, then with probability at least $1-\rho'$, the minimizer of the $\ell_{2,1}$ constrained logistic regression $w^{\alpha, \beta} \in \R^{n\times k}$ satisfies 
\begin{equation}
\E_{X}[(\sigma(\inprod{w^*,X}) - \sigma(\inprod{w^{\alpha, \beta},X}))^2] \le \gamma. \label{w_sigma_diff}
\end{equation}
Recall that $X\in \{0,1\}^{n\times k}$ is the one-hot encoding of the vector $[Z_{-n},1]\in [k]^n$, where $Z\sim\D(\W, \Theta)$ and $Z_n \in \{\alpha, \beta\}$. Besides, $w^*\in\R^{n\times k}$ satisfies
\begin{equation}
w^*(j,:) = W_{nj}(\alpha, :) - W_{nj}(\beta, :), \; \forall j\in[n-1];  \quad
w^*(n,:) = [\theta_n(\alpha)-\theta_n(\beta), 0,\cdots,0]. 
\end{equation} 
Let $U^{\alpha, \beta}\in \R^{n\times k}$ be formed by centering the first $n-1$ rows of $w^{\alpha, \beta}$. Since each row of $X$ is a standard basis vector (i.e., all 0's except a single 1),  $\inprod{U^{\alpha, \beta}, X} = \inprod{w^{\alpha, \beta}, X}$. Hence, (\ref{w_sigma_diff}) implies 
\begin{equation}
\E_{X}[(\sigma(\inprod{w^*,X}) - \sigma(\inprod{U^{\alpha, \beta},X}))^2] \le \gamma. \label{u_sigma_diff}
\end{equation}

By Lemma~\ref{delta_unbias}, we know that $Z\sim\D(\W, \Theta)$ is $\delta$-unbiased with $\delta=e^{-2\lambda}/k$. By Lemma~\ref{condition_unbiased}, conditioned on $Z_n \in \{\alpha, \beta\}$, $Z_{-n}$ is also $\delta$-unbiased. Hence, the condition of Lemma~\ref{pairwise_infty_norm} holds.
Applying Lemma~\ref{pairwise_infty_norm} to (\ref{u_sigma_diff}), we get that if $N^{\alpha, \beta} = O(\lambda^2k^3 \exp(12\lambda) \ln(n/\rho'))/\eps^4)$, the following holds with probability at least $1-\rho'$:
\begin{equation}
|W_{nj}(\alpha, b) - W_{nj}(\beta, b)-U^{\alpha, \beta}(j, b)| \le \eps, \;\forall j\in[n-1], \; \forall b\in[k]. \label{u_diff_pair}
\end{equation}

So far we have proved that (\ref{u_diff}) holds for a fixed $(\alpha, \beta)$ pair. This requires that $N^{\alpha, \beta}= O(\lambda^2k^3 \exp(12\lambda) \ln(n/\rho'))/\eps^4)$. Recall that $N^{\alpha, \beta}$ is the number of samples that the $n$-th variable takes $\alpha$ or $\beta$. We next derive the number of total samples needed in order to have $N^{\alpha, \beta}$ samples for a given $(\alpha, \beta)$ pair. Since $\D(\W, \Theta)$ is $\delta$-unbiased with $\delta=e^{-2\lambda(\W, \Theta)}/k$, for $Z\sim\D(\W, \Theta)$, we have $\Pr[Z_{n}\in \{\alpha, \beta\}| Z_{-n}] \ge 2\delta$, and hence $\Pr[Z_{n}\in \{\alpha, \beta\}]\ge 2\delta$. By the Chernoff bound, if the total number of samples satisfies $N =O(N^{\alpha, \beta}/\delta+\log(1/\rho'')/\delta)$, then with probability at least $1-\rho''$, we have $N^{\alpha, \beta}$ samples for a given $(\alpha, \beta)$ pair.

To ensure that (\ref{u_diff_pair}) holds for all $(\alpha, \beta)$ pairs with probability at least $1-\rho/n$, we can set $\rho' = \rho/(nk^2)$ and $\rho'' = \rho/(nk^2)$ and take a union bound over all $(\alpha, \beta)$ pairs. The total number of samples required is $N = O(\lambda^2k^4\exp(14\lambda)\ln(nk/\rho)/\eps^4)$.

We have shown that (\ref{u_diff}) holds for the $n$-th variable with probability at least $1-\rho/n$. By the discussion at the beginning of the proof, Theorem~\ref{pairwise_theorem} then follows by a union bound over the $n$ variables.
\end{proof}

\section{Mirror descent algorithms for constrained logistic regression} \label{sec:mirror}

Algorithm~\ref{l1_mirror_descent} gives a mirror descent algorithm for the following $\ell_1$-constrained logistic regression:
\begin{equation}
\min_{w\in\R^n} \frac{1}{N} \sum_{i=1}^N \ln(1+e^{-y^i\inprod{w,x^i}}) \quad\text{\quad s.t. }\norm{w}_1\le W_1. \label{l1_ERM}
\end{equation}
We use the doubling trick to expand the dimension and re-scale the samples (Step 2 in Algorithm~\ref{l1_mirror_descent}). Now the original problem becomes a logistic regression problem over a probability simplex: $\Delta_{2n+1}=\{w\in\R^{2n+1}: \sum_{i=1}^{2n+1} w_i=1, w_i\ge0, \forall i\in[2n+1]\}$.
\begin{equation}
\min_{w\in\Delta_{2n+1}} 
\frac{1}{N} \sum_{i=1}^N -\hat{y}^i\ln(\sigma(\inprod{w,\hat{x}^i})) -(1-\hat{y}^i)\ln(1-\sigma(\inprod{w,\hat{x}^i})), 
\end{equation}
where $(\hat{x}^i, \hat{y}^i)\in\R^{2n+1} \times \{0,1\}$. In Step 4-11 of Algorithm~\ref{l1_mirror_descent}, we follow the standard simplex setup for mirror descent algorithm (see Section 5.3.3.2 of~\citep{BN13}). Specifically, the negative entropy is used as the distance generating function (aka the mirror map). The projection step (Step 9) can be done by a simple $\ell_1$ normalization operation. After that, we transform the solution back to the original space (Step 12).

\begin{algorithm2e}
\caption{Mirror descent algorithm for $\ell_1$-constrained logistic regression}
\label{l1_mirror_descent}
\DontPrintSemicolon
\LinesNumbered
\KwIn{$\{(x^i, y^i)\}_{i=1}^N$ where $x^i\in\{-1,1\}^n$, $y^i\in\{-1,1\}$; constraint on the $\ell_1$ norm $W_1\in\R_{+}$; number of iterations $T$.}
\KwOut{$\bar{w}\in \R^n$.}
\For{sample $i\leftarrow 1$ \KwTo $N$}{
 \tcp{Form samples $(\hat{x}^i, \hat{y}^i)\in\R^{2n+1} \times \{0,1\}$.}
 $\hat{x}^i \leftarrow [x^i, -x^i, 0]\cdot W_1$, \quad $\hat{y}^i \leftarrow (y^i+1)/2$\;
}
\tcp{Initialize $w$ as the uniform distribution.}
$w^1 \leftarrow [\frac{1}{2n+1},\frac{1}{2n+1},\cdots,\frac{1}{2n+1}]\in\R^{2n+1}$\;
$\gamma \leftarrow \frac{1}{2W_1}\sqrt{\frac{2\ln(2n+1)}{T}}$ \tcp*{Set the step size.}
\For{iteration $t\leftarrow 1$ \KwTo $T$}{
 $g^t \leftarrow \frac{1}{N}\sum_{i=1}^N (\sigma(\inprod{w^t, \hat{x}^i})-\hat{y}^i)\hat{x}^i$ \tcp*{Compute the gradient.}
 $w^{t+1}_i \leftarrow w_i^t \exp(-\gamma g^t_i) $, for $i\in[2n+1]$ \tcp*{Coordinate-wise update.}
 $w^{t+1} \leftarrow w^{t+1}/\norm{w^{t+1}}_1$ \tcp*{Projection step.}
}
$\bar{w} \leftarrow \sum_{t=1}^{T} w^t/T$ \tcp*{Aggregate the updates.}
\tcp{Transform $\bar{w}$ back to $\R^n$ and the actual scale.}
$\bar{w} \leftarrow (\bar{w}_{1:n} -  \bar{w}_{(n+1):2n})\cdot W_1$ 
\end{algorithm2e}

Algorithm~\ref{l21_mirror_descent} gives a mirror descent algorithm for the $\ell_{2,1}$-constrained logistic regression:
\begin{equation}
\min_{w\in\R^{n \times k}} \frac{1}{N} \sum_{i=1}^N \ln(1+e^{-y^i\inprod{w,x^i}}) \quad\text{\quad s.t. }\norm{w}_{2,1}\le W_{2,1}. \label{l21_ERM}
\end{equation}
For simplicity, we assume that $n\ge 3$\footnote{For $n\le 2$, we need to switch to a different mirror map, see Section 5.3.3.3 of~\citep{BN13} for more details.}. We then follow Section 5.3.3.3 of~\citep{BN13} to use the following function as the mirror map $\Phi: \R^{n\times k} \to \R$:
\begin{equation}
\Phi(w) = \frac{e\ln(n)}{p} \norm{w}_{2, p}^p,\quad p = 1+1/\ln(n). \label{Phi_def}
\end{equation}
The update step (Step 8) can be computed efficiently in $O(nk)$ time, see the discussion in Section 5.3.3.3 of~\citep{BN13} for more details.

\begin{algorithm2e}
\caption{Mirror descent algorithm for $\ell_{2,1}$-constrained logistic regression}
\label{l21_mirror_descent}
\DontPrintSemicolon
\LinesNumbered
\KwIn{$\{(x^i, y^i)\}_{i=1}^N$ where $x^i\in\{0,1\}^{n\times k}$, $y^i\in\{-1,1\}$; constraint on the $\ell_{2,1}$ norm $W_{2,1}\in\R_{+}$; number of iterations $T$.}
\KwOut{$\bar{w}\in \R^{n\times k}$.}
\For{sample $i\leftarrow 1$ \KwTo $N$}{
 \tcp{Form samples $(\hat{x}^i, \hat{y}^i)\in\R^{n \times k} \times \{0,1\}$.}
 $\hat{x}^i \leftarrow x^i \cdot W_{2,1}$, \quad $\hat{y}^i \leftarrow (y^i+1)/2$
}
\tcp{Initialize $w$ as a constant matrix.}
$w^1 \leftarrow [\frac{1}{n\sqrt{k}},\frac{1}{n\sqrt{k}},\cdots,\frac{1}{n\sqrt{k}}]  \in\R^{n \times k}$\;
$\gamma \leftarrow \frac{1}{2W_{2,1}}\sqrt{\frac{e\ln(n)}{T}}$ \tcp*{Set the step size.}
\For{iteration $t\leftarrow 1$ \KwTo $T$}{
 $g^t \leftarrow \frac{1}{N}\sum_{i=1}^N (\sigma(\inprod{w^t, \hat{x}^i})-\hat{y}^i)\hat{x}^i$ \tcp*{Compute the gradient.}
 $w^{t+1} \leftarrow \arg\min_{\norm{w}_{2,1}\le 1} \Phi(w) - \inprod{\nabla \Phi(w^{t}) - \gamma g^t, w}$\tcp*{$\Phi(w)$ is in (\ref{Phi_def}).}
}
$\bar{w} \leftarrow (\sum_{t=1}^{T} w^t/T)\cdot W_{21}$ \tcp*{Aggregate the updates.}
\end{algorithm2e}

\section{Proof of Theorem~\ref{ising_runtime} and Theorem~\ref{pairwise_runtime}}\label{sec:runtime_proof}
\begin{lemma}
Let $\hat{\Loss}(w)=\frac{1}{N} \sum_{i=1}^N \ln(1+e^{-y^i\inprod{w,x^i}})$ be the empirical loss. Let $\hat{w}$ be a minimizer of the ERM defined in (\ref{l1_ERM}).
The output $\bar{w}$ of Algorithm~\ref{l1_mirror_descent} satisfies
\begin{equation}
\hat{\Loss}(\bar{w}) - \hat{\Loss}(\hat{w}) \le 2W_1\sqrt{\frac{2\ln(2n+1)}{T}}. \label{l1_conv_T}
\end{equation}
Similarly, let $\hat{w}$ be a minimizer of the ERM defined in (\ref{l21_ERM}). Then the output $\bar{w}$ of Algorithm~\ref{l21_mirror_descent} satisfies
\begin{equation}
\hat{\Loss}(\bar{w}) - \hat{\Loss}(\hat{w}) \le O(1)\cdot W_{2,1}\sqrt{\frac{\ln(n)}{T}}. \label{l21_conv_T}
\end{equation}
\label{convergence}
\end{lemma}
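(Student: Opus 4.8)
The plan is to invoke the standard fixed-step mirror descent guarantee and then specialize it to the two mirror maps. Recall the master inequality (see, e.g., \citep{Bub15, BN13}): if $\hat{\Loss}$ is convex and $G$-Lipschitz with respect to a norm $\norm{\cdot}$ (equivalently $\norm{\nabla\hat{\Loss}(w)}_* \le G$ in the dual norm on the feasible set), and the mirror map $\Phi$ is $\alpha$-strongly convex with respect to $\norm{\cdot}$, then after $T$ iterations with step size $\gamma$ the averaged iterate $\bar{w}$ obeys
\begin{equation*}
\hat{\Loss}(\bar{w}) - \hat{\Loss}(\hat{w}) \le \frac{\Omega}{\gamma T} + \frac{\gamma G^2}{2\alpha},
\end{equation*}
where $\Omega$ upper bounds the Bregman divergence $D_\Phi(\hat{w}, w^1)$ from the initialization to the optimum, and the passage from the average of $\hat{\Loss}(w^t)$ to $\hat{\Loss}(\bar{w})$ is by Jensen. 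The proof therefore reduces to pinning down the triple $(G,\alpha,\Omega)$ in each case and substituting the prescribed $\gamma$.

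For Algorithm~\ref{l1_mirror_descent} the doubling trick reparametrizes the $\ell_1$-ball of radius $W_1$ as the simplex $\Delta_{2n+1}$. Since $x\mapsto[x,-x,0]W_1$ together with $\tilde{w}\mapsto(\tilde{w}_{1:n}-\tilde{w}_{(n+1):2n})W_1$ preserves inner products, and the latter map is affine and hence commutes with averaging, the gap $\hat{\Loss}(\bar{w})-\hat{\Loss}(\hat{w})$ on the original problem equals the simplex objective gap that mirror descent controls; in particular the minimum values coincide. On the simplex the negative-entropy map is $1$-strongly convex with respect to $\norm{\cdot}_1$, and uniform initialization gives $\Omega = D_\Phi(\hat{w}, w^1)\le \ln(2n+1)$. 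The rescaled samples satisfy $\norm{\hat{x}^i}_\infty\le W_1$ and $|\sigma(\cdot)-\hat{y}^i|\le 1$, so $\norm{g^t}_\infty\le W_1$, i.e. $G=W_1$ for $\norm{\cdot}_1$. Substituting $\alpha=1$, $\Omega=\ln(2n+1)$, $G=W_1$, and $\gamma=\frac{1}{2W_1}\sqrt{2\ln(2n+1)/T}$ into the master inequality and simplifying yields a bound below $2W_1\sqrt{2\ln(2n+1)/T}$, which is (\ref{l1_conv_T}).

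For Algorithm~\ref{l21_mirror_descent} the same three ingredients are needed, but now with respect to the group norm $\norm{\cdot}_{2,1}$. Here the rescaled samples obey $\norm{\hat{x}^i}_{2,\infty}\le W_{2,1}$, so $\norm{g^t}_{2,\infty}\le W_{2,1}$, giving $G=W_{2,1}$ in the dual norm $\norm{\cdot}_{2,\infty}$. The nonelementary content is the behavior of the mirror map $\Phi(w)=\frac{e\ln n}{p}\norm{w}_{2,p}^p$ with $p=1+1/\ln n$: by the analysis in Section~5.3.3.3 of \citep{BN13}, this $\Phi$ is strongly convex with respect to $\norm{\cdot}_{2,1}$ on the unit $\ell_{2,1}$-ball and has range with $\Omega/\alpha = O(\ln n)$, which is precisely the property that trades the naive $\sqrt{nk}$-type factor for $\sqrt{\ln n}$. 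Plugging $G=W_{2,1}$ and $\Omega/\alpha=O(\ln n)$ into the master inequality with $\gamma=\frac{1}{2W_{2,1}}\sqrt{e\ln(n)/T}$ gives (\ref{l21_conv_T}).

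The main obstacle is concentrated entirely in the $\ell_{2,1}$ case, namely verifying the strong-convexity modulus and range of the group-norm mirror map so that $\Omega/\alpha=O(\ln n)$; the $\ell_1$ case uses textbook constants for negative entropy on the simplex. Once the $(G,\alpha,\Omega)$ triple is in hand, both bounds follow by the same one-line substitution. I would therefore lean on \citep{BN13} for the group-norm mirror-map estimates rather than re-deriving them, and spend the care instead on checking that the reparametrization and rescaling steps preserve the objective gap and that the gradient bounds hold uniformly over the feasible set so that $G$ is correct.
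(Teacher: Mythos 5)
Your proposal is correct and follows essentially the same route as the paper, which likewise reduces both bounds to the standard mirror descent guarantee (Theorem 4.2 of \citep{Bub15}, Section 5.3.3 of \citep{BN13}) via dual-norm gradient bounds ($\norm{g^t}_{\infty}$ and $\norm{g^t}_{2,\infty}$) together with the entropy and scaled group-norm mirror-map constants; your explicit $(G,\alpha,\Omega)$ bookkeeping and verification of the reparametrization merely fill in details the paper leaves implicit. The only cosmetic difference is that you use the sharper gradient bound $G = W_1$ where the paper states $2W_1$, which still lands within the claimed constant.
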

Lemma~\ref{convergence} follows from the standard convergence result for mirror descent algorithm (see, e.g., Theorem 4.2 of~\citep{Bub15}), and the fact that the gradient $g^t$ in Step 6 of Algorithm~\ref{l1_mirror_descent} satisfies $\norm{g^t}_{\infty}\le 2W_1$ (reps. the gradient $g^t$ in Step 6 of Algorithm~\ref{l21_mirror_descent} satisfies $\norm{g^t}_{\infty}\le 2W_{2,1}$). This implies that the objective function after rescaling the samples is $2W_1$-Lipschitz w.r.t. $\norm{\cdot}_1$ (reps. $2W_{2,1}$-Lipschitz w.r.t. $\norm{\cdot}_{2,1}$).

We are now ready to prove Theorem~\ref{ising_runtime}, which is restated below.
\begin{app-theorem}
In the setup of Theorem~\ref{ising_theorem}, suppose that the $\ell_1$-constrained logistic regression in Algorithm~\ref{ising_learning_algo} is optimized by the mirror descent method (Algorithm~\ref{l1_mirror_descent}) given in Appendix~\ref{sec:mirror}. Given $\rho\in(0,1)$ and $\eps>0$, if the number of mirror descent iterations satisfies $T=O(\lambda^2\exp(12\lambda)\ln(n)/\eps^4$, and the number of i.i.d. samples satisfies $N = O(\lambda^2\exp(12\lambda)\ln(n/\rho)/\eps^4)$, then (\ref{close_A}) still holds with probability at least $1-\rho$. The total run-time of Algorithm~\ref{ising_learning_algo} is $O(TNn^2)$.
\end{app-theorem}

\begin{proof}
We first note that in the proof of Theorem~\ref{ising_theorem}, we only use $\hat{w}$ in order to apply the result from Lemma~\ref{KL_lemma}. In the proof of Lemma~\ref{KL_lemma} (given in Appendix~\ref{sec:proof_kl_lemma}), there is only one place where we use the definition of $\hat{w}$: the inequality (b) in (\ref{hat_bound}). As a result, if we can show that (\ref{hat_bound}) still holds after replacing $\hat{w}$ by $\bar{w}$, i.e.,
\begin{equation}
\Loss(\bar{w}) \le \Loss(w^*) + O(\gamma), \label{new_bound}
\end{equation}
then Lemma~\ref{KL_lemma} would still hold, and so is Theorem~\ref{ising_theorem}.

By Lemma~\ref{convergence}, if the number of iterations satisfies $T = O(W_1^2 \ln(n)/\gamma^2)$, then
\begin{equation}
\hat{\Loss}(\bar{w}) - \hat{\Loss}(\hat{w}) \le \gamma. \label{conv_bound}
\end{equation}
As a result, we have
\begin{equation}
\Loss(\bar{w}) \overset{(a)}{\le} \hat{\Loss}(\bar{w}) +  \gamma
\overset{(b)}{\le} \hat{\Loss}(\hat{w}) +  2\gamma
\overset{(c)}{\le} \hat{\Loss}(w^*) +  2\gamma
\overset{(d)}{\le} \Loss(w^*) + 3\gamma, \label{w_bar_diff}
\end{equation} 
where (a) follows from (\ref{eps_bound}), (b) follows from (\ref{conv_bound}), (c) follows from the fact that $\hat{w}$ is the minimizer of $\hat{\Loss}(w)$, and (d) follows from (\ref{con_bound}). The number of mirror descent iterations needed for (\ref{new_bound}) to hold is $T = O(W_1^2 \ln(n)/\gamma^2)$. In the proof of Theorem~\ref{ising_theorem}, we need to set $\gamma = O(1)\eps^2 \exp(-6\lambda)$ (see the proof following (\ref{close_infty})), so the number of mirror descent iterations needed is $T=O(\lambda^2\exp(12\lambda)\ln(n)/\eps^4)$.

To analyze the runtime of Algorithm~\ref{ising_learning_algo}, note that for {\em each variable} in $[n]$, transforming the samples takes $O(N)$ time, solving the $\ell_1$-constrained logisitic regression via Algorithm~\ref{l1_mirror_descent} takes $O(TNn)$ time, and updating the edge weight estimate takes $O(n)$ time. Forming the graph $\hat{G}$ over $n$ nodes takes $O(n^2)$ time. The total runtime is $O(TNn^2)$.
\end{proof}

The proof of Theorem~\ref{pairwise_runtime} is identical to that of Theorem~\ref{ising_runtime} and is omitted here. The key step is to show that (\ref{new_bound}) holds after replacing $\hat{w}$ by $\bar{w}$. This can be done by using the convergence result in Lemma~\ref{convergence} and applying the same logic in (\ref{w_bar_diff}). The runtime of Algorithm~\ref{pairwise_learning_algo} can be analyzed in the same way as above. The $\ell_{2,1}$-constrained logistic regression dominates the total runtime. It requires $O(TN^{\alpha, \beta}nk)$ time for each pair $(\alpha, \beta)$ and each variable in $[n]$, where $N^{\alpha, \beta}$ is the subset of samples that a given variable takes either $\alpha$ or $\beta$. Since $N\ge kN^{\alpha, \beta}$, the total runtime is $O(TNn^2k^2)$.

\section{More experimental results}\label{sec:more_exp}
We compare our algorithm (Algorithm~\ref{pairwise_learning_algo}) with the Sparsitron algorithm in~\citep{KM17} on a two-dimensional 3-by-3 grid (shown in Figure~\ref{grid_graph}). We experiment three alphabet sizes: $k=2,4,6$. For each value of $k$, we simulate both algorithms 100 runs, and in each run we generate the $W_{ij}$ matrices with entries $\pm 0.2$. To ensure that each row (as well as each column) of $W_{ij}$ is centered (i.e., zero mean), we will randomly choose $W_{ij}$ between two options: as an example of $k=2$, $W_{ij} = [0.2, -0.2; -0.2, 0.2]$ or $W_{ij} = [-0.2, 0.2; 0.2, -0.2]$. The external field is zero. Sampling is done via exactly computing the distribution. The Sparsitron algorithm requires two sets of samples: 1) to learn a set of candidate weights; 2) to select the best candidate. We use $\max\{200, 0.01\cdot N\}$ samples for the second part. We plot the estimation error $\max_{ij} \norm{W_{ij} - \hat{W}_{ij}}_{\infty}$ and the fraction of successful runs (i.e., runs that exactly recover the graph) in Figure~\ref{app_non_binary_comp}. Compared to the Sparsitron algorithm, our algorithm requires fewer samples for successfully recovery. 

\begin{figure}[ht]
\centering
\includegraphics[width=0.8\textwidth]{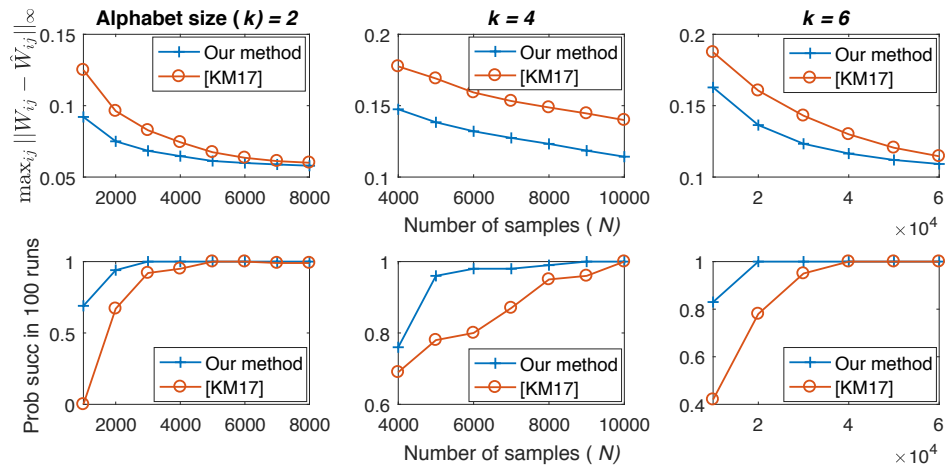}
\caption{Comparison of our algorithm and the Sparsitron algorithm in~\citep{KM17} on a two-dimensional 3-by-3 grid. Top row shows the average of the estimation error $\max_{ij} \norm{W_{ij} - \hat{W}_{ij}}_{\infty}$. Bottom row plots the faction of successful runs (i.e., runs that exactly recover the graph). Each column corresponds to an alphabet size: $k=2,4,6$. Our algorithm needs fewer samples than the Sparsitron algorithm for graph recovery.}
\label{app_non_binary_comp}
\end{figure}

\end{document}